\newtheorem{definition}{Definition}[section]
\newtheorem{theorem}{Theorem}[section]
\newtheorem{proposition}{Proposition}[section]
\newtheorem{lemma}{Lemma}[section]
\title{\LARGE \bf Composable Geometric Motion Policies\\
using Multi-Task Pullback Bundle Dynamical Systems
}
\author{Andrew Bylard, Riccardo Bonalli, Marco Pavone 
\thanks{A. Bylard, R. Bonalli, and M. Pavone are with the Department of Aeronautics and Astronautics, Stanford University, Stanford, CA 94305. \{{\tt bylard, rbonalli, pavone}\} {\tt@stanford.edu}. This work was supported in part by NSF Cyber-Physical Systems (CPS) Grant No. 1931815 and by KACST. Thanks to Fransesco Bullo, Benoit Landry, Thomas Lew, and Jean-Jacques Slotine for their helpful discussions and insights.}
}
\begin{document}

\maketitle
  \thispagestyle{empty}
\pagestyle{empty}

\begin{abstract}
Despite decades of work in fast reactive planning and control, challenges remain in developing reactive motion policies on non-Euclidean manifolds and enforcing constraints while avoiding undesirable potential function local minima. This work presents a principled method for designing and fusing desired robot task behaviors into a stable robot motion policy, leveraging the geometric structure of non-Euclidean manifolds, which are prevalent in robot configuration and task spaces. Our Pullback Bundle Dynamical Systems (PBDS) framework drives desired task behaviors and prioritizes tasks using separate position-dependent and position/velocity-dependent Riemannian metrics, respectively, thus simplifying individual task design and modular composition of tasks. For enforcing constraints, we provide a class of metric-based tasks, eliminating local minima by imposing non-conflicting potential functions only for goal region attraction. We also provide a geometric optimization problem for combining tasks inspired by Riemannian Motion Policies (RMPs) that reduces to a simple least-squares problem, and we show that our approach is geometrically well-defined. We demonstrate the PBDS framework on the sphere $\mathbb S^2$ and at 300-500 Hz on a manipulator arm, and we provide task design guidance and an open-source Julia library implementation. Overall, this work presents a fast, easy-to-use framework for generating motion policies without unwanted potential function local minima on general manifolds.
\end{abstract}

\section{Introduction}
\label{sec:introduction}
Fast reactive planning and control is often crucial in dynamic, uncertain environments, and related techniques have a long history~\cite{Khatib1985,KhoslaVolpe1988,Hogan1984,Khansari-ZadehBillard2012,IjspeertNakanishiEtAl2013}. However, key challenges remain in applying these techniques to general manifolds. Well-known non-Euclidean manifolds are ubiquitous in robotics, often representing part of the configuration manifold (e.g., $SO(3)$ for aerial robot attitude). They can also be essential for representing robotic task spaces (e.g., the sphere $\mathbb{S}^2$ for painting or moving a fingertip around an object surface, such as in Fig. \ref{fig:main}).
Further, constraints may restrict the robot to a free submanifold
which is difficult to represent explicitly. For example, an obstacle in the workspace can produce
an additional topological hole that geodesics, or ``default" unforced trajectories, should avoid~\cite{RatliffToussaintEtAl2015,AgirrebeitiaAvilesEtAl2005}.

Often such constraints are handled using constrained optimization or repulsive potentials. However, constrained optimization can scale poorly with the number and complexity of nonconvex constraints, often becoming impractical for real-time MPC without careful initialization~\cite{SchulmanHoEtAl2013,KuindersmaDeitsEtAl2015,MerktIvanEtAl2019}. On the other hand, repulsive potentials can be difficult to design without producing incorrect behavior. For example, it is well known that artificial potential function (APF) techniques~\cite{Khatib1985,KhoslaVolpe1988} without assumptions about the robot and workspace geometry~\cite{RimonKoditschek1992,KimKhosla1992,HuberBillardEtAl2019} can produce many incorrect local minima and unnatural behavior such as oscillations within the free submanifold (e.g. in narrowing passages or when a constraint is near the goal)~\cite{KorenBorenstein1991,Kim2009}.

An alternative coming from differential geometry is to encode constraints not with forces, but with {\em metrics}. For example, correctly designed Riemannian metrics~\cite{Lee2018} defined on the robot configuration manifold have been proposed to curve the manifold to prevent constraint violation, not due to forces pushing the robot away but due to the space stretching infinitely in the direction of constraints ~\cite{ChengMukadamEtAl2018,MainpriceRatliffEtAl2020}. Such a reliance on curvature rather than competing potential functions may also eliminate traps due to potential function local minima~\cite{RatliffVanWykEtAl2020}.

\begin{figure}[t!]
    \centering
    \includegraphics[width=\linewidth]{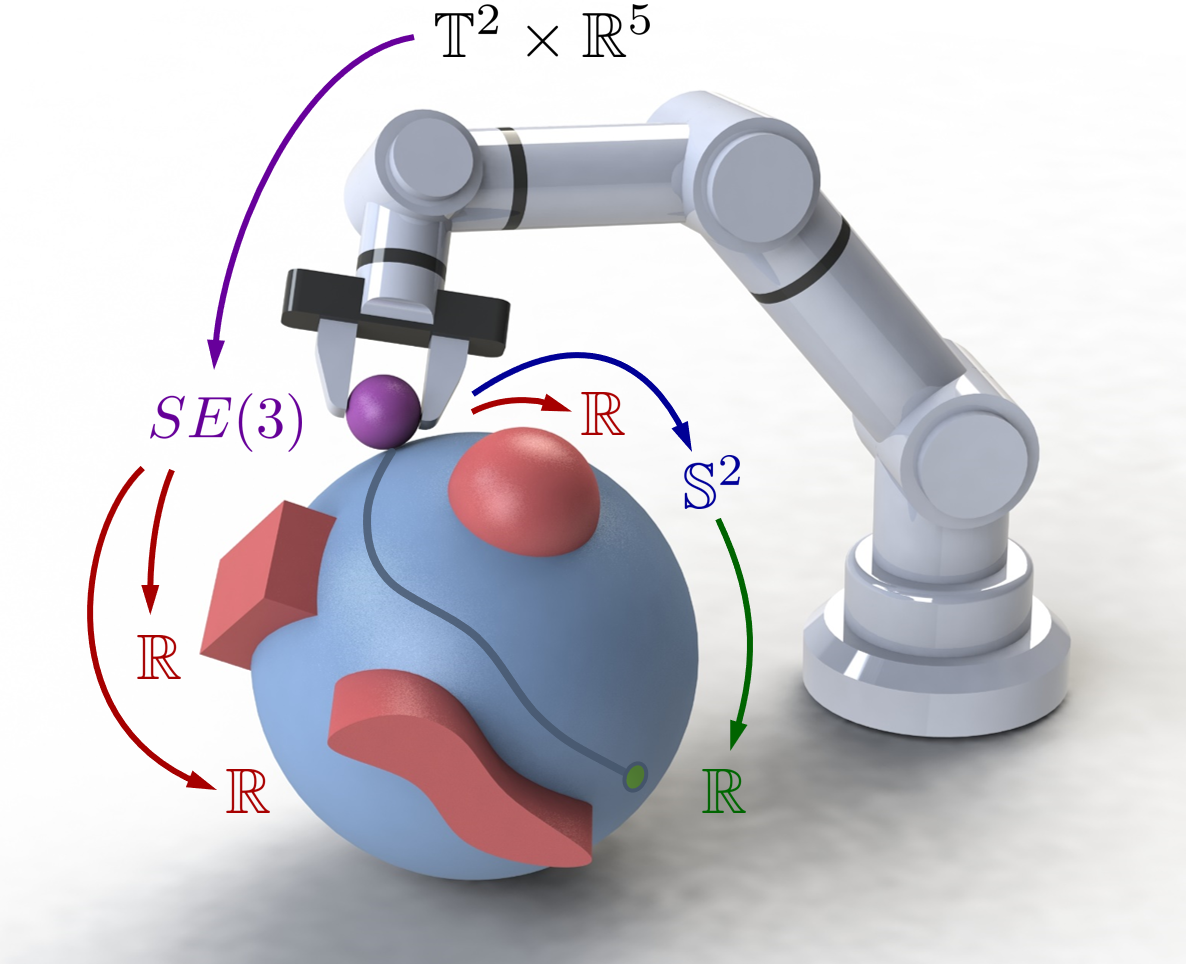}%
    \caption{Example tree of PBDS task mappings designed to move a ball along the surface of a sphere to a goal while avoiding obstacles. Depicted are manifolds representing: (black) joint configuration for a 7-DoF robot arm with two fully revolute joints, (purple) ball pose, (red) distances to obstacles, (blue) position on the sphere, and (green) distance to the goal. Note that damping can be defined on  $\mathbb T^2 \times \mathbb R^5$, $SE(3)$, and/or $\mathbb S^2$ as desired.}
    \label{fig:main}%
\end{figure}

However, correctly designing such metrics directly on the configuration manifold can be difficult, motivating the use of \textit{pullback metrics}, i.e. metrics are designed on task manifolds where constraints are naturally defined and then ``pulled back" through the task map onto the configuration manifold. One recent work advocating this approach is RMPflow~\cite{ChengMukadamEtAl2018}, which develops a tree of subtask manifolds with metrics and forces defined on the leaf manifolds which are pulled back and combined into a single acceleration policy on the configuration manifold. However, \cite{ChengMukadamEtAl2018} uses potentials in addition to metrics to represent constraint-enforcing subtasks, increasing design complexity and leading to the same local minima issues as in other APF approaches.

Additionally, it is often useful to perform task prioritization dependent on robot velocity. For example, when a robot is moving away from an obstacle, the obstacle can often be safely ignored. To this end, \cite{ChengMukadamEtAl2018} introduces Geometric Dynamical Systems (GDSs) weighted by Riemannian Motion Policies (RMPs)~\cite{RatliffIssacEtAl2018}, which allows velocity-dependent metrics used both to specify subtask behavior (such as constraint avoidance) and to prioritize tasks. However, this approach presents two difficulties: First, it is very difficult to design task metrics which both specify desirable task behavior and prioritize tasks correctly. This has led to investigating learning the task metrics from demonstrations~\cite{RanaLiEtAl2020}, but in any case, it is unclear what task behavior within a single task could benefit from velocity-dependent Riemannian metrics, as we discuss in Sec.~\mbox{\ref{subsec:task_priorities_vel_dep_metrics}}.

Second, a GDS with velocity-dependent metrics does not meet the key property of \emph{geometric consistency}, as we demonstrate in Sec.~\ref{subsec:geo_consistency}. Geometric consistency\footnote{Geometrically-consistent objects are also known of as being ``global" in differential geometry~\cite{Lee2013}. Thus we will also refer to such objects as being geometrically or globally well-defined.} (i.e. invariance to changes in coordinate representations) is an essential property of any differential geometric method which ensures that the quantities defined, in this case acceleration policies, are well-defined on the robot and task manifolds invoked, thus correctly capturing and leveraging the structure of these manifolds (for example, see Fig. \ref{fig:geo_consistency}).

To summarize, a large gap remains for producing a fast, easy-to-use, and geometrically-consistent approach for generating motion policies on general robot and task manifolds, achieving velocity-dependent task-weighting, and leveraging metric-based enforcement of constraints to eliminate unwanted potential function local minima.

{\em Statement of Contributions}:
To this end, we provide the following contributions:

1) We present the Pullback Bundle Dynamical Systems (PBDS) framework for combining multiple geometric task behaviors into a single robot motion policy while maintaining geometric consistency and stability. In doing so, we provide a geometrically well-defined formulation of a weighting scheme inspired by RMPs, which reduces to a simple least-squares problem. We also remove the tension between single-task behavior and inter-task weighting by introducing separate velocity-dependent weighting pseudometrics for each task to handle task prioritization.

2) We apply the PBDS framework to tangent bundles to reveal limits to the practical use of velocity-dependent Riemannian metrics for task behavior design and to show why GDSs do not maintain geometric consistency.

3) We provide a class of constraint-enforcing tasks encoded solely via simple, analytical Riemannian metrics that stretch the space, rather than via traditional barrier function potentials, eliminating potential function local minima. 

4) We demonstrate PBDS policy behavior in numerical experiments and at 300-500 Hz on a 7-DoF arm, and we provide a fast open-source Julia library called PBDS.jl.

{\em Paper Organization}:
The paper is organized as follows. In Sec.~\ref{sec:geo_prel} we recall relevant concepts from Riemannian geometry and establish some notation. Then Sec.~\ref{sec:pbds} builds a multi-task PBDS, extracts a motion policy, and provides stability results. Sec.~\ref{sec:gds_comparison} unpacks key advantages over RMPflow's GDS/RMP framework. Finally, Sec.~\ref{sec:experiments} provides implementation details and a robot arm demonstration.


\section{Geometric Preliminaries}
\label{sec:geo_prel}
Here we recall some results in Riemannian geometry and establish notation that will be used throughout the paper. For a more detailed introduction, see~\cite{Lee2013,Lee2018}. Let $M$ be a smooth $m$-dimensional manifold equipped with charts assigning coordinates to locally-Euclidean patches. $M$ has a smooth tangent bundle $TM$ containing tangent spaces $T_p M$ at each point $p \in M$, and it has a cotangent bundle $T^* M$, which is a natural place to define forces~\cite{BulloLewis2004}. We will also denote $(p,v) \in T_pM$ as $v_p$ or $v$ depending on desired emphasis of the base point $p$. Given a Riemannian metric $g$, the couple $(M,g)$ is a Riemannian manifold. The metric provides a smoothly-varying inner product $g_p : T_p M \times T_p M \longrightarrow \mathbb{R}$ at each point and thus induces norms $||\cdot||_{g_p}$. The metric $g$ also gives a ``sharp" operator $\sharp : T^* M \longrightarrow TM$ and generalized gradient $\textrm{grad} f : M \longrightarrow TM$ for $f \in C^\infty(M)$, which are useful for applying forces. A choice of connection $\nabla$ in $TM$ (e.g. the standard Levi-Civita connection) assigns a unique acceleration operator $D_\sigma$ to each curve $\sigma$ in $M$. Thus we can compute acceleration $D_\sigma \sigma' = \nabla_{\sigma'(t)}\sigma'$, where $\sigma'$ is the velocity along $\sigma$ (used interchangeably with $\dot\sigma$ notation, which can also indicate a time derivative). Then by choosing generalized forces $\mathcal{F} : TM \longrightarrow T^*M$, one can specify a dynamical system on $M$ satisfying $D_\sigma \sigma'(t) = \mathcal{F}(\sigma(t))^\sharp$. This can be written in local coordinates as the second-order differential equations
\begin{equation}
\label{eq:dsmanifold}
\ddot \sigma^k(t) + \dot \sigma^i(t) \dot \sigma^j(t) \Gamma^k_{ij}(\sigma(t)) = g^{kj}(\sigma(t)) \mathcal{F}_j(\sigma(t),\dot \sigma(t)),\hspace{-1pt}
\end{equation}
where we make use of Einstein index notation and where $\Gamma^k_{ij}$ are the Christoffel symbols associated with $\nabla$. We will use this type of dynamical system to design desired behavior on task manifolds that corresponding robot motion policies should aim to replicate.

Note also that we will use bold symbols when convenient to denote local matrix or vector representations of objects. For example, given $\Xi^k \triangleq \dot\sigma^i \dot\sigma^j \Gamma^k_{ij}$, we can write (\ref{eq:dsmanifold}) as
$$
\ddot{\bm{\sigma}}(t) + \bm{\Xi}(\bm \sigma(t), \dot{\bm{\sigma}}(t)) = \bm{g}^{-1}(\bm\sigma(t))\bm{\mathcal{F}}(\bm\sigma(t), \dot{\bm{\sigma}}(t)).
$$
We may also abbreviate such expressions, for example as
$$
\ddot{\bm{\sigma}} + \bm{\Xi} = \bm{g}^{-1}\bm{\mathcal{F}}.
$$\vspace{-20pt}

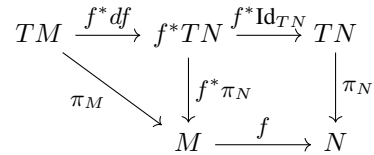
\begin{figure}
    \centering
\begin{tikzcd}[row sep=25pt, column sep=25pt]
TM \arrow[r, "f^*df"] \arrow[rd, "\pi_M"'] & f^*TN \arrow[d, "f^*\pi_N"] \arrow[r, "f^*\textrm{Id}_{TN}"] & TN \arrow[d, "\pi_N"] \\
                                                & M \arrow[r, "f"]                               & N                    
\end{tikzcd}
    \caption{Commutative diagram of the key manifolds used to form a pullback bundle dynamical system. Note that the pullback bundle $f^*TN$ is an $n$-vector bundle over the robot configuration manifold $M$.}
    \label{fig:commdiag_pbds}
\end{figure}


\section{Pullback Bundle Dynamical Systems}
\label{sec:pbds}
Given $K$ tasks, let $M$ and $N_i$ be smooth $m$- and $n_i$-dimensional robot configuration and task manifolds, respectively, where $\{f_i : M \longrightarrow N_i\}_{i=1,\ldots,K}$ is a set of smooth task maps.
These tasks could represent a variety of desired behaviors such as attraction to a point/region, velocity damping, or avoidance of a constraint. To illustrate, consider a running pedagogical example of a point robot on the surface of a ball, aiming to reach a goal while avoiding obstacles. A simple design for the relevant tasks is shown in Table \ref{tab:example}. As shown, the desired task behaviors can be encoded by designing on each task manifold $N_i$ a potential function $\Phi_i: N_i \longrightarrow \mathbb{R}_+$, dissipative forces $\mathcal F_{D,i} : TN_i \longrightarrow T^*N_i$, and a Riemannian metric $g_i$ (which we can refer to as the behavior metric), all of which are smooth. We also choose the standard Levi-Civita connection $\nabla_i$~\cite{Lee2013}. \begin{table*}[t!]
    \centering
    \begin{center}
    \begin{tabular}{|c|c|c|c|c|c|}
    \hline
     Task & Task Map & Behavior Metric & Dissipative Forces & Potential & Weight Pseudometric\\ \hline
     Goal Attraction & $f_1 : \mathbb S^2 \rightarrow \mathbb R$ : $p \mapsto \textrm{dist}(p,p_\textrm{goal})$ & $\bm g_1 = 1$ & $\bm{\mathcal F}_{D,1} = 0$ & $\Phi_1 = \lVert \bm x \rVert^2_2$ & $\bm w_i = \bm I_2$\\  
     Damping & $f_2 : \mathbb S^2 \rightarrow \mathbb S^2$ : $p \mapsto p$ & $\bar{\bm g}_2 = \bm I_3$ & $\bar{\bm{\mathcal F}}_{D,2} = -4\dot{\bar{\bm x}}$ & $\Phi_2 = 0$ & $\bar{\bm w}_2 = \bm I_6$ \\
     Obstacle Avoidance & $f_i : \mathbb S^2 \rightarrow \mathbb R_+$ : $p \mapsto \textrm{dist}(p,\mathcal X_\textrm{obs})$ & $\bm g_i = \exp(1/(2\bm x^2))$ & $\bm{\mathcal F}_{D,i} = 0$ & $\Phi_i = 0$ & See (\ref{eq:toggled_pseudometric})\\
     \hline
    \end{tabular}
    \end{center}
    \caption{Simple task design summary for running example of a robot on a sphere surface navigating to a goal point while avoiding obstacles. Note that each obstacle has an avoidance tasks (hence the indices $i$). For the damping task, the overbars denote representations in ambient Euclidean space, which are easily pulled back through standard embedding maps to induce corresponding objects on $\mathbb S^2$ and $T\mathbb S^2$ (see \iftoggle{ext}{Appx.~\ref{subsec:S2_tasks}}{Appendix \cite{BylardBonalliEtAl2021}} for details).}
    \label{tab:example}
    \vspace{-16pt}
\end{table*}

As mentioned, these components give dynamical systems on the task manifolds $N_i$, but we want to form a robot motion policy from a dynamical system on the configuration manifold $M$. One way to proceed is by using pullbacks, i.e. ``pulling back" the necessary objects through the task maps $f_i$ to operate over $M$. First, we recall the definition of the pullback bundle of $TN$, omitting task indices for simplicity:
\begin{equation}
\label{eq:pullback_bundle}
f^*TN = \coprod_{p \in M} \pi_N^{-1}(f(p)),
\end{equation}
where $\coprod$ is a disjoint union and $\pi_N$ is the standard projection on $TN$. This is itself a well-defined vector bundle and in particular is an $(m+n)$-dimensional manifold. The relationships between these manifolds are depicted in Fig~\ref{fig:commdiag_pbds}, including a pullback differential $f^*df : TM \longrightarrow f^*TN : (p,v) \mapsto (p, \pi_2(df_p(v)))$, where $\pi_2$ denotes a projection onto the velocity component.

Next we define the pullback connection $f^*\nabla$, whose Christoffel symbols are given locally by
\begin{equation}
f^* \Gamma^k_{ij}(p) = \frac{\partial f^\ell}{\partial x^i}(p) \Gamma^k_{\ell j}(f(p))
\end{equation}
for all $p \in M$. In \iftoggle{ext}{Lemma \ref{lem:pullback_connection}}{the Appendix~\cite{BylardBonalliEtAl2021}} we show this connection is globally well-defined and compatible with a pullback metric $f^*g$.
This connection gives a corresponding pullback acceleration operator $f^*D_\gamma$ for each curve $\gamma$ in $M$. We can also denote the total acceleration given by the pullback forces as $f^*\mathcal F(\cdot)^\sharp : f^*TN \longrightarrow f^*TN : (p,v) \mapsto f^*\mathcal F_D(p,v)^\sharp - f^*\textrm{grad}\,\Phi(p)$, where we define the pullback dissipation forces $f^*\mathcal F_D$ and pullback gradient $f^*\textrm{grad}$ in \iftoggle{ext}{Appx.~\ref{subsec:constructions_pb}}{the Appendix~\cite{BylardBonalliEtAl2021}}.

We are now equipped to construct a key building block for forming a full Pullback Bundle Dynamical System:
\begin{definition}[Local Pullback Bundle Dynamical System] Let $f : M \longrightarrow N$ be a smooth task map. Then for each $(p,v)\in M$, we can choose a curve $\alpha_{(p,v)} : (-\varepsilon, \varepsilon) \longrightarrow M$ resulting in $\gamma_{\alpha_{(p,v)}} : (-\varepsilon, \varepsilon) \longrightarrow f^*TN$ for some $\varepsilon > 0$ such that $(f,g,\Phi,\mathcal F_D, \alpha_{(p,v)})$ forms a local Pullback Bundle Dynamical System (PBDS) satisfying 
\begin{equation*}
\textrm{PBDS}_{\alpha_{(p,v)}} \hspace{-5pt}\ \begin{cases}
    f^* D_{\alpha_{(p,v)}} \gamma_{\alpha_{(p,v)}}(s) = f^*\mathcal{F}(\gamma_{\alpha_{(p,v)}}(s))^\sharp\\
    \gamma_{\alpha_{(p,v)}}(0) = f^*df_{\alpha_{(p,v)}(0)}(\alpha_{(p,v)}'(0)),\\ \alpha_{(p,v)}'(0) = (p,v).
\end{cases}
\end{equation*}
\end{definition}
This local PBDS construction is local in the sense that in practice, we define a $\textrm{PBDS}_{\alpha_{(p,v)}}$ for each $(p,v) \in TM$ and only evaluate it at $(p,v)$. However, it is geometrically well-defined and is required to ensure that we have well-defined dynamics at each point on the pullback bundle independent of the robot curve $\sigma : [0,\infty) \longrightarrow M$ that we ultimately follow.
In particular, these dynamics must be well-posed at $t=0$, (i.e. where $f^*D_\sigma$ is not well-defined, requiring another curve $\alpha_{\sigma'(0)}$ to produce $f^*D_{\alpha_{\sigma'(0)}}$)
and in cases where the curve $\alpha_{\sigma'(t)}$ defining a valid $\textrm{PBDS}_{\alpha_{\sigma'(t)}}$ for some $t \in [0, \infty)$ is not unique. For example, the latter may occur when there is redundancy, i.e., $m > n$, which is often the case in practice.

From these local PBDS dynamics, we can extract the corresponding desired task pullback acceleration associated with each robot position and velocity:
\begin{equation}
\label{eq:S}
S : TM \longrightarrow T(f^*TN) : (p,v) \mapsto \pi_\textrm{VB}(\gamma'_{\alpha_{(p,v)}}(0)),
\end{equation}
where $\pi_\textrm{VB}$ denotes the projection onto the vertical bundle. In \iftoggle{ext}{Appx.~\ref{appxsubsec:multi_pbds}}{the Appendix~\cite{BylardBonalliEtAl2021}}, we show that this map is geometrically well-defined. In particular, a curve $\alpha_{(p,v)}$ forming a valid local PBDS always exists, and the map $S$ does not depend on the particular choices of curves $\alpha_{(p,v)}$.
Given this fact, we will omit further mention of $\alpha$ and use the shorthand $\gamma'_{v_p}(0) \triangleq \gamma'_{\alpha_{(p,v)}}(0)$ for $(p,v) \in TM$.

Now for each individual task, we can form a map $S_i$ as above to retrieve corresponding desired task pullback accelerations. However, our goal is to combine these tasks into a single robot motion policy. To do this, one approach is to define a robot acceleration policy using a geometrically well-defined optimization problem designed to strike a weighted balance between these pullback task acceleration policies.

This is not as straightforward as it may appear since the operative accelerations are each in different spaces and are thus difficult to compare (i.e. $\ddot\sigma$ is in $TTM$ and $\gamma'_{\dot\sigma(t)}$ is in $T(f_i^*TN_i)$ for each task).
To resolve this, we form a map relating robot accelerations on $TTM$ to their resulting pullback task accelerations in $T(f_i^*TN_i)$:
\begin{equation}
\begin{aligned}
\label{eq:Zi}
Z_i : TTM &\longrightarrow T(f_i^*T N_i) \\
\big( (p,v) , a \big) &\mapsto \pi_{\textrm{VB}}\Big( d(f_i^*df_i)_{(p,v)}(a) \Big).
\end{aligned}
\end{equation}

Next, for each task we define a Riemannian pseudometric $w_i$ on $TN_i$ which will provide its weighting against other tasks, and we impose the following assumptions:
\begin{itemize}
    \item[$(A1)$] The pseudometrics $w_i$ are at every point in $TN_i$ either positive-definite or zero.
    \item[$(A2)$] The Jacobian of the product map of the task maps associated with nonzero weights has rank $m$.
\end{itemize}
In practice, these weighting pseudometrics are often trivial to design (e.g., see our running example in Table \ref{tab:example}, where the attractor task weight is the Euclidean metric, and the damping task weight is induced by a Euclidean metric). However, they can also be used to switch tasks on and off, which is useful for enforcing constraints as shown in Sec.~\ref{subsec:constraints}. Also, as in the running example, $(A1)$ and $(A2)$ are easy to satisfy. In particular, it is typical to use one or more damping tasks which together always provide dissipation along all degrees of freedom of the robot.
Now to conclude, each $w_i$ also has a pullback pseudometric $F_i^*w_i$ on $T(f_i^*T N_i)$ defined using the natural higher-order task map $F_i : TM \rightarrow TN_i : (p,v) \mapsto (p, (df_i)_p(v))$ (\iftoggle{ext}{Appx.~\ref{appxsubsec:multi_pbds}}{see Appx.~\cite{BylardBonalliEtAl2021}}).

We can now form the desired ODE on our robot manifold:
\begin{definition}[Multi-Task Pullback Bundle Dynamical System] Let $\{f_i : M \longrightarrow N_i\}_{i=1,...K}$ be smooth task maps. Then the set $\{(f_i, g_i, \Phi_i, \mathcal F_{D,i}, w_i)\}_{i=1,\ldots,K}$ forms a multi-task PBDS with curves $\sigma : [0,\infty) \longrightarrow M$ satisfying
\begin{equation}
\label{eq:multi_pbds_def}
\begin{cases}
\ddot{\sigma}(t) = \underset{a \in \mathcal D_{\dot\sigma(t)}}{\arg \min} \ \sum^K_{i=1} \frac{1}{2} \lVert Z_i(a) - S_i(\dot\sigma(t))   \rVert^2_{F_i^*w_i} \\
\dot\sigma(0) = (p_0, v_0).
\end{cases}
\end{equation}
\end{definition}
\noindent Here, $\mathcal D$ is the globally well-defined affine distribution of $TTM$ such that the subspace $\mathcal D_{(p,v)} \subseteq T_{(p,v)}$ satisfies $a^v = v$ componentwise for each $a = ((p, v), (a^v, a^a)) \in \mathcal D_{(p,v)}$. Under assumptions $(A1)$-$(A2)$, as shown in \iftoggle{ext}{Appx.~\ref{appxsubsec:multi_pbds}}{the Appendix~\cite{BylardBonalliEtAl2021}}, the multi-task PBDS is geometrically well-defined and has a unique smooth solution. This gives us a dynamical system on $M$ that provides our robot acceleration policy:
\begin{equation}
\label{eq:local_pbds_policy}
\ddot{\bm\sigma} = \left(\sum_{i=1}^K\bm{Jf}_i^\top\bm{w}_i^a\bm{Jf}_i\right)^\dagger\left(\sum_{i=1}^K \bm{Jf}_i^\top\bm{w}_i^a\bm{\mathcal{A}}_i\right)
\end{equation}\vspace{-5pt}
\begin{equation}
\begin{aligned}
\label{eq:A}
\bm{\mathcal A}_i = \bm g_i^{-1} (&\bm{\mathcal F}_{D,i} - \nabla \bm\Phi_i) - (\dot{\bm{Jf}}_i-\bm\Xi_i)\dot{\bm\sigma} \\
(\Xi_i)_{kj} &= (Jf_i)_{\ell j}(\Gamma_i)^k_{\ell h}(Jf_i)_{hr}\dot\sigma^r,
\end{aligned}
\end{equation}
where $\nabla$ in this context is the Euclidean gradient operator, and where $\bm{w}_i^a \in \mathbb{R}^{n_i \times n_i}$ is the lower-right quadrant of the local matrix $\bm{w}_i\in \mathbb{R}^{2n_i \times 2n_i}$. In practice, this is the only quadrant necessary to design due to the vertical bundle projections in (\ref{eq:S}) and (\ref{eq:Zi}).

Next, we use LaSalle's invariance principle to demonstrate stability about a set of robot equilibrium states. This requires a Lyapunov function, which we build adapting known results in Lagrangian mechanics to the multi-task PBDS. In particular, we rely on one more key assumption:
\begin{itemize}
    \item[$(A3)$] The combined dissipative forces corresponding to tasks having nonzero weights are strictly dissipative.
\end{itemize}
Like the others, $(A3)$ can be naturally satisfied (see the discussion of $(A2)$). Now consider the Lyapunov function candidate on the robot tangent bundle $TM$:
\begin{equation}
V(p,v) = \sum^K_{i=1}\frac{1}{2} \big\lVert (f_i^*df_i)_p(v) \big\rVert^2_{(f_i^* g_i)_p} + \Phi_i \circ f_i(p).
\end{equation}
In \iftoggle{ext}{Appx.~\ref{appxsubsec:multi_pbds}}{the Appendix~\cite{BylardBonalliEtAl2021}}, we show that $\dot V(\sigma'(t)) < 0$ outside equilibrium states, leading to the following stability results:

\begin{theorem}[Global Stability of Multi-Task PBDS]
Let  $\{(f_i, g_i, \Phi_i, \mathcal F_{D,i}, w_i)\}_{i=1,\ldots,K}$ be a multi-task PBDS where $\sum_{i=1}^K \Phi_i \circ f_i$ is a proper map.
Then given the Lyapunov function $V$ above, the multi-task PBDS satisfies these:
\begin{itemize}
    \item The sublevel sets of $V$ are compact and positively invariant for \eqref{eq:multi_pbds_def}, so that every solution curve $\sigma$ is defined in the whole interval $[0,+\infty)$.
    \item For every $\beta > 0$, every solution curve $\sigma$ in $M$ starting at $\dot{\sigma}(0) \in V^{-1}([0,\beta])$ converges in $V^{-1}([0,\beta])$ as $t\rightarrow +\infty$ to an equilibrium set $\{(p,0)\in V^{-1}([0,\beta])$ : $f^*_i \textnormal{grad}\,\Phi_i(p) = 0$ if $w_i(f_i(p),0) \neq 0\}$
\end{itemize}
\end{theorem}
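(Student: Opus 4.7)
The strategy is to invoke LaSalle's invariance principle on the manifold $TM$, using the Lyapunov function $V$ defined in the theorem. The appendix handles the infinitesimal computation showing $\dot V(\sigma'(t)) \le 0$ along trajectories (with strict decrease outside the equilibrium set) via assumption $(A3)$, so the remaining task is to verify the topological hypotheses of LaSalle and then unpack the resulting invariant set into the explicit form stated.

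For the first bullet, I would establish compactness of the sublevel sets of $V$ as follows. The potential part $\Phi := \sum_i \Phi_i \circ f_i$ is proper and nonnegative by hypothesis, so its sublevel set in $M$ is compact. The kinetic part at each $p \in M$ is the nonnegative quadratic form $v \mapsto \tfrac{1}{2}\sum_i \|(f_i^*df_i)_p(v)\|^2_{(f_i^*g_i)_p}$ on the fiber $T_pM$; by assumption $(A2)$, the stacked Jacobians of the tasks with nonzero weight have full rank $m$, which makes this quadratic form positive definite and hence proper in $v$. Combining, $V^{-1}([0,\beta]) \subseteq TM$ projects into a compact base with proper fibers, so it is compact. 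Positive invariance is immediate from $\dot V \le 0$. Standard forward-completeness then yields that any maximally-defined solution with $\sigma'(0) \in V^{-1}([0,\beta])$ cannot escape this compact set in finite time, hence is defined on all of $[0,+\infty)$.

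For the second bullet, LaSalle's invariance principle applied on the compact positively-invariant set $V^{-1}([0,\beta])$ tells us that each trajectory $\sigma'(t)$ approaches the largest invariant subset $\Omega \subseteq V^{-1}([0,\beta])$ of the zero-dissipation set $\{\dot V = 0\}$. Strict dissipation $(A3)$ across the union of active tasks forces $v = 0$ throughout $\Omega$, so trajectories inside $\Omega$ satisfy $\dot\sigma(t) \equiv 0$ and hence $\ddot\sigma(t) \equiv 0$ as well. Substituting $v = 0$ into (\ref{eq:local_pbds_policy})--(\ref{eq:A}) kills the velocity-dependent terms and reduces $\mathcal A_i$ to the pullback gradient contribution; the first-order optimality condition of the weighted least-squares problem then constrains the pullback gradients. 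Invoking $(A1)$ (each $w_i$ either positive definite or zero) and $(A2)$ (full rank of the active product Jacobian), and exploiting that $\Omega$ is invariant as a \emph{set} rather than at a single point, one isolates the per-task condition $f_i^*\textnormal{grad}\,\Phi_i(p) = 0$ whenever $w_i(f_i(p),0) \neq 0$. Convergence in $V^{-1}([0,\beta])$ follows from compactness of the sublevel set and the standard $\omega$-limit argument.

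The main obstacle I anticipate is precisely this last transition. Setting $\ddot\sigma = 0$ at a single point yields only the aggregate vanishing condition $\sum_i \bm{Jf}_i^\top \bm{w}_i^a \bm{g}_i^{-1}\nabla\bm{\Phi}_i = 0$, which at face value is strictly weaker than the individual per-task statement in the theorem. The argument must therefore use that the invariant set is populated by sustained equilibria: nearby rest-states also belong to $\Omega$, and under $(A1)$--$(A2)$ this transversality rules out the nontrivial cancellations between active tasks that would otherwise be permitted by the weighted sum. The remainder reduces to standard LaSalle machinery together with the geometric well-definedness of $V$, $\dot V$, and the least-squares policy already established earlier in the paper.
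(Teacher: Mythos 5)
Your overall architecture matches the paper's: compactness of the sublevel sets via properness of $\sum_i\Phi_i\circ f_i$ together with positive-definiteness of the kinetic form (which the paper, following Bullo--Lewis, phrases as showing that $\sum_i (f_i^*g_i)_p\big((f_i^*df_i)_p(\cdot),(f_i^*df_i)_p(\cdot)\big)$ is a genuine Riemannian metric using $(A2)$), positive invariance from $\dot V\le 0$, forward completeness, and then LaSalle with $(A3)$ forcing $v=0$ on the limiting invariant set. Up to that point your proposal is sound and essentially the paper's argument.

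The genuine gap is exactly the step you flag and then do not close: passing from the aggregate stationarity condition $\sum_i \bm{Jf}_i^\top\bm w_i^a\,\bm g_i^{-1}\nabla\bm\Phi_i=0$ (which is all that $\ddot\sigma=0$, $\dot\sigma=0$ in \eqref{eq:local_pbds_policy}--\eqref{eq:A} gives you at a single rest point) to the per-task condition $f_i^*\textnormal{grad}\,\Phi_i(p)=0$ for every $i$ with $w_i(f_i(p),0)\neq 0$. Your proposed repair --- that invariance of $\Omega$ \emph{as a set} together with $(A1)$--$(A2)$ rules out cancellations between active tasks --- is not an argument and is not implied by those assumptions: a single rest point whose weighted pullback gradients mutually cancel is by itself an invariant set, hence lies in the largest invariant subset of $\{\dot V=0\}$, and $(A1)$--$(A2)$ constrain only the rank of the stacked active Jacobians, not the directions or magnitudes of the individual gradients, so they do not by themselves forbid such cancellation. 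The paper takes a different route here: it defines $A=\{(p,0)\in V^{-1}([0,\beta])\}$ (using $(A3)$) and $B$ as the set with the per-task gradient conditions, and proves $B$ is the \emph{largest positively invariant subset} of $A$ by a trajectory contradiction --- if $(p,0)\notin B$, i.e.\ some active task has $f_i^*\textnormal{grad}\,\Phi_i(p)\neq 0$, then the constant curve $\sigma\equiv p$ does not satisfy the PBDS, so the solution from $(p,0)$ must acquire nonzero velocity at some $\bar t>0$ and thereby leave $A$; LaSalle then gives convergence to $B$. To complete your proof you should either adopt this set-theoretic "largest invariant subset" argument in place of the pointwise first-order-condition computation, or supply an explicit justification that a nonzero active pullback gradient forces the least-squares policy output at $(p,0)$ to be nonzero; as written, your proposal acknowledges the obstacle but leaves the decisive step unproven.
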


\begin{figure}
    \centering
    \begin{tikzcd}[row sep=20pt, column sep=20pt]
              &                                                                              & M \arrow[d, "f_2"'] \arrow[rd, "f_3"] \arrow[ld, "f_1"'] &                                                     &           \\
              & N_1 \arrow[ld, "{f_{1,1}}"'] \arrow[d, "{f_{1,2}}"'] \arrow[rd, "{f_{1,3}}"] & N_2                                                      & N_3 \arrow[d, "{f_{3,1}}"'] \arrow[rd, "{f_{3,2}}"] &           \\
    {N_{1,1}} & {N_{1,2}}                                                                    & {N_{1,3}}                                                & {N_{3,1}}                                           & {N_{3,2}}
    \end{tikzcd}
    \caption{Tree of manifolds and task maps in a multi-task PBDS policy. At the root is the robot configuration manifold, and task manifolds are at the leaves. This structure can be exploited to parallelize computation of contributions from independent tasks and reuse computation from parent nodes.}
    \label{fig:computational_tree}
\end{figure}
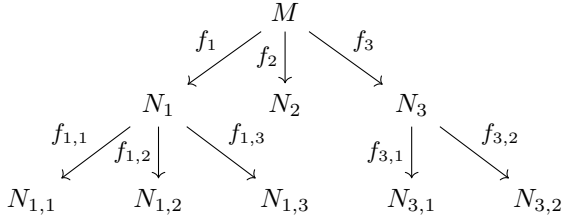


\section{Comparison to Riemannian Motion Policies on Geometric Dynamical Systems}
\label{sec:gds_comparison}
As mentioned, prior work proposed a framework called RMPflow which presented Geometric Dynamical Systems (GDS) for designing dynamical systems on task manifolds that could be ``pulled back" onto the robot manifold to generate a robot motion policy~\cite{ChengMukadamEtAl2018}. Similar to this work, multiple GDSs could be combined through weighted optimization, in that case by using Riemannian Motion Policies (RMP)~\cite{RatliffIssacEtAl2018}. In this section, we will explain the key difficulties with the RMPflow approach (which we will refer to as GDS/RMP) and how they are resolved by our multi-task PBDS policies.\vspace{-2pt}

\subsection{Geometric Consistency}
\label{subsec:geo_consistency}
The main shortcoming of the GDS framework is that it is in general not geometrically consistent. Geometric consistency is an essential property for any geometric method, ensuring its properties hold on non-Euclidean manifolds (i.e. outside a single locally-Euclidean patch) and are invariant to changes in coordinates. To see where geometric consistency is lost, we can derive the GDS framework as a modification of a PBDS applied to tangent bundles as follows.

For a given task map $f : M \longrightarrow N$, we define a higher-order task map $F : TM \longrightarrow TN$ as in Sec.~\ref{sec:pbds}. Since tangent bundles also have a manifold structure, we can build a PBDS on $F$ using the process in Sec.~\ref{sec:pbds}, now operating on the higher-order manifolds shown in Fig.~\ref{fig:higher_order_pbds}. However, now it is difficult to extract a robot acceleration policy corresponding to a curve on the robot manifold $M$ because the PBDS curves $\sigma$ have moved to $TM$. In particular, $\sigma(t) \in TM$ now contains both robot positions and velocities, so $\ddot \sigma(t)$ contains both acceleration and jerk. The GDS tackles this problem by projecting the geometric acceleration $F^*D_\sigma \gamma = a^i \partial^v_i + \kappa^i \partial^a_i$ onto the first half of the basis vectors $\partial^v_i$ to remove components that would correspond to jerk. Unfortunately, this is a projection onto a particular choice of horizontal bundle, which can only be made disjoint from the globally well-defined vertical bundle given a choice of chart \cite{Saunders1989}. In other words, it is impossible to define this projection globally.

In particular, a clear problem arises when considering a change of coordinates on the projected Riemannian task metric for the GDS. Consider a metric $g$ on $TN$ which can be represented in chart $\tilde C$ as $\tilde g^v_{ij} d \tilde v^i d \tilde v^j + \tilde g^a_{ij} d \tilde a^i d \tilde a^j$ so that $\tilde{\bm g} = \textrm{blockdiag}(\tilde{\bm g}^v, \tilde{\bm g}^a)$. Let $\phi$ be the transition function from a new chart $\hat C$ to $\tilde C$. The Jacobian of $\phi$ has the form
\begin{equation}
\begin{gathered}
\bm{J\phi}(\hat{p} , \hat{v}) = \begin{bmatrix}
\bm{J\phi}^{v}(\hat{p} , \hat{v}) & 0 \\
\bm{J\phi}^{av}(\hat{p} , \hat{v}) & \bm{J\phi}^{a}(\hat{p} , \hat{v}) 
\end{bmatrix},\\
J\phi^{v}_{ij}(\hat{p} , \hat{v}) = J\phi^{a}_{ij}(\hat{p} , \hat{v}) = \frac{\partial \tilde p^j}{\partial \hat p^i}(\hat p),\\\vspace{-4pt}
J\phi^{av}_{ij}(\hat{p} , \hat{v}) = \hat v^k   \frac{\partial^2 \tilde p^j}{\partial \hat p^i \partial \hat p^k}(\hat p).
\end{gathered}
\end{equation}
\begin{figure}
    \centering
\begin{tikzcd}[row sep=25pt, column sep=large]
TTM \arrow[r, "F^*dF"] \arrow[rd, "\pi_{TM}"'] & F^*TTN \arrow[d, "F^*\pi_{TN}"] \arrow[r, "F^*\textrm{Id}_{TTN}"] & TTN \arrow[d, "\pi_{TN}"] \\
                                                     & TM \arrow[r, "F"] \arrow[d, "\pi_M"']              & TN \arrow[d, "\pi_N"]     \\
                                                     & M \arrow[r, "f"]                                   & N                        
\end{tikzcd}
    \caption{Commutative diagram for a PBDS system applied to a task map between tangent bundles.}
    \label{fig:higher_order_pbds}
\end{figure}
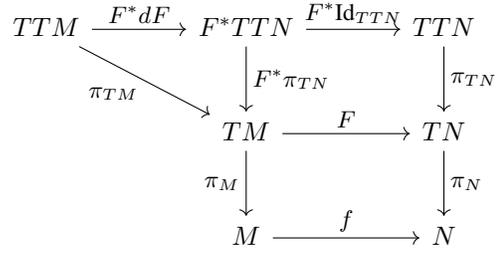
The metric coordinates in $\hat C$ can then be found by
\begin{align}
\hat{\bm g} &= \bm{J\phi}^\top \tilde{\bm g}\bm{J\phi} \\
&= \begin{bmatrix}
\bm{J\phi}^{v\top} \tilde{\bm g}^v \bm{J\phi}^v + \bm{J\phi}^{av\top} \tilde{\bm g}^a \bm{J\phi}^{av} & \bm{J\phi}^{av\top} \tilde{\bm g}^a \bm{J\phi}^{a}\\
\bm{J\phi}^{a\top} \tilde{\bm g}^a \bm{J\phi}^{av} & \bm{J\phi}^{a\top} \tilde{\bm g}^a \bm{J\phi}^{a}\nonumber
\end{bmatrix}
\end{align}
so that $\hat{\bm g}^v = \bm{J\phi}^{v\top} \tilde{\bm g}^v \bm{J\phi}^v + \bm{J\phi}^{av\top} \tilde{\bm g}^a \bm{J\phi}^{av}$. Since $\tilde{\bm g}^a \neq \bm 0$ by the properties of a metric, this shows that a horizontal projection onto $g^v$ cannot be maintained through coordinate transformations. Since the projected geometric acceleration depends on $g^v$, the policy will change depending on the choice of coordinates and will thus exhibit unnatural behavior (e.g., undesirable geodesics) on non-Euclidean manifolds. 

To make this concrete, consider our Table \ref{tab:example} example on a sphere, omitting obstacle avoidance.
To see the effect of coordinate choice, we run both PBDS and GDS policies using different schemes for choosing coordinate charts on $\mathbb{S}^2$. The resulting trajectories can be see in Fig.~\ref{fig:geo_consistency}a and b. Here, the suboptimal and inconsistent behavior of the GDS policy demonstrates the importance of geometric consistency. With such consistency, the policy can accurately capture and leverage the geometry of the manifold, as shown in the PBDS example. Without it, the policy instead introduces artifacts that disturb natural motion on the manifold and can lead to erratic behavior. In some cases it is possible for the GDS policy designer to engineer tasks within a given coordinate choice that suppresses such behavior, but for non-Euclidean metrics this will often be a struggle against the locally-defined curvature terms introduced by the GDS formalism.\vspace{-1pt}

\subsection{Task Prioritization and Velocity-Dependent Metrics}
\label{subsec:task_priorities_vel_dep_metrics}
The main motivation for building GDSs on tangent bundles is to define velocity-dependent metrics on task tangent bundles which can be used both to drive the GDS dynamics and enable velocity-dependent prioritization of tasks in the RMP framework. However, this introduces difficulty for the designer in forming task metrics which both give correct single-task behavior and weight each task correctly against all other tasks. Indeed, rather than allowing incremental design and modular combination of tasks, it can require jointly redesigning tasks for every new task combination.

To resolve this issue, the PBDS framework leverages two observations: (O1) There are few practical scenarios where a velocity-dependent Riemannian metric is useful for driving task dynamics, and (O2) there is no need to use a single metric for both task dynamics and task prioritization. 

We can gain intuition for (O1) by considering a modified PBDS applied to tangent bundle tasks. For  brevity, we outline the main modifications here and leave further details to \iftoggle{ext}{Appx.~\ref{appxsec:tangent_bundle_pbds}}{the Appendix~\cite{BylardBonalliEtAl2021}}. Let $F_i : TM \longrightarrow TN_i$ be our task maps. To avoid projecting onto the horizontal bundle as in the GDS formulation, we instead wish to project onto the vertical bundle, an operation that is globally well-defined. However, to continue recovering an acceleration policy rather than a jerk policy, we consider permuted task maps $\hat F_i \triangleq \sigma_{TN_i} \circ F_i : TM \longrightarrow \mathbb{R}^{2d_i}$ defined locally as $\hat F_i(p,v) = ((d\bar f_i)_p(v),\bar f_i(p))$. This essentially swaps the position and velocity coordinates, an operation which is globally well-defined on parallelizable manifolds (e.g., Lie groups) having an embedding $\bar \varphi_i : N_i \longrightarrow \mathbb{R}^{d_i}$, giving $\bar f_i = \bar \varphi_i \circ f_i$. Then by considering curves in the corresponding pullback bundles satisfying a system analogous to (\ref{eq:multi_pbds_def}) with the appropriate global projections applied, we arrive at the following ODE for a curve $\sigma$ in configuration manifold $M$:\vspace{-3pt}
\begin{equation*}
\begin{gathered}
    \textstyle\ddot{\bm\sigma} = \left(\sum_i\bm{\mathcal C}_i^\top \bm{w}_i^a \bm{\mathcal C}_i\right)^\dagger \left(\sum_i \bm{\mathcal C}_i^\top\bm{w}_i^a\bm{\mathcal{A}}_i\right), \quad \bm{\mathcal C}_i = \bm J \bar{\bm f}_i + \bm \Xi_i^v \vspace{-2pt} \\
    \bm{\mathcal A}_i = (\bm g_i^{-1})^a (\bm{\mathcal F}_{D,i} - \nabla \bm\Phi_i) - (\dot{\bm J \bar{\bm f}}_i  - \bm \Xi_i^a )\dot{\bm \sigma} \\
    \Xi^v_{ij} = J\hat F_{kj} \Gamma^{i+d}_{k(h+m)} J\bar f_{h\ell} \dot\sigma^\ell, \; \Xi^a_{ij} = J\hat F_{k(j+d)} \Gamma^{i+d}_{k(h+m)} J\bar f_{h\ell} \dot\sigma^\ell.
\end{gathered}
\end{equation*}

It is now instructive to see the corresponding local policy equations for a set of uniformly weighted $f_i = \textrm{Id}_{\mathbb R}$ tasks:
\begin{equation*}
\begin{gathered}
\ddot\sigma = \frac{\sum_i (1 + \Xi_i^a)((g^a_i)^{-1}(\mathcal{F}_{D,i} - \partial_{x_i}\Phi_i) - \Xi^v_i\dot\sigma)}{\sum_i (1 + \Xi^a_i)^2} \\
\textstyle\Xi^v_i = \frac{1}{2}(g^a_i)^{-1} \partial_{x_i} g_i^a \dot\sigma, \quad%
\Xi^a_i = \frac{1}{2}(g^a_i)^{-1} \partial_{v_i} g_i^a \dot\sigma.
\end{gathered}
\end{equation*}
This shows that a velocity-dependent metric modifies both the ``force" of the task (its contribution to the numerator of the policy equation) and the total ``mass" of the system (the denominator). However, while modifying a task force is desirable, modifying the total mass is not.

For example, consider using a velocity-dependent metric to enforce constraints. For position constraints, if a task metric increases as the robot approaches the constraint with velocity towards it, the total mass increases. This indeed makes it hard for the robot to accelerate further towards the constraint, but likewise it becomes hard for the robot to accelerate away. Similarly for velocity constraints, a barrier-type velocity metric can enforce velocity limits, but it then becomes difficult for other tasks to remove the high inertia of the robot near the velocity limit.

Thus rather than applying the PBDS framework to tasks mapping from the robot configuration tangent bundle $TM$, we instead nominally consider tasks mapping from the base robot manifold $M$ (i.e., using only position-dependent metrics to design individual task behaviors), greatly simplifying the framework with little practical loss in expressiveness of tasks. Additionally, leveraging (O2), we lose no expressiveness in defining velocity-dependent prioritization of tasks by defining separate pseudometrics on the $TN_i$ to perform the weighting. This also removes the difficulty of achieving desired task behavior and task prioritization using a single metric, thus resulting in a more modular framework.
\subsection{Metric-based Constraints}
\label{subsec:constraints}
As mentioned, constraints imposed solely by Riemannian metrics rather than repulsive potentials can eliminate the spurious local minima in combined potential fields characteristic of many APF methods. However, this has not been considered in works employing the GDS/RMP framework~\cite{LiChengEtAl2019,MukadamChengEtAl2020,RanaLiEtAl2020}. The notion of constraint-enforcing Riemannian metrics was recently considered in~\cite{MainpriceRatliffEtAl2020}. However, rather than constructing such metrics explicitly, the approach numerically computes a distance field from the goal considering obstacles, interpreting this as a geodesic distance field whose gradient is then used to guide trajectory optimization. Such a technique is computationally intensive and may not be practical for complex, dynamic scenarios.

Instead, we propose a class of simple analytical Riemannian behavior metrics that provide tunable enforcement of constraints within the PBDS framework. To motivate the form of these metrics, consider a set of $f_i : \mathbb{R} \longrightarrow \mathbb{R}$ tasks using the PBDS motion policy of (\ref{eq:local_pbds_policy}):
$$
\ddot\sigma = \frac{\sum_i Jf_i w^a_i((g_i)^{-1}(\mathcal{F}_{D,i} - \partial_{x_i}\Phi_i) - (\dot{Jf}_i + \Xi_i)\dot\sigma)}{\sum_i Jf_i^2 w^a_i}.
$$
For constraint tasks, we let $f_i$ be a distance function to the constraint boundary so that $x_i \in \mathbb{R}_+$ is the distance to the constraint and $\Xi_i = \frac{1}{2}(g_i)^{-1} \partial_{x_i} g_i \dot\sigma$. Since we add no repulsive potential and offload dissipative forces to other tasks, constraints are enforced through $\Xi_i$. In short, designing $\Xi_i$ as a negative barrier function at the constraint boundary will slow negative velocities as desired. However, due to the requirements of $g_i$ and the form of $\Xi_i$, a simple logarithmic or inverse barrier $g_i$ is insufficient. In particular, $g_i = \log x_i$ is negative for $x_i < 1$, and $g_i = a/x_i^b$ always results in $\Xi_i = -b/x_i$ for all $a,b\in\mathbb{R}$, which has minimal tuning capability (i.e. $a/x_i^b$ cannot decrease fast enough with decreasing $x_i$). There are many suitable options, but a good candidate is
\begin{equation}
\label{eq:constraint_metric}
g_i(x_i) = \exp(a/(bx_i^b))
\end{equation}
for $a > 0$, $b > 1$, which results in a familiar and flexible barrier function $\Xi_i = -a/x^{b-1}_i$.

Additionally, we must weight constraint tasks such that they are only active when $\dot x_i < 0$, both to save computation and because such metric-based constraints can produce an undesired acceleration away from the constraint when \mbox{$\dot x_i > 0$}. This can be accomplished using a pseudometric
\begin{equation}
\label{eq:toggled_pseudometric}
w^a_i(x_i, \dot x_i) = \begin{cases} 1 &\mbox{if } \dot x_i > 0 \mbox{ and } x_i < \beta,\\
0 &\mbox{otherwise,}
\end{cases}
\end{equation}
where $\beta \in \mathbb{R}_+$ activates the constraint avoidance within some proximity. This works well numerically, but smooth approximations can be used to retain smoothness guarantees.

The effectiveness of this metric and pseudometric combination can be seen in our full Table \ref{tab:example} running example with results shown in Fig.~\ref{fig:geo_consistency}\subref{fig:pbds_sphere_obstacles}, where each obstacle has a task with a task map $f_i$ giving the Euclidean distance to the obstacle in the ambient $\mathbb{R}^3$, metric $g_i = \exp(1/x_i^2)$, and weighting pseudometric $w_i$ as defined in (\ref{eq:toggled_pseudometric}) with $\beta = \infty$.

\begin{figure*}[t!]
    \centering
    \begin{subfigure}[t]{0.32\textwidth}
        \centering
        \includegraphics[width=2.2in]{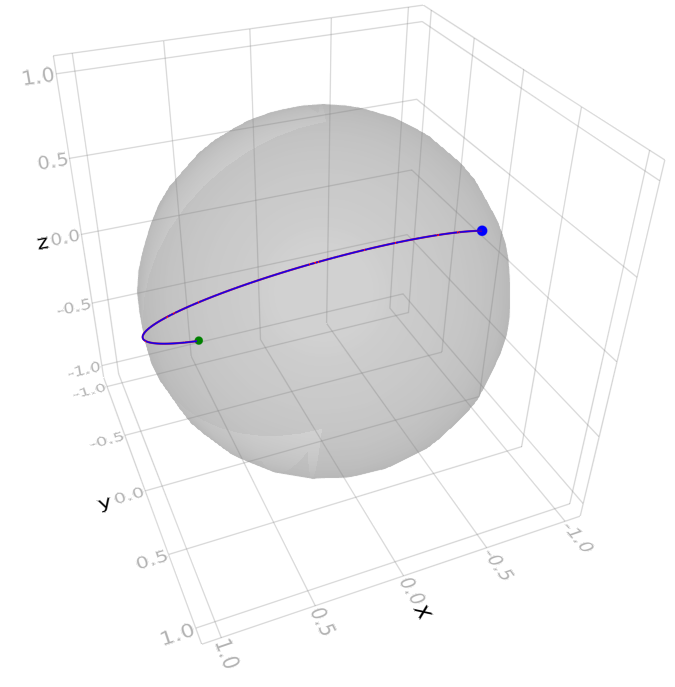}%
        \caption{PBDS geometric consistency test}
        \label{fig:geo_consistency_pbds}%
    \end{subfigure}
    \begin{subfigure}[t]{0.32\textwidth}
        \centering
        \includegraphics[width=2.2in]{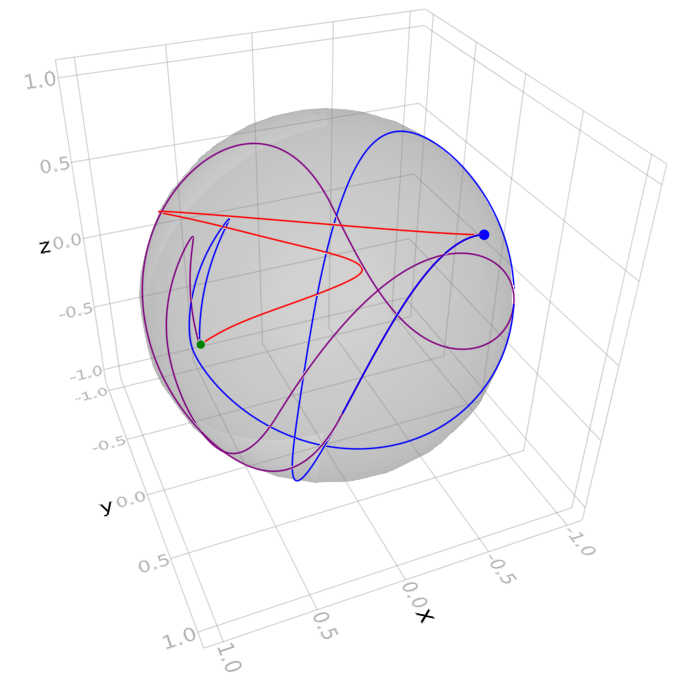}%
        \caption{GDS/RMP geometric consistency test}
        \label{fig:geo_consistency_gds}%
    \end{subfigure}
    \begin{subfigure}[t]{0.32\textwidth}
        \centering
        \includegraphics[width=2.2in]{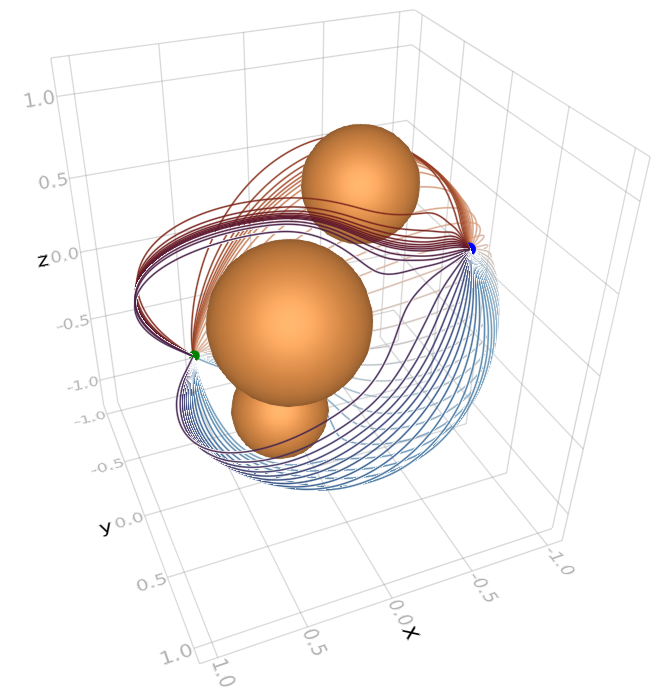}%
        \caption{PBDS attractor and obstacle avoidance policy on $\mathbb{S}^2$}
        \label{fig:pbds_sphere_obstacles}%
    \end{subfigure}
  
    \caption{(a) and (b) show a geometric consistency test using a point attractor on $\mathbb{S}^2$ and three schemes for choice of chart: (Red) Always using south pole stereographic projection, (Blue) always using north pole stereographic projection, (Purple) switching stereographic projections depending on the sphere hemisphere. The blue and green dots show the start and goal points, respectively. Note that using PBDS, all coordinate choices result in the same trajectory. (c) shows resulting trajectories from a PBDS attractor policy with obstacles added, where the curve colors represent different starting velocity directions.}
    \label{fig:geo_consistency}
    \vspace{-15pt}
\end{figure*}

\section{Implementation and Arm Experiments}
\label{sec:experiments}
We implemented the PBDS framework and the GDS/RMP framework for comparison in a fast Julia package called PBDS.jl\footnote{Available at \url{https://github.com/StanfordASL/PBDS.jl}}. Despite the apparent complexity of its geometric formulation, the PBDS algorithm in implementation is quite simple and is summarized in Alg. \ref{algo:pbds}. The main challenge in handling non-Euclidean manifolds within the PBDS framework is transitioning coordinate representations of the necessary objects between different coordinate charts and embeddings. However, PBDS.jl is designed to handle these transitions automatically. To increase performance in complex tasks, PBDS.jl also implements a computational tree inspired by~\cite{ChengMukadamEtAl2018} for cases where a task map tree structure such as in Fig.~\ref{fig:computational_tree} allows significant reuse of computation. This requires careful segmenting  and recombination of the main equations (\ref{eq:local_pbds_policy}) and (\ref{eq:A}) (details in \iftoggle{ext}{Appx.~\ref{appxsec:comp_tree}}{the Appendix~\cite{BylardBonalliEtAl2021}}).

\begin{algorithm}[t!]
\caption{Multi-Task PBDS Policy}\label{algo:pbds}
\begin{algorithmic}[1]
\State {\bf Input: }{Robot position and velocity $(\sigma,\dot\sigma)$}
\State {\bf Output: }{Robot acceleration command $\ddot\sigma$}
\State {\bf Data: }{Task PBDSs $(f_i,g_i,\Phi_i,\mathcal F_{D,i})$, weights $w_i$}
\State Compute $\mathcal A_i$ using (\ref{eq:A}) for each task
\State Compute combined acceleration policy $\ddot\sigma$ using (\ref{eq:local_pbds_policy})
\end{algorithmic}
\end{algorithm}

For demonstration of PBDS on a complex robotic task, we considered a 7-DoF Franka Panda arm grasping a mug in a dynamic, cluttered environment (see video at \href{http://bit.ly/pbds-vid}{bit.ly/pbds-vid}). For mechanism modelling and visualization, we used packages from JuliaRobotics~\cite{KoolenDeits2019}. For the PBDS policy, we used attractor, obstacle avoidance, and joint-limit policies on Euclidean task manifolds and considered damping on $\mathbb S^2$ and $\mathbb S^1$ for gripper location around the mug and along the mug rim. Along with providing natural movement through obstacles, the fast Julia implementation allowed the robot to react quickly to changes in the environment by computing policy outputs at a rapid 300-500 Hz, a speed which could be accelerated further by exploiting the clear opportunities for parallelization offered by the PBDS framework.


\section{Conclusions}
\label{sec:conclusions}
In conclusion, we have provided a fast, easy-to-use, and geometrically consistent framework for generating motion policies on non-Euclidean robot and task spaces. We have also shown how PBDS can leverage Riemannian metrics for simple constraint enforcement without generating undesirable potential function local minima. We highlight that although designing tasks on non-Euclidean manifolds can appear daunting, in practice the design is often trivial (Table \ref{tab:example}), and part of the strength of PBDS lies in its ability to seamlessly integrate different tasks that are designed in isolation. We also stress that a rich, diverse set of complex behaviors can be produced from combinations of a handful of common task design patterns. Future work for PBDS includes extensions to a broader class of constraints (e.g., velocity, acceleration, and control limits) and to tasks defined on discrete structures such as manifold triangle meshes.


\bibliographystyle{IEEEtran}
\bibliography{IEEEabrv,main,ASL_papers,appx}

\newcommand{\noopsort}[1]{} \newcommand{\printfirst}[2]{#1}
  \newcommand{\singleletter}[1]{#1} \newcommand{\switchargs}[2]{#2#1}
\begin{thebibliography}{10}
\providecommand{\url}[1]{#1}
\csname url@rmstyle\endcsname
\providecommand{\newblock}{\relax}
\providecommand{\bibinfo}[2]{#2}
\providecommand\BIBentrySTDinterwordspacing{\spaceskip=0pt\relax}
\providecommand\BIBentryALTinterwordstretchfactor{4}
\providecommand\BIBentryALTinterwordspacing{\spaceskip=\fontdimen2\font plus
\BIBentryALTinterwordstretchfactor\fontdimen3\font minus
  \fontdimen4\font\relax}
\providecommand\BIBforeignlanguage[2]{{%
\expandafter\ifx\csname l@#1\endcsname\relax
\typeout{** WARNING: IEEEtran.bst: No hyphenation pattern has been}%
\typeout{** loaded for the language `#1'. Using the pattern for}%
\typeout{** the default language instead.}%
\else
\language=\csname l@#1\endcsname
\fi
#2}}

\bibitem{Khatib1985}
O.~Khatib, ``Real-time obstacle avoidance for manipulators and mobile robots,''
  in \emph{{Proc.\ IEEE Conf.\ on Robotics and Automation}}, 1985.

\bibitem{KhoslaVolpe1988}
P.~Khosla and R.~Volpe, ``Superquadric artificial potentials for obstacle
  avoidance and approach,'' in \emph{{Proc.\ IEEE Conf.\ on Robotics and
  Automation}}, 1988.

\bibitem{Hogan1984}
N.~Hogan, ``Impedance control: {An} approach to manipulation,'' in
  \emph{{American Control Conference}}, 1984.

\bibitem{Khansari-ZadehBillard2012}
S.~M. Khansari-Zadeh and A.~Billard, ``A dynamical system approach to realtime
  obstacle avoidance,'' \emph{{Autonomous Robots}}, vol.~32, no.~4, pp.
  433--454, 2012.

\bibitem{IjspeertNakanishiEtAl2013}
A.~J. Ijspeert, J.~Nakanishi, H.~Hoffmann, P.~Pastor, and S.~Schaal,
  ``{Dynamical Movement Primitives}: {Learning} attractor models for motor
  behaviors,'' \emph{{Neural Computation}}, vol.~25, no.~2, pp. 328--373, 2013.

\bibitem{RatliffToussaintEtAl2015}
N.~Ratliff, M.~Toussaint, and S.~Schaal, ``Understanding the geometry of
  workspace obstacles in motion optimization,'' in \emph{{Proc.\ IEEE Conf.\ on
  Robotics and Automation}}, 2015.

\bibitem{AgirrebeitiaAvilesEtAl2005}
J.~Agirrebeitia, R.~Avil\'{e}s, I.~F. {de Bustos}, and G.~Ajuria, ``A new {APF}
  strategy for path planning in environments with obstacles,'' \emph{{Mechanism
  and Machine Theory}}, vol.~40, no.~6, pp. 645--658, 2005.

\bibitem{SchulmanHoEtAl2013}
J.~Schulman, J.~Ho, A.~Lee, I.~Awwal, H.~Bradlow, and P.~Abbeel, ``Finding
  locally optimal, collision-free trajectories with sequential convex
  optimization,'' in \emph{{Robotics: Science and Systems}}, 2013.

\bibitem{KuindersmaDeitsEtAl2015}
S.~Kuindersma, R.~Deits, M.~Fallon, A.~Valenzuela, H.~Dai, F.~Permenter,
  T.~Koolen, P.~Marion, and R.~Tedrake, ``Optimization-based locomotion
  planning, estimation, and control design for the {Atlas} humanoid robot,''
  \emph{{Autonomous Robots}}, vol.~40, no.~3, pp. 429--455, 2015.

\bibitem{MerktIvanEtAl2019}
W.~Merkt, V.~Ivan, and S.~Vijaykumar, ``Continuous-time collision avoidance for
  trajectory optimization in dynamic environments,'' in \emph{{IEEE/RSJ Int.\
  Conf.\ on Intelligent Robots \& Systems}}, 2019.

\bibitem{RimonKoditschek1992}
E.~Rimon and D.~E. Koditschek, ``Exact robot navigation wsing artificial
  potentia functions,'' \emph{{IEEE Transactions on Robotics and Automation}},
  vol.~8, no.~5, pp. 501--518, 1992.

\bibitem{KimKhosla1992}
J.-O. Kim and P.~K. Khosla, ``Real-time obstacle avoidance using harmonic
  potential functions,'' \emph{{IEEE Transactions on Robotics and Automation}},
  vol.~8, no.~3, pp. 338--349, 1992.

\bibitem{HuberBillardEtAl2019}
L.~Huber, A.~Billard, and J.-J. Slotine, ``Avoidance of convex and concave
  obstacles with convergence ensured through contraction,'' \emph{{IEEE
  Robotics and Automation Letters}}, vol.~4, no.~2, pp. 1462--1469, 2019.

\bibitem{KorenBorenstein1991}
Y.~Koren and J.~Borenstein, ``Potential field methods and their inherent
  limitations for mobile robot navigation,'' in \emph{{Proc.\ IEEE Conf.\ on
  Robotics and Automation}}, 1991.

\bibitem{Kim2009}
D.~H. Kim, ``Escaping route method for a trap situation in local path
  planning,'' \emph{{Int.\ Journal of Control, Automation and Systems}},
  vol.~7, no.~3, pp. 495--500, 2009.

\bibitem{Lee2018}
J.~M. Lee, \emph{Introduction to Riemannian Manifolds}, 2nd~ed.\hskip 1em plus
  0.5em minus 0.4em\relax {Springer}, 2018.

\bibitem{ChengMukadamEtAl2018}
C.-A. Cheng, M.~Mukadam, J.~Issac, S.~Birchfield, D.~Fox, B.~Boots, and
  N.~Ratliff, ``{RMPflow:} {A} computational graph for automatic motion policy
  generation,'' in \emph{{Workshop on Algorithmic Foundations of Robotics}},
  2018.

\bibitem{MainpriceRatliffEtAl2020}
J.~Mainprice, N.~Ratliff, M.~Toussaint, and S.~Schaal. (2020) An interior point
  method solving motion planning problems with narrow passages. {Available at
  }\url{https://arxiv.org/abs/2007.04842}.

\bibitem{RatliffVanWykEtAl2020}
N.~D. Ratliff, K.~Van~Wyk, M.~Xie, A.~Li, and M.~A. Rana. (2020) Optimization
  fabrics. {Available at }\url{https://arxiv.org/abs/2008.02399}.

\bibitem{RatliffIssacEtAl2018}
N.~D. Ratliff, J.~Issac, D.~Kappler, S.~Birchfield, and D.~Fox. (2018)
  {Riemannian} {Motion} {Policies}. {Available at
  }\url{https://arxiv.org/abs/1601.04037}.

\bibitem{RanaLiEtAl2020}
M.~A. Rana, A.~Li, H.~Ravichandar, M.~Mukadam, S.~Chernova, D.~Fox, B.~Boots,
  and N.~Ratliff, ``Learning reactive motion policies in multiple task spaces
  from human demonstrations,'' in \emph{{Conf.\ on Robot Learning}}, 2020.

\bibitem{Lee2013}
J.~M. Lee, \emph{Introduction to Smooth Manifolds}, 2nd~ed.\hskip 1em plus
  0.5em minus 0.4em\relax {Springer}, 2013.

\bibitem{BulloLewis2004}
F.~Bullo and A.~D. Lewis, \emph{Geometric Control of Mechanical Systems}.\hskip
  1em plus 0.5em minus 0.4em\relax {Springer-Verlag}, 2004.

\bibitem{Saunders1989}
D.~J. Saunders, \emph{The Geometry of Jet Bundles}.\hskip 1em plus 0.5em minus
  0.4em\relax {Cambridge Univ.\ Press}, 1989.

\bibitem{LiChengEtAl2019}
A.~Li, C.-A. Cheng, B.~Boots, and M.~Egerstedt, ``Stable, concurrent controller
  composition for multi-objective robotic tasks,'' in \emph{{Proc.\ IEEE Conf.\
  on Decision and Control}}, 2019.

\bibitem{MukadamChengEtAl2020}
M.~Mukadam, C.-A. Cheng, D.~Fox, B.~Boots, and N.~Ratliff, ``{Riemannian}
  {Motion} {Policy} fusion through learnable {Lyapunov} function reshaping,''
  in \emph{{Conf.\ on Robot Learning}}, 2020.

\bibitem{KoolenDeits2019}
T.~Koolen and R.~Deits, ``Julia for robotics: {Simulation} and real-time
  control in a high-level programming language,'' in \emph{{Proc.\ IEEE Conf.\
  on Robotics and Automation}}, 2019.

\bibitem{Hirsch1976}
M.~W. Hirsch, \emph{Differential Topology}.\hskip 1em plus 0.5em minus
  0.4em\relax {Springer}, 1976.

\end{thebibliography}
\iftoggle{ext}{\makeatletter
\renewcommand{\thesubsection}{\Roman{subsection}}
\renewcommand\thesubsectiondis{\Roman{subsection}.}
\renewcommand{\p@subsection}{\thesection.}
\appendices
\newpage
\section{Further Experimental Details and \\ PBDS Design Strategies}
\label{appxsec:exp_details}
In this section, we provide further details of the design of the PBDS policy driving our robot arm grasping experiments, including details of task design on $\mathbb S^1$ and $\mathbb S^2$, distance-toggled tasks, and the general multi-task PBDS design strategy we employ, which may serve as a useful reference for practitioners.

Since all seven joints of the Franka Panda arm have joint limits, the robot configuration manifold was $M = \mathbb R^7$ (note that for revolute joints without joint limits, the circle $\mathbb S^1$ should be used instead). For collision avoidance, we modelled a conservative collision hull around each link using a set of spheres, as shown in Fig. \ref{fig:arm_sim}.

\subsection{Task Set for Grasping a Mug}
\label{subsec:mug_grasping}
The full list of tasks for this PBDS policy and their corresponding task spaces is as follows.

Damping:
\begin{itemize}
    \item $\mathbb R$ : Robot arm joint damping
    \item $\mathbb S^2 \times \mathbb R$ : Distance-toggled damping of gripper around mug
    \item $\mathbb S^1 \times \mathbb R$ : Distance-toggled damping of gripper along mug rim
\end{itemize}

Constraints:
\begin{itemize}
    \item $\mathbb R_+$ : Joint limits (one for each joint)
    \item $\mathbb R_+$ : Obstacle avoidance (one for each pair of obstacle and link collision hull sphere)
    \item $\mathbb R_+$ : Self-collision avoidance (one for each pair of link collision hull spheres having the potential to collide)
\end{itemize}

Attractors:
\begin{itemize}
    \item $\mathbb R^2$ : Distance-toggled attractor above mug
    \item $\mathbb R^2$ : Distance-toggled gripper axis alignment attractor
    \item $\mathbb R^2$ : Distance-toggled grasping angle alignment attractor
    \item $\mathbb R^2$ : Distance-toggled grasp completion attractor
\end{itemize}
Here $\mathbb R_+$ is the set of positive real numbers, as the task maps for all constraints are distance functions to constraint violation restricted to nonzero distances.

\begin{figure}[t!]
    \centering
    \includegraphics[width=\linewidth]{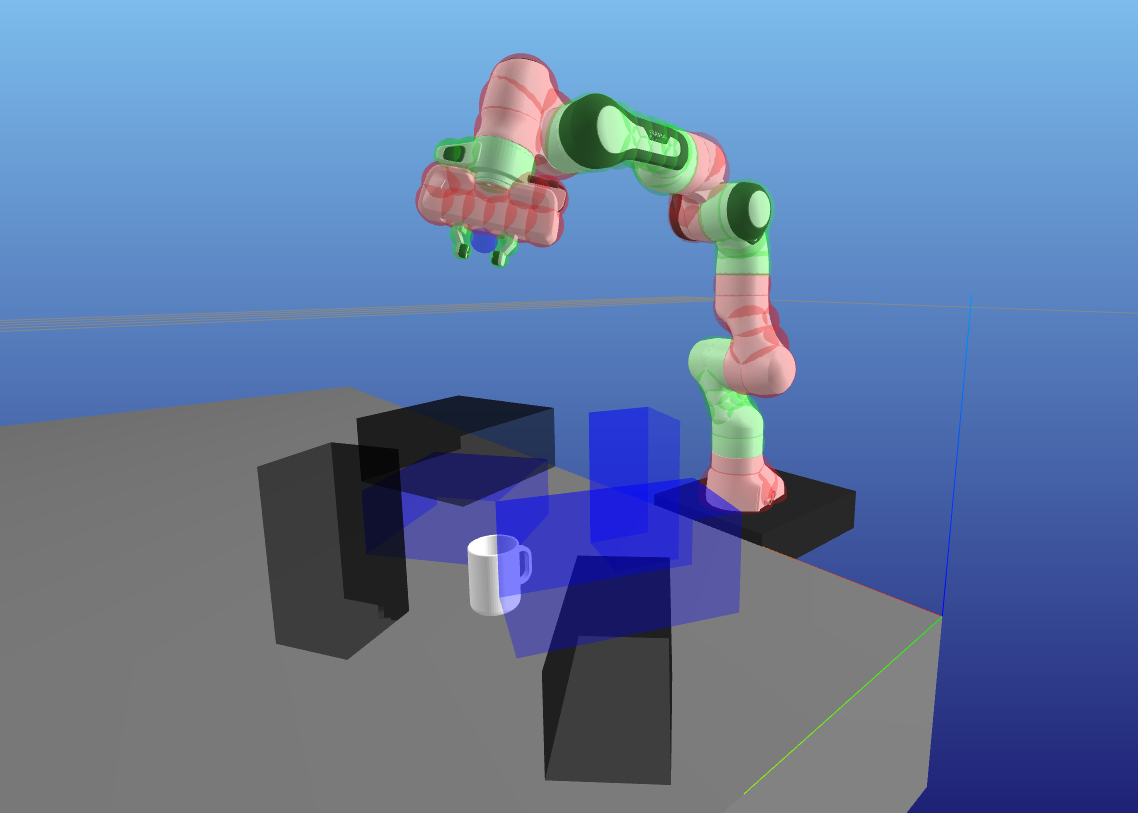}%
    \caption{Example experimental scenario for Franka Panda arm reaching through clutter to grasp a mug. The blue and black boxes are obstacles, and the green and red spheres on the arm represent the collisions for different robot links.}
    \label{fig:arm_sim}%
\end{figure}

To prevent premature grasping and to handle dynamic movement of the mug, we implement a simple two-state machine which uses the above-mug attractor task above while the mug is moving faster than some threshold velocity and until the end-effector nears the attractor goal. Otherwise the other three attractor tasks are used instead. Note that although in this state we have multiple potentials, one for each attractor task, they are easily designed to be always non-conflicting (i.e. the inner product of their resulting desired robot accelerations is never negative), and thus they produce no undesirable local minima.

\subsection{Tasks on $\mathbb S^n$}
\label{subsec:S2_tasks}

For designing tasks on $\mathbb S^n$ for some $n$, it is often natural to consider $\mathbb S^2$ as being embedded in an ambient Euclidean space and design the necessary components $(g, \mathcal F_D, \Phi, w)$ there, as we also do in the example in Table \ref{tab:example}. These components in the ambient space then induce corresponding components on $\mathbb S^n$. Thus, it is important to be able to compute the induced versions of these components. The same reasoning applies to many other manifolds which have a natural embedding in Euclidean space, but here we will consider $\mathbb S^n$ as it is a good example used in both the grasping and Table~\ref{tab:example} scenarios.

The $n$-sphere $\mathbb S^n$ can be naturally embedded as the unit sphere in $\mathbb R^{n+1}$, using an embedding $\bar \varphi : \mathbb S^n \longrightarrow \mathbb R^{n+1}$. We can consider two stereographic projection charts on $\mathbb S^n$ with chart maps $\varphi_+ : U_+ \subset \mathbb S^n \longrightarrow \mathbb R^n$ and $\varphi_- : U_- \subset \mathbb S^n \longrightarrow \mathbb R^n$ representing the south and north pole charts, respectively. Then we can define corresponding maps $\bar \varphi_+$ and $\bar \varphi_-$ from chart coordinates to embedded coordinates as follows:
\begin{equation}
\begin{gathered}
\bar \varphi_+^i(x) = \bar \varphi_-^i(x) = \frac{2x^i}{a}, \ a = \sum_{i=1}^n 1 + (x^i)^2 \\
\bar \varphi_+^{n+1}(x) = -\bar \varphi_-^{n+1}(x) = \frac{2-a}{a}.
\end{gathered}
\end{equation}
There is also a chart transition map between the chart coordinate representations defined by $\varphi_+ \circ \varphi_-^{-1}(x) = \varphi_- \circ \varphi_+^{-1}(x) = x/\lVert x \rVert^2_2$. These maps and the associated spaces are summarized in Fig. \ref{fig:S2}. We can construct similar maps for $T\mathbb S^2$ embedded in $\mathbb R^{2n+2}$.

Now given a behavior metric $\bar g$, dissipative forces $\bar {\mathcal F}_D$, potential function $\bar \Phi$ defined on $\mathbb R^{n+1}$, and weighting pseudometric $\bar w$ defined on $\mathbb R^{2n+2}$, the pullbacks of these objects through $\bar \varphi_+$ and $\bar \varphi_-$ give their coordinate representations in the south and north pole charts, respectively. In particular, we can compute the relevant components in the south pole chart by:
\begin{equation}
\begin{aligned}
    \bm g_+ = \bm J \bar{\bm \varphi}_+^\top \bar{\bm g} \bm J \bar{\bm \varphi}_+, \quad \bm{\mathcal F}_{D+} = \bm J \bar{\bm \varphi}_+^\top \bar{\bm{\mathcal F}}_D\\
    \Phi_+ = \bar{\Phi}\bm\circ \bar{\bm \varphi}_+, \quad \bm w_+^a = \bm J \bar{\varphi}_+^\top \bar{\bm w}^a \bm J \bar{\bm \varphi}_+.
\end{aligned}
\end{equation}
Components in the north pole chart representation can be computed similarly.

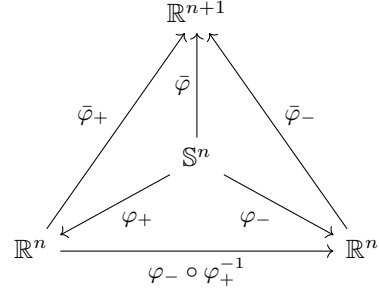
\begin{figure}
    \centering
\begin{tikzcd}[row sep=20pt, column sep=18pt]
                                                                         &  & \mathbb R^{n+1}                                                         &  &                                        \\
                                                                         &  &                                                                         &  &                                        \\
                                                                         &  & \mathbb S^n \arrow[uu, "\bar \varphi"] \arrow[lld, "\varphi_+"] \arrow[rrd, "\varphi_-"'] &  &                                        \\
\mathbb R^n \arrow[rruuu, "\bar \varphi_+"] \arrow[rrrr, "\varphi_-\circ \varphi_+^{-1}"'] &  &                                                                         &  & \mathbb R^n \arrow[lluuu, "\bar \varphi_-"']
\end{tikzcd}
    \caption{Commutative diagram for the two stereographic charts of $\mathbb S^2$ and its embedding in Euclidean space. (Here we omit the subsets $U_+, U_- \subset \mathbb S^2$ which are the domains of the respective chart maps $\varphi_+, \varphi_-$).}
    \label{fig:S2}
\end{figure}

\subsection{Distance-Toggled Tasks}
\label{subsec:distance_toggled_tasks}
In some scenarios it is useful to switch tasks on or off based on some distance using the weighting pseudometrics. We have already seen an example of this in Sec. \ref{subsec:constraints} for constraint tasks, and in the grasping scenario we take advantage of this to skip computation of many of the collision-avoidance tasks at any given configuration. However, as we also exemplify in the grasping example, distance toggles can also be useful for attractor and damping tasks.

To do this, one can simply form a new task map which is the product of the original task map and the distance function used for toggling (if that distance function is different than the function used for the original task map). Thus, the new task manifold becomes the product of the original task manifold and $\mathbb R$, i.e. $\mathbb R$ becomes $\mathbb R^2$ and $\mathbb S$ becomes $\mathbb S^2 \times \mathbb R$. (Note that in PBDS.jl, we have implemented product task maps to conveniently handle product manifolds like $\mathbb S^2 \times \mathbb R$). Given this, one can form weights $w$ for the task which are a function of the new task variable to toggle the task as desired.

\subsection{A General Strategy for Multi-task PBDS Design}
For the practitioner's reference, we summarize here the general strategy we have used for multi-task PBDS policy design in both the Table \ref{tab:example} example and the grasping experiments. Note that this is just one set of options among many potentially viable strategies, and it specifically targets scenarios where the objective is to reach a particular goal configuration or region while avoiding constraint violation. This strategy includes three main components:

1) First, we create a strictly dissipative task over the robot configuration manifold $M$ using the identity task map $f_\textrm{diss} = \textrm{Id}_M$. This task should use a behavior metric corresponding to the desired ``default" trajectories (geodesics) for $M$, e.g. the Euclidean metric for $\mathbb R^n$ and the round metric for $\mathbb S^n$. Likewise, it should also use a suitable ``default" task weighting. In particular, the coordinates of the behavior metric can be reused to specify the operative part of the weighting pseudometric $\bm w^a_\textrm{diss}(p,v) = \bm g_\textrm{diss}(p)$. Lastly, there should be a zero potential function and the force should be strictly dissipative, meaning $\bm {\mathcal F}_{D,\textrm{diss}}(p,v) \cdot \bm v < 0$ for all $\bm v \neq 0$. This task ensures that $(A2)$ and $(A3)$ are always satisfied. In particular, if there is redundancy at any point in the manifold with the respect to the other nonzero-weight tasks, this task serves to regularize the policy.

2) Second, for each constraint, we add a constraint task using a task map $f_\textrm{cons} : M \longrightarrow \mathbb R_+$ that is a smooth distance function to constraint violation. For the behavior metric and weighting pseudometric, we use \eqref{eq:constraint_metric} and \eqref{eq:toggled_pseudometric}, respectively, and we use a zero potential function and zero damping forces.

3) Lastly, to direct the robot toward the goal, we create one or more attractor tasks. The design of these tasks is very flexible. For example, one can add distance-based toggles as described in Sec. \ref{subsec:distance_toggled_tasks}. Additionally, the main portion task map used to define the attractor potential function need not be $\mathbb R$. Both the behavior metric and weighting pseudometrics should use suitable ``defaults" as with the dissipative task, unless toggling is used in the weights. Note also that dissipative forces are optional in this task as the previously defined dissipative task already provides global strict dissipation and is often sufficient for providing aesthetically natural motions. If multiple attractor tasks are used, one should take care that active attractors are non-conflicting, as explained at the end of Sec. \ref{subsec:mug_grasping}.

This strategy for attractors, along with the metric-based enforcement of constraints, ensures that there are no spurious potential function local minima, a common issue with APF approaches. However, it does not mean that there are no spurious (unstable) equilibrium points or that the system is always guaranteed to reach the goal. For more on this, see the discussion following Theorem \ref{appxthe:pbds_stability} in Sec. \ref{appxsubsec:pbds_stability}.


\section{Computational Tree for Multi-Task PBDS}
\label{appxsec:comp_tree}
Here we give a derivation and details for implementation of the PBDS computational tree for reusing computation in multi-level trees of task maps. Consider a set of task maps forming a task map tree such as in Fig. \ref{fig:computational_tree}. In such a case, multiple task maps are composite maps which share components (e.g., $f_{1,1} \circ f_1$ and $f_{1,2} \circ f_1$ share $f_1$). Thus in cases where applying component maps and computing Jacobians and Jacobian derivatives is expensive, it is useful to exploit this tree structure to reuse computation while computing the multi-task PBDS acceleration policy output.

Thus we propose quantities denoted $P$, $A$, $B$, $\mathcal F$, and $\xi$ which can be numerically pulled up the tree from the leaves and combined into equation (\ref{eq:local_pbds_policy}). Noting that each leaf corresponds to a different task, we initialize these quantities for each leaf as follows.

\textbf{Leaf task manifold nodes:}
\begin{equation}
\setlength{\jot}{5pt}
\begin{gathered}
\bm P = \bm B = \bm{Jf}^\top \bm w^a \bm{Jf}, \quad \bm A = \bm{Jf}^\top \bm w^a \dot{\bm{Jf}} \\
\bm{\mathcal F} = \bm{Jf}^\top \bm w^a \bm g^{-1}(\bm{\mathcal F}_D - \nabla\bm\Phi) \\
\xi^q_{sr} = Jf_{\tau q} w^a_{\tau \eta} Jf_{\alpha s} \Gamma^\eta_{\alpha \beta}Jf_{\beta r},
\end{gathered}
\end{equation}
where $f$ is the map to the task manifold from the immediate parent, and we locally compute the task metric $g$, Christoffel symbols $\Gamma$, generalized force $\mathcal F_D$, potential gradient $\nabla\Phi$, and block $\bm w^a$ of the weighting pseudometric $w$ as usual. Next, for parent nodes representing intermediate manifolds in the composite task maps, we combine the child quantities as follows.

\textbf{Intermediate manifold nodes:}
\begin{equation}
\setlength{\jot}{5pt}
\begin{gathered}
\bm P = \bm{Jf}^\top \left(\sum \bm P_c\right) \bm{Jf}, \quad \bm B = \bm{Jf}^\top \left(\sum \bm B_c\right) \bm{Jf} \\
\bm A = \bm{Jf}^\top \left(\left(\sum \bm A_c\right) \bm{Jf} + \left(\sum \bm B_c\right)\dot{\bm{Jf}}\right) \\
\bm{\mathcal F} = \bm{Jf}^\top \sum \bm{\mathcal F}_c, \quad \xi^k_{h \ell} = Jf_{qk} Jf_{sh} \left(\sum \xi_c\right)^q_{sr} Jf_{r\ell},
\end{gathered}\hspace{-10pt}
\end{equation}
where the $c$ subscripts indicating quantities from the child nodes, summations are over all child nodes, and $f$ again denotes the map to the node from the immediate parent. Finally at the root node containing the robot configuration manifold, we combine as follows.

\begin{figure}
    \centering
\begin{tikzcd}[row sep=28pt, column sep=10pt]
          &                                                                  &             & M \arrow[d] \arrow[rrd, "f_b"] \arrow[lld, "f_1"'] &           &                                                                  &             \\
          & N_1 \arrow[ld, "{f_{1,2}}"'] \arrow[d] \arrow[rd, "{f_{1,c_1}}"] &             & \ldots                                             &           & N_b \arrow[ld, "{f_{b,1}}"'] \arrow[d] \arrow[rd, "{f_{b,c_b}}"] &             \\
{N_{1,2}} & \ldots                                                           & {N_{1,c_1}} &                                                    & {N_{b,1}} & \ldots                                                           & {N_{b,c_b}}
\end{tikzcd} 
    \caption{Task map tree for an example multi-task PBDS.}
    \label{fig:comp_tree_proof}
\end{figure}
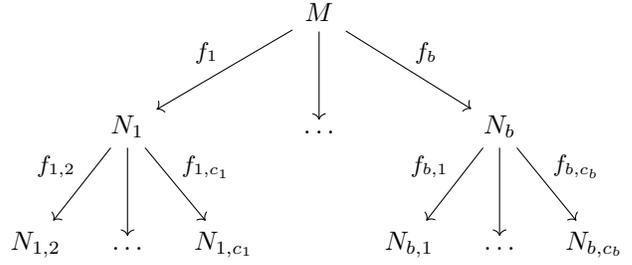

\textbf{Root robot manifold node:}
\begin{equation}
\setlength{\jot}{5pt}
\begin{gathered}
\bm P = \sum \bm P_c, \quad \bm A = \sum \bm A_c, \quad \bm{\mathcal F} = \sum \bm{\mathcal F}_c \\
\xi_{k\ell} = \left(\sum \xi_c\right)^k_{h\ell} \dot\sigma^h.
\end{gathered}
\end{equation}
We can now compute the multi-task PBDS acceleration policy output using the root node quantities as
\begin{equation}
\label{eq:comp_tree_combination}
\ddot{\bm \sigma} = \bm P^\dagger ( \bm{\mathcal F} - (\bm A + \bm \xi) \dot{\bm\sigma}).
\end{equation}
We now demonstrate that this strategy of splitting and reusing computation correctly reconstructs (\ref{eq:local_pbds_policy}).

\begin{lemma}[Computational Tree] Given a multi-task PBDS and a corresponding tree of task maps, computing $P$, $A$, $\mathcal F$, and $\xi$ at the root node gives
\begin{align*}
\bm P^\dagger ( &\bm{\mathcal F} - (\bm A + \bm \xi) \dot{\bm\sigma}) \\
&= \left(\sum_{i}\bm{Jf}_i^\top\bm{w}_i^a\bm{Jf}_i\right)^\dagger\left(\sum_{i} \bm{Jf}_i^\top\bm{w}_i^a\bm{\mathcal{A}}_i\right)
\end{align*}
\end{lemma}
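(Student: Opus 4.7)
The plan is to prove the lemma by structural induction on the task map tree, using the chain rule to relate the local Jacobians at each node to the composite Jacobians $\bm{Jf}_i$ for each task (leaf) $i$. Specifically, for each leaf $i$ and each node $n$ on the unique path from the root to $i$, let $\bm{\mathcal{J}}_{i,n}$ denote the composite Jacobian of the partial map from $\mathrm{parent}(n)$ down to leaf $i$ (passing through $n$); the chain rule gives $\bm{\mathcal{J}}_{i,n} = \bm{\mathcal{J}}_{i,c}\,\bm{Jf}$, where $c$ is the child of $n$ along the path to $i$ and $\bm{Jf}$ is the local Jacobian of the $\mathrm{parent}(n)\to n$ map.

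The inductive invariant I will carry at each non-root node $n$ with subtree-leaf set $\mathcal L(n)$ is
\begin{align*}
\bm P_n = \bm B_n &= \sum_{i \in \mathcal L(n)} \bm{\mathcal{J}}_{i,n}^\top \bm w_i^a \bm{\mathcal{J}}_{i,n}, \\
\bm{\mathcal F}_n &= \sum_{i \in \mathcal L(n)} \bm{\mathcal{J}}_{i,n}^\top \bm w_i^a \bm g_i^{-1}(\bm{\mathcal F}_{D,i} - \nabla \bm\Phi_i),
\end{align*}
together with analogous pulled-back expressions for $\bm A_n$ and $\xi_n$ that accumulate the $\dot{\bm{Jf}}_i$ and Christoffel-symbol contributions, again with all three $\bm{Jf}_i$ slots replaced by $\bm{\mathcal J}_{i,n}$. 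The base case (leaf nodes) follows by direct inspection of the leaf initialization formulas, since at a leaf $i$ we have $\mathcal L(i)=\{i\}$ and $\bm{\mathcal{J}}_{i,i}$ is precisely the local Jacobian from the leaf's parent.

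For the inductive step I verify each recursion in turn. The updates for $\bm P$, $\bm B$, and $\bm{\mathcal F}$ follow immediately: sandwiching the child sum by $\bm{Jf}^\top$ on the left and $\bm{Jf}$ on the right (or just $\bm{Jf}^\top$ for $\bm{\mathcal F}$) promotes each $\bm{\mathcal J}_{i,c}$ to $\bm{\mathcal J}_{i,c}\bm{Jf} = \bm{\mathcal J}_{i,n}$, exactly reproducing the right-hand side of the invariant at $n$. The recursion for $\bm A$ is the main obstacle and is precisely why the redundant accumulator $\bm B$ must be carried alongside $\bm P$: the product rule gives $\dot{\bm{\mathcal J}}_{i,n} = \dot{\bm{\mathcal J}}_{i,c}\,\bm{Jf} + \bm{\mathcal J}_{i,c}\,\dot{\bm{Jf}}$, whose two summands, after applying $\bm{Jf}^\top$ from outside, match exactly the two terms $(\sum\bm A_c)\bm{Jf}$ and $(\sum\bm B_c)\dot{\bm{Jf}}$ in the recursion. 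For $\xi$, the leaf-level Christoffel $\Gamma^\eta_{\alpha\beta}$ sits sandwiched between three factors of $\bm{\mathcal J}_{i,c}$; the index pattern $\xi^k_{h\ell} = Jf_{qk}\,Jf_{sh}\,(\sum\xi_c)^q_{sr}\,Jf_{r\ell}$ inserts one additional local Jacobian into each of the three contracted slots, promoting every $\bm{\mathcal J}_{i,c}$ to $\bm{\mathcal J}_{i,n}$ simultaneously.

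At the root (which has no parent), the recursion simply drops the outer Jacobians, so the invariants collapse into sums over all tasks: $\bm P = \sum_i \bm{Jf}_i^\top \bm w_i^a \bm{Jf}_i$, and analogously for $\bm{\mathcal F}$, $\bm A$, and $\xi$ (with $\xi$ further contracted against $\dot{\bm\sigma}$ at the root). A direct comparison with \eqref{eq:A} then shows $\bm{\mathcal F} - (\bm A + \bm\xi)\dot{\bm\sigma} = \sum_i \bm{Jf}_i^\top \bm w_i^a \bm{\mathcal A}_i$, and combining via \eqref{eq:comp_tree_combination} reproduces \eqref{eq:local_pbds_policy} identically, completing the proof. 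The only genuinely delicate part of the argument is the product-rule accounting in the $\bm A$ recursion and the index contractions in $\xi$; no new differential-geometric machinery beyond the pullback formulas already established is required.
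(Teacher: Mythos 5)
Your proof is correct, but it is organized differently from the paper's. The paper verifies the identity by direct expansion for the representative two-level tree of Fig.~\ref{fig:comp_tree_proof}: it writes out \eqref{eq:local_pbds_policy} for the composite maps $f_{i,j}\circ f_i$ and then unpacks $\bm P$, $\bm A$, $\bm{\mathcal F}$, and $\xi$ at the root, checking term by term that the product rule reproduces $\tfrac{d}{dt}\big(\bm{Jf}_{i,j}\bm{Jf}_i\big)$ and that the triple index contraction in $\xi$ reproduces $\bm\Xi_{i,j}$ evaluated on the composite Jacobian. You instead run a structural induction over an arbitrary-depth tree, carrying at every non-root node $n$ the invariant that $\bm P_n=\bm B_n$, $\bm{\mathcal F}_n$, $\bm A_n$, and $\xi_n$ equal the task-wise sums with $\bm{Jf}_i$ replaced by the partial composite Jacobian $\bm{\mathcal J}_{i,n}$ (for $\bm A_n$ the precise form is $\sum_i \bm{\mathcal J}_{i,n}^\top\bm w_i^a\dot{\bm{\mathcal J}}_{i,n}$, which is exactly what your product-rule step establishes), and then collapsing the invariant at the root. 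The algebraic content is the same in both arguments --- chain-rule promotion of Jacobians, the product rule explaining why the auxiliary accumulator $\bm B$ must be propagated alongside $\bm A$, and the three-slot promotion in $\xi$ --- but your induction literally covers trees of any depth, which the paper's two-level computation only handles by representative example, while the paper's version is easier to check line by line against \eqref{eq:A} and \eqref{eq:local_pbds_policy}. One bookkeeping point to make explicit if you write this out in full: $\dot{\bm{Jf}}$ at an interior node must be interpreted as the derivative along the trajectory, i.e.\ the second-derivative term contracted with the velocity pushed forward to that node, so that $\dot{\bm{\mathcal J}}_{i,n} = \dot{\bm{\mathcal J}}_{i,c}\,\bm{Jf} + \bm{\mathcal J}_{i,c}\,\dot{\bm{Jf}}$ holds; with that convention (which is what the paper means by computing these quantities ``as usual'') your induction closes exactly.
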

\begin{proof}
Consider the two-level task tree in Fig. \ref{fig:comp_tree_proof}, having $b$ intermediate manifold nodes and $c_i$ children for the $i$th intermediate node, for a total of $\sum_{i=1}^b c_i$ leaf nodes and corresponding task maps. For such a multi-task PBDS, we can expand (\ref{eq:local_pbds_policy}) as
\begin{align*}
\ddot{\bm\sigma} = &\left(\sum_{i=1}^b \sum_{j=1}^{c_i} \bm{Jf}_i^\top \bm{Jf}_{i,j}^\top\bm w_{i,j}^a \bm{Jf}_{i,j} \bm{Jf}_i\right)^\dagger\\
&\Bigg(\sum_{i=1}^b \sum_{j=1}^{c_i} \bm{Jf}_i^\top \bm{Jf}_{i,j}^\top\bm w_{i,j}^a \Bigg(\bm g_{i,j}^{-1}\Big(\bm{\mathcal F}_{D,i,j} - \nabla\Phi_{i,j}\Big)\\
&- \Bigg(\frac{d}{dt}\Big(\bm{Jf}_{i,j}\bm{Jf}_i\Big) + \bm\Xi_{i,j}\Bigg)\dot{\bm\sigma}\Bigg)\Bigg),
\end{align*}
where we use the $i$ subscript to denote quantities associated to the intermediate nodes and use the $i,j$ subscript for those associated to the leaf nodes and their corresponding tasks.

Now unpacking each of the terms in (\ref{eq:comp_tree_combination}) at the root robot configuration manifold node, we have
\begingroup
\allowdisplaybreaks
\begin{flalign*}
\bm P &= \sum_{i=1}^b \bm P_i = \sum_{i=1}^b \bm{Jf}_i^\top \bigg(\sum_{j=1}^{c_i}\bm P_{i,j}\bigg) \bm{Jf}_i&\\
&= \sum_{i=1}^b \sum_{j=1}^{c_i} \bm{Jf}_i^\top \bm{Jf}_{i,j}^\top \bm w_{i,j}^a \bm{Jf}_{i,j} \bm{Jf}_i,&
\end{flalign*}
\begin{align*}
&\bm A = \sum_{i=1}^b \bm A_i\\
&= \sum_{i=1}^b \bm{Jf}_i^\top \Bigg(\bigg(\sum_{j=1}^{c_i} \bm A_{i,j}\bigg)\dot{\bm{Jf}_i} + \bigg(\sum_{j=1}^{c_i} \bm B_{i,j}\bigg)\bm{Jf}_i\Bigg)\\
&= \sum_{i=1}^b \sum_{j=1}^{c_i} \bm{Jf}_i^\top \left(\bm{Jf}_{i,j}^\top \bm w_{i,j}^a \bm{Jf}_{i,j} \dot{\bm{Jf}_i} + \bm{Jf}_i^\top \bm w_{i,j}^a \dot{\bm{Jf}_{i,j}} \bm{Jf}_i\right)\\
&= \sum_{i=1}^b \sum_{j=1}^{c_i} \bm{Jf}_i^\top \bm{Jf}_{i,j}^\top \bm w_{i,j}^a \frac{d}{dt}\Big(\bm{Jf}_{i,j} \bm{Jf}_i\Big),
\end{align*}
\begin{flalign*}
\bm{\mathcal F} &= \sum_{i=1}^b \bm{\mathcal F}_i = \sum_{i=1}^b \bm{Jf}_i^\top \bigg(\sum_{j=1}^{c_i} \mathcal F_{i,j}\bigg)&\\
&= \sum_{i=1}^b \sum_{j=1}^{c_i} \bm{Jf}_i^\top \bm{Jf}_{i,j}^\top \bm w_{i,j}^a \bm g_{i,j}^{-1}\big(\bm{\mathcal F}_{D,i,j} - \nabla \Phi_{i,j}\big),&
\end{flalign*}
\begin{flalign*}
\xi_{k\ell} &= \bigg(\sum_{i=1}^b \xi_i\bigg)^k_{h\ell}\dot{\sigma}^h &\\
&= \sum_{i=1}^b (Jf_i)_{qk} (Jf_i)_{sh} \bigg(\sum_{j=1}^{c_i} \xi_{i,j}\bigg)^q_{sr} (Jf_i)_{r\ell} \dot\sigma^h &\\
&= \sum_{i=1}^b \sum_{j=1}^{c_i} (Jf_i)_{qk} (Jf_{i,j})_{\tau q} (w^a_{ij})_{\tau\eta} (Jf_i)_{sh} (Jf_{i,j})_{\alpha s}&\\
&\quad (\Gamma_{i,j})^\eta_{\alpha\beta} (Jf_{i,j})_{\beta r} (Jf_i)_{r\ell} \dot\sigma^h&\\
&= \sum_{i=1}^b \sum_{j=1}^{c_i} \big(\bm{Jf}_i^\top \bm{Jf}_{i,j}^\top \bm w_{i,j}^a\big)_{k\eta} (Jf_{i,j} Jf_i)_{\alpha h}&\\
&\quad(\Gamma_{i,j})^\eta_{\alpha\beta} (Jf_{i,j} Jf_i)_{\beta\ell} \dot\sigma^h&\\
&= \sum_{i=1}^b \sum_{j=1}^{c_i} \big(\bm{Jf}_i^\top \bm{Jf}_{i,j}^\top \bm w_{i,j}^a\big)_{k\eta} J(f_{i,j} \circ f_i)_{\alpha h}&\\
&\quad(\Gamma_{i,j})^\eta_{\alpha\beta} J(f_{i,j} \circ f_i)_{\beta\ell} \dot\sigma^h &\\
&= \sum_{i=1}^b \sum_{j=1}^{c_i} \big(\bm{Jf}_i^\top \bm{Jf}_{i,j}^\top \bm w_{i,j}^a\big)_{k\eta} (\Xi_{i,j})_{\eta h} \dot\sigma^h &\\
&= \sum_{i=1}^b \sum_{j=1}^{c_i} \bm{Jf}_i^\top \bm{Jf}_{i,j}^\top \bm w_{i,j}^a \bm\Xi_{i,j} \dot{\bm \sigma}.&
\end{flalign*}
Combining these in (\ref{eq:comp_tree_combination}), we see that we indeed recover the correct robot acceleration policy from (\ref{eq:local_pbds_policy}).
\end{proof}
\endgroup


\section{Mathematical Details for Pullback Bundle Dynamical Systems}

\label{appxsec:single_pbds}
In this section, we give a detailed derivation of Pullback Bundle Dynamical Systems and the extraction of multi-task policies, showing that each of the key objects are geometrically well-defined.

Let $M$ and $N$ be smooth $m$- and $n$-dimensional robot configuration and task manifolds, having canonical projections $\pi_M$ and $\pi_N$, respectively. Let $f : M \longrightarrow N$ be a smooth task map and let $\nabla$ be the Levi-Civita connection in $TN$ corresponding to metric $g$ on $N$ with Christoffel symbols
\begin{equation}
\Gamma^k_{ij} = \frac{1}{2}g^{kh}(\partial_i g_{jh} + \partial_j g_{ih} - \partial_h g_{ij}).
\end{equation}
We form a pullback bundle $f^*TN$ as defined in (\ref{eq:pullback_bundle}), which is a smooth $n$-vector bundle over $M$ and is itself an $(m+n)$-dimensional manifold.

\subsection{Constructions on the Pullback Bundle}
\label{subsec:constructions_pb}
Note that since $f^*TN$ is not a tangent bundle, we must reconstruct all of the standard objects and operators normally used for defining mechanical systems on tangent bundles. We first denote the pullback bundle projection as $f^*\pi_N$, and for convenience we define a pullback differential
\begin{equation}
f^*df : TM \longrightarrow f^*TN : (p,v) \mapsto (p, \pi_2(df_p(v)))
\end{equation}
and the pullback of the identity map on $TN$
\begin{equation}
f^*\textrm{Id}_{TN} : f^*TN \longrightarrow TN :(p,v) \mapsto (f(p),v),
\end{equation}
both of which are smooth and globally well-defined.

We now construct a global pullback connection on $f^*TN$:

\begin{lemma}[Pullback Connection]
\label{lem:pullback_connection}
For every $p \in M$, choose Christoffel symbols defined locally as
$$
f^* \Gamma^k_{ij}(p) = \frac{\partial f^\ell}{\partial x^i}(p) \Gamma^k_{\ell j}(f(p)).
$$
These functions are globally extendable and form a global connection (which we call the pullback connection),
$$
f^*\nabla : \Gamma(TM) \times \Gamma(f^*TN) \longrightarrow \Gamma(f^*TN).
$$
\end{lemma}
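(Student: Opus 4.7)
The plan is to construct $f^*\nabla$ abstractly from the pullback structure, and then verify that in local coordinates its connection coefficients agree with the stated formula, so that the local formula automatically inherits global well-definedness. Concretely, over a chart $(U, x^i)$ on $M$ with $f(U)$ contained in a chart $(V, y^k)$ on $N$, any local section $t \in \Gamma(f^*TN)$ admits a unique expansion $t = t^k\, f^*\partial_k$ in the pullback frame $f^*\partial_k$. I would then define
\begin{equation*}
f^*\nabla_X t \;\triangleq\; X(t^k)\, f^*\partial_k \;+\; t^k\, f^*\!\bigl(\nabla_{df(X)}\partial_k\bigr),
\end{equation*}
where for each $p\in U$ the term $\nabla_{df(X)}\partial_k$ is evaluated at $f(p)\in V$ and identified with its pullback. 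Unpacking with $df(X) = X^i (\partial f^\ell/\partial x^i)\partial_\ell$ and $\nabla_{\partial_\ell}\partial_k = \Gamma^j_{\ell k}\partial_j$, and reading coefficients against $f^*\nabla_{\partial/\partial x^i}(f^*\partial_j) = (f^*\Gamma^k_{ij})\, f^*\partial_k$, I would recover exactly the formula in the statement.

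The key technical step, and the main obstacle, is checking chart-independence on the $N$-side. I would take overlapping charts $(V, y^k)$ and $(\tilde V, \tilde y^k)$ on $N$ and substitute the frame transition $\partial_k = (\partial y^\ell/\partial \tilde y^k)\, \tilde\partial_\ell$ into the definition. The coefficient transformation $t^k \mapsto \tilde t^k$ contributes a first-derivative piece when differentiated by $X$, while the classical Christoffel transformation law for $\nabla$ on $TN$ produces both a matching first-derivative piece and a second-derivative piece. The plan is to verify that these cancel precisely, which is the standard consistency computation underlying the existence of pullback connections on vector bundles, specialized here to $E = TN$. Chart-independence on the $M$-side is much more direct, since the stated coefficients depend on the $M$-coordinates only through $\partial f^\ell/\partial x^i$, which transforms as a covariant index as required.

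With well-definedness in hand, I would verify the connection axioms directly from the definition: $C^\infty(M)$-linearity in $X$ follows from $C^\infty$-linearity of $df$ and of the two summands in $X$; $\mathbb{R}$-linearity in $t$ is immediate; and the Leibniz rule
\begin{equation*}
f^*\nabla_X(\phi\, t) \;=\; X(\phi)\, t + \phi\, f^*\nabla_X t
\end{equation*}
follows from the ordinary product rule applied to $X(\phi\, t^k)$. I expect the chart-transition computation on $N$ to be the only nontrivial obstacle; once handled, the rest is bookkeeping. Compatibility of $f^*\nabla$ with the pullback metric $f^*g$, which is mentioned in the paragraph preceding the lemma but is not part of this statement, would then follow from a separate short calculation that propagates $\nabla g = 0$ through the pullback, and is not needed for the present claim.
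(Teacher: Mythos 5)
Your proposal is correct, and it hinges on the same essential computation as the paper's proof, but it is organized differently. The paper works ``coefficients-first'': it takes the locally defined symbols $f^*\Gamma^k_{ij}$ and verifies by direct calculation that they satisfy the chart/frame transition formula for Christoffel symbols of a connection on a vector bundle, with the pullback frame transition matrix $B_{ij} = A_{ij}\circ f$ inherited from the frame transition $A_{ij}$ on $TN$; global extendability then follows from that standard criterion, and no connection axioms need to be checked separately. You instead work ``operator-first'': you define $f^*\nabla_X t = X(t^k)\,f^*\partial_k + t^k\, f^*\bigl(\nabla_{df(X)}\partial_k\bigr)$, check that this is independent of the chart on $N$ (the cancellation between the derivative of the frame-transition Jacobian and the inhomogeneous term in the Christoffel transformation law --- which is exactly the computation the paper performs at the level of symbols), verify the connection axioms by hand, and then read off the local coefficients to match the stated formula. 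Your route is more self-contained, since it does not presuppose the vector-bundle transition criterion, at the cost of the extra (routine) axiom checks; the paper's route is shorter once that criterion is taken as known, and its use of a general frame transition $A_{ij}$ makes the invariance statement slightly more general than the coordinate-induced frame changes you consider, though for $TN$ with coordinate frames your check suffices. Both correctly identify the $N$-side transition computation as the only nontrivial step, and your observation that $M$-side invariance is immediate because $\partial f^\ell/\partial x^i$ transforms covariantly matches the structure of the paper's calculation.
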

\begin{proof}
The chosen Christoffel symbols are locally smooth. To show that they are global, we must check whether they satisfy the chart transition formula for Christoffel symbols on vector bundles.

Consider two charts at $p \in M$ denoted by $C_M$ and $\tilde C_M$ and two charts at $f(p) \in N$ denoted by $C_N$ and $\tilde C_N$. Then $TM$ and $TN$ have corresponding local frames $(\frac{\partial}{\partial x^i})$, $(\frac{\partial}{\partial \tilde x^i})$ and $(\frac{\partial}{\partial y^i})$, $(\frac{\partial}{\partial \tilde y^i})$, respectively. Suppose we also have a vector bundle $E$ over $N$ (note that in our case we have $E = TN$). Given local frames $(E_i)$ and $(\tilde E_i)$ for $E$,  there exists a smooth, non-singular matrix of functions $(A_{ij})$ such that locally $\tilde E_i = A_{ij} E_i$.

Now given a connection $\nabla : \Gamma(TN) \times \Gamma(E) \longrightarrow \Gamma(E)$, the related Christoffel symbols can be denoted by $\Gamma^k_{ij}$ and $\tilde\Gamma^k_{ij}$. The chart transition formula for these Christoffel symbols is
\begin{equation}
\tilde\Gamma^k_{ij} = A^{pk} \frac{\partial y^q}{\partial \tilde y^i} \left( A_{jr} \Gamma^p_{qr} + \frac{\partial A_{jp}}{\partial y^q}\right).
\end{equation}
We must now show a similar formula holds for $f^*\Gamma^k_{ij}$. Given the local frames for $E$ above, we can construct local frames for the pullback bundle $f^*E$ as $(f^*E_i) \triangleq (E_i \circ f)$ and $(f^*\tilde E_i) \triangleq (\tilde E_i \circ f)$. Then for $f^*\tilde E_i = B_{ij} f^*E_j$, it is easily seen that $B_{ij} = A_{ij}\circ f$. We can now compute
\begingroup
\allowdisplaybreaks
\begin{align*}
    f^*\tilde\Gamma^k_{ij} &= \frac{\partial\tilde f^\ell}{\partial \tilde x^i}(p) \tilde\Gamma^k_{\ell j}(f(p))\\
    &= \frac{\partial y^q}{\partial \tilde y^\ell}(f(p)) \frac{\partial\tilde f^\ell}{\partial \tilde x^i}(p) A^{pk}(f(p))\\
    &\quad\left(A_{jr}(f(p)) \Gamma^p_{qr}(f(p)) + \frac{\partial A_{jp}}{\partial y^q}(f(p))\right)\\
    &=\frac{\partial x^\ell}{\partial \tilde x^i}(p) \frac{\partial f^q}{\partial x^\ell}(p) B^{pk}(p) \\
    &\quad\left(B_{jr}(p) \Gamma^p_{qr}(f(p)) + \frac{\partial A_{jp}}{\partial y^q}(f(p))\right)\\
    &=\frac{\partial x^\ell}{\partial \tilde x^i}(p) B^{pk}(p)\\
    &\quad\left(B_{jr}(p) \frac{\partial f^q}{\partial x^\ell}(p) \Gamma^p_{qr}(f(p)) + \frac{\partial A_{jp}}{\partial y^q}(f(p)) \frac{\partial f^q}{\partial x^\ell}(p)\right)\\
    &=\frac{\partial x^\ell}{\partial \tilde x^i}(p) B^{pk}(p) \left(B_{jr}(p) f^*\Gamma^p_{\ell r}(p) + \frac{\partial B_{jp}}{\partial x^\ell}(p)\right).
\end{align*}
\endgroup
We therefore find that the Christoffel symbols $f^*\Gamma^k_{ij}$ can be extended globally and the result follows.
\end{proof}

This naturally leads to a geometric acceleration operator for curves with velocity in $f^*TN$. Let $\sigma : I \longrightarrow M$ be a smooth curve, and denote the space of pullback bundle vector fields over $\sigma$ as
\begin{align}
f^* \mathfrak X(\sigma) \triangleq \{V : I \longrightarrow f^*TN \; | \; &V \textrm{ is smooth,}\\
&V(t) \in f^*E_{\sigma(t)}, \forall t\in I\}.\nonumber
\end{align}
As a classical result, for every curve $\sigma$ in $M$, the pullback connection $f^*\nabla$ defines a unique acceleration operator 
$$
f^*D_\sigma: f^* \mathfrak X(\sigma) \longrightarrow f^* \mathfrak X(\sigma),
$$
which locally takes the form
\begin{equation}
    \label{eq:local_pullback_accel}
    f^*D_\sigma V(t) = \left(\dot V^k(t) + \dot \sigma^i(t) V^j(t) f^*\Gamma^k_{ij}(\sigma(t))\right) f^*\partial_k |_{\sigma(t)},
\end{equation}
where $(f^*\partial_i)$ is the frame for $f^*TN$ defined by 
\begin{equation}
f^*\partial_i(p) = (p, \pi_2(\partial_i(f(p))).
\end{equation}

Now we construct a pullback metric on $f^*TN$. Note that Lyapunov-type results for classical mechanical systems rely on the fact that the Levi-Civita connection is \emph{compatible} with the related Riemannian metric. Indeed, this allows one to differentiate the norm of some energy, obtaining such variations as a function of the evolutionary equation of the associated mechanical system. Thus, it is critical to reproduce such a property in our framework to achieve stability.

\begin{lemma} \label{ref_LemmaCompat}
Let $g$ be a Riemannian metric on $N$ which is compatible with connection $\nabla$. Then the pullback metric
\begin{align*}
    f^*g : \; &f^*TN \times f^*TN \longrightarrow \mathbb{R} \\
    &((p,v_1), (p,v_2)) \mapsto g_{f(p)}(v_1,v_2)
\end{align*}
is compatible with the pullback connection $f^*\nabla$.
\end{lemma}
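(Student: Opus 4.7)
The plan is to verify the compatibility identity
$$
X\bigl((f^*g)(V,W)\bigr) = (f^*g)(f^*\nabla_X V, W) + (f^*g)(V, f^*\nabla_X W)
$$
for all $X \in \Gamma(TM)$ and $V,W \in \Gamma(f^*TN)$, using the standard reduction to local computations. By $C^\infty(M)$-linearity in $X$ and the Leibniz property of any connection in its vertical slot, it suffices to check the identity when $X = \partial/\partial x^k$ is a coordinate vector field on $M$ and $V = f^*\partial_i$, $W = f^*\partial_j$ are the induced pullback frame sections on $f^*TN$ defined in the lemma. So the bulk of the argument is a short, coordinate-level calculation that matches up the chain rule with the compatibility of $\nabla$ downstairs on $N$.

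First I would compute the left-hand side. By definition of the pullback metric we have $(f^*g)(f^*\partial_i, f^*\partial_j)(p) = g_{ij}(f(p))$, so the chain rule gives
$$
\partial_k \bigl[(f^*g)(f^*\partial_i, f^*\partial_j)\bigr](p) = \frac{\partial f^\ell}{\partial x^k}(p)\, (\partial_\ell g_{ij})(f(p)).
$$
Next I would expand the right-hand side using the Christoffel symbols from Lemma \ref{lem:pullback_connection}, $f^*\Gamma^r_{ki}(p) = \frac{\partial f^\ell}{\partial x^k}(p)\,\Gamma^r_{\ell i}(f(p))$, so that
$$
(f^*g)\bigl(f^*\nabla_{\partial_k} f^*\partial_i,\, f^*\partial_j\bigr)(p) = \frac{\partial f^\ell}{\partial x^k}(p)\, \Gamma^r_{\ell i}(f(p))\, g_{rj}(f(p)),
$$
and analogously for the second term with $i$ and $j$ swapped.

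Finally I would invoke the compatibility of $\nabla$ with $g$ on $N$, which in local coordinates reads $\partial_\ell g_{ij} = \Gamma^r_{\ell i} g_{rj} + \Gamma^r_{\ell j} g_{ir}$. Multiplying this identity by $\frac{\partial f^\ell}{\partial x^k}(p)$ and evaluating at $f(p)$ shows directly that LHS equals the sum of the two RHS terms. The identity then extends to arbitrary smooth $X, V, W$ by the usual tensorial/Leibniz argument. The main obstacle is notational rather than conceptual: one must be careful to distinguish the frame $f^*\partial_i$ on $f^*TN$ from the frame $\partial_i$ on $TN$, and to track which indices are differentiated on $M$ versus $N$; once this bookkeeping is set up, the compatibility of $\nabla$ transports cleanly through the pullback because $f^*\Gamma$ is built from $\Gamma$ precisely by contraction with $\partial f^\ell/\partial x^k$, matching the chain-rule factor that appears on the left.
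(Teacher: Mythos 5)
Your proof is correct, but it takes a genuinely different route from the paper's. You verify the tensorial compatibility identity $X\bigl((f^*g)(V,W)\bigr) = (f^*g)(f^*\nabla_X V, W) + (f^*g)(V, f^*\nabla_X W)$ by a direct frame computation: reduce to $X=\partial/\partial x^k$, $V=f^*\partial_i$, $W=f^*\partial_j$, apply the chain rule on the left, and contract the coordinate form $\partial_\ell g_{ij} = \Gamma^r_{\ell i}g_{rj} + \Gamma^r_{\ell j}g_{ir}$ of metric compatibility on $N$ against $\partial f^\ell/\partial x^k$ on the right; your use of $f^*\Gamma^r_{ki}(p) = \frac{\partial f^\ell}{\partial x^k}(p)\,\Gamma^r_{\ell i}(f(p))$ is consistent with Lemma \ref{lem:pullback_connection} and the bookkeeping checks out. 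The paper instead works with the curve-based form of compatibility: given a curve $\sigma$ in $M$ and $V \in f^*\mathfrak X(\sigma)$, it sets $\rho = f\circ\sigma$ and $W = f^*\mathrm{Id}_{TN}(V)$, shows by a local Christoffel-symbol computation that $D_\rho W = f^*\mathrm{Id}_{TN}(f^*D_\sigma V)$, and then transports the identity $\frac{d}{dt}\,g_{\rho(t)}(W,W) = 2\,g_{\rho(t)}(D_\rho W, W)$ from $N$ up to the pullback bundle. Your approach is more elementary in that it is a single-chart verification and yields the stronger polarized statement for arbitrary sections $V,W$ and directions $X$; the paper's approach delivers exactly the along-curves identity $\frac{d}{dt}\,f^*g_{\sigma(t)}(V,V) = 2\,f^*g_{\sigma(t)}(f^*D_\sigma V, V)$ that is later invoked in the Lyapunov computation, without ever writing the coordinate identity for $\nabla g = 0$. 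If you keep your version, add the short standard remark that tensorial compatibility implies the product rule for the induced operator $f^*D_\sigma$ along curves (provable by the same frame computation, since $f^*D_\sigma$ is built from the same pullback Christoffel symbols), because that is the form in which the lemma is actually consumed in the stability proof.
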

\begin{proof}
Since $g$ and $\nabla$ are compatible, we know they satisfy the compatibility condition
\begin{equation}
    \label{eq:normal_compat_condition}
    \frac{d}{dt} g_{\rho(t)}(X(t), X(t)) = 2g_{\rho(t)}(D_\rho X(t), X(t))
\end{equation}
for any smooth vector field $X$ along any smooth curve $\rho$ in $N$. To show the same for $f^*g$ and $f^*\nabla$, let \mbox{$\sigma : I \longrightarrow M$} be a smooth curve and let $V \in f^*\mathfrak X(\sigma)$. Then \mbox{$\rho : I \longrightarrow N :$} $t \mapsto f(\sigma(t))$ and $W : I \longrightarrow TN : t \mapsto f^*\textrm{Id}_{TN}(V(t))$ are well-defined and smooth. Now using (\ref{eq:normal_compat_condition}) we obtain
\begin{align}
    \label{eq:metric_compat_condition}
    \frac{d}{dt}f^*&g_{\sigma(t)}(V(t), V(t)) \nonumber\vspace{-5pt}\\
    &= \frac{d}{dt}g_{f(\sigma(t))}\left(f^*\textrm{Id}_{TN}(V(t)), f^*\textrm{Id}_{TN}(V(t))\right) \nonumber\\
    &= \frac{d}{dt}g_{\rho(t)}\left(W(t),W(t)\right) \nonumber\\
    &= 2 g_{\rho(t)}\left(D_\rho W(t), W(t)\right).
\end{align}
Applying (\ref{eq:local_pullback_accel}) adapted for $D_\sigma$ gives
\begin{align*}
    D_\rho &W(t) = \left(\dot W^k(t) + \dot\rho^\ell(t) W^j(t) \Gamma^k_{\ell j}(\rho(t))  \right) \partial_k |_{\rho(t)}\vspace{-5pt}\\
    &= \left(\dot W^k(t) + \frac{\partial f^\ell}{\partial x^i}(\sigma(t)) \dot\sigma^i(t) W^j(t) \Gamma^k_{\ell j}(\rho(t))  \right) \partial_k |_{\rho(t)}\vspace{-5pt}\\
    &= \left(\dot W^k(t) + \dot\sigma^i(t) W^j(t) f^*\Gamma^k_{ij}(\sigma(t))  \right) \partial_k |_{\rho(t)}\vspace{-5pt}\\
    &= f^*\textrm{Id}_{TN}\left(\left(\dot V^k(t) + \dot\sigma^i(t) V^j(t) f^*\Gamma^k_{ij}(\sigma(t)) \right)  \partial_k |_{\sigma
    (t)}\right) \\
    &= f^*\textrm{Id}_{TN}(f^* D_\sigma V(t)).
\end{align*}
Thus continuing (\ref{eq:metric_compat_condition}) finally provides \vspace{-3pt}
\begin{flalign*}
    &\qquad\quad\frac{d}{dt}f^*g_{\sigma(t)}(V(t), V(t))&&
\end{flalign*}
\begin{align*}
    &= 2 g_{\rho(t)}\left(f^*\textrm{Id}_{TN}(f^* D_\sigma V(t)), f^*\textrm{Id}_{TN}(V(t))\right) \\
    &= 2 f^*g_{\sigma(t)}(f^* D_\sigma V(t), V(t)). \qedhere
\end{align*}
\end{proof}

Next we develop the sharp operator on $f^*TN$, useful for handling forces. First we define the dual pullback bundle
\begin{equation}
f^*T^*N = \coprod_{p \in M} (\pi^*_N)^{-1}(f(p)),
\end{equation}
where $\pi^*_N$ is the canonical projection for the cotangent bundle $T^*N$. We can now define the flat operator
\begin{equation}
\flat : f^*TN \longrightarrow f^*T^*N : (p,v) \mapsto f^*g_p(v,\cdot),
\end{equation}
which is easily seen to be a smooth bundle isomorphism. The sharp operator is naturally defined to be the inverse of the flat operator, i.e.,
\begin{equation}
\sharp : f^*T^*N \longrightarrow f^*TN : (p,\omega) \mapsto (p,\omega)^\flat.
\end{equation}
In particular, locally we have $(\omega(p)^\sharp)^i = g^{ij}(f(p)) \omega_j(p).$

Now consider dissipative forces $\mathcal F_D : TN \longrightarrow T^*N$, which we define as satisfying the dissipative property $\langle \mathcal F_D(p,v), v\rangle \leq 0$ for all $(p,v) \in TN$, where $\langle \cdot, \cdot \rangle$ is the natural pairing between a covector and a vector. For convenience, we associate to $\mathcal F_D$ corresponding pullback dissipative forces as
\begin{equation}
\begin{aligned}
f^*\mathcal F_D : f^*TN &\longrightarrow f^*T^*N\\
(p,v) &\mapsto (p,\pi_2(\mathcal F_D(f(p),v)),
\end{aligned}
\end{equation}
such that we have $f^*\mathcal F_D(\cdot)^\sharp : f^*TN \longrightarrow f^*TN$. From this definition, it follows that we have $\langle f^*\mathcal F_D(p,v), v\rangle \leq 0$ for all $(p,v) \in f^*TN$ so that these pullback forces inherit the dissipative property. Note that the dissipative forces $\mathcal F_D$ can also be time-dependent and similar results to those shown in this work will hold, but we keep them time-independent for simplicity.

The last object we must construct on $f^*TN$ is a gradient operator for handling potentials. Given any smooth function $\Phi : N \longrightarrow \mathbb R$, we locally define the smooth map
$$
B_\Phi : M \longrightarrow f^*T^*N : p \mapsto \frac{\partial \Phi}{\partial y^i}(f(p))f^*dy^i |_{p},
$$
where $(f^*dy^i)$ is the coframe for $f^*T^*N$ defined by $f^*dy^i(p) = (p, \pi_2(dy^i(f(p)))$. This map is intentionally designed such that for every $(p,v) \in TM$ we have
\begin{align}
    B_\Phi(p)&(f^*df_p(v)) = \frac{\partial \Phi}{\partial y^i}(f(p)) f^*dy^i |_{p}(f^*df_p(v)))\nonumber\\
    &=\frac{\partial \Phi}{\partial y^i}(f(p)) f^*dy^i |_{p}\left(v^j\frac{\partial f^k}{\partial x^j}(p) \left.\frac{\partial}{\partial y^k} \right\vert_{f(p)}\right)\nonumber\\
    &= v^j \frac{\partial f^i}{\partial x^j}(p) \frac{\partial \Phi}{\partial y^i}(f(p)) = v^j \frac{\partial (\Phi \circ f)}{\partial y^i}(p)\nonumber\\
    \label{eq:B_Phi}
    &= d(\Phi \circ f)(p,v),
\end{align}
which is crucial for our stability analysis. Now we can define the pullback gradient operator such that
\begin{equation}
f^*\textrm{grad}\,\Phi : M \longrightarrow f^*TN : p \mapsto B_\Phi(p)^\sharp.
\end{equation}
To denote the total acceleration contributed by the pullback forces (using the sharp operator to indicate converting from forces to accelerations), we can also define the shorthands
\begin{equation}
\begin{aligned}
f^*\mathcal F(\cdot)^\sharp : f^*TN &\longrightarrow f^*TN\\
(p,v) &\mapsto f^*\mathcal F_D(p,v)^\sharp - f^*\textrm{grad}\,\Phi(p).
\end{aligned}
\end{equation}


\subsection{Local Pullback Bundle Dynamical Systems}
\label{appxsubsec:local_pbds}
We finally come to the definition of an important building block for Pullback Bundle Dynamical Systems (PBDS) called a local PBDS.
\begin{definition}[Local Pullback Bundle Dynamical System] Let $f : M \longrightarrow N$ be a smooth task map for a Riemannian task manifold $(N,g)$, let $\Phi : N \longrightarrow \mathbb R_+$ be a smooth potential function, and let $\mathcal F_D : TN \longrightarrow T^*N$ be dissipative forces. Then for each $(p,v)\in M$, we can choose a curve $\alpha_{(p,v)} : (-\varepsilon, \varepsilon) \longrightarrow M$ resulting in $\gamma_{\alpha_{(p,v)}} : (-\varepsilon, \varepsilon) \longrightarrow f^*TN$ for some $\varepsilon > 0$ such that $(f,g,\Phi,\mathcal F_D, \alpha_{(p,v)})$ forms a local Pullback Bundle Dynamical System (PBDS) satisfying 
\begin{equation*}
\textrm{PBDS}_{\alpha_{(p,v)}} \hspace{-5pt}\ \begin{cases}
    f^* D_{\alpha_{(p,v)}} \gamma_{\alpha_{(p,v)}}(s) = f^*\mathcal{F}(\gamma_{\alpha_{(p,v)}}(s))^\sharp\\
    \gamma_{\alpha_{(p,v)}}(0) = f^*df_{\alpha_{(p,v)}(0)}(\alpha_{(p,v)}'(0)),\\ \alpha_{(p,v)}'(0) = (p,v).
\end{cases}
\end{equation*}
\end{definition}

The reason for this local PBDS definition is to ensure that $\textrm{PBDS}$ is well-posed at $t = 0$ (i.e. where a curve $\alpha_{\sigma'(0)}$ is required to define $f^*D_{\alpha_{\sigma'(0)}}$) and in cases where the curve $\alpha_{\sigma'(t)}$ defining a valid $\textrm{PBDS}_{\alpha_{\sigma'(t)}}$ for some $t \in [0, \infty)$ is not unique. For example, the latter may occur when there is redundancy, i.e., $m > n$, which is often the case in practice.

Now we must show that a local PBDS exists at each point in $TM$. In particular, we must show that for any $(p,v) \in TM$, there exists a curve $\alpha_{(p,v)}$ satisfying $\textrm{PBDS}_{\alpha_{(p,v)}}$.  If $(U,\varphi)$ is a local chart for $M$ centered at $p$, let $\alpha_{(p,v)}(s) \triangleq \varphi^{-1}(s\bm v)$, using local coordinates $\bm v \in \mathbb R^m$. This curve is well-defined and smooth for small times around zero, so the desired curve $\alpha_{(p,v)}$ always exists, meaning the corresponding local PBDS always exists.

To begin the process of extracting a robot motion policy, we can define a map giving the desired pullback task accelerations from local PBDSs corresponding to each point in the robot configuration tangent bundle $TM$:
\begin{equation}
\label{eq:appx_pullback_accelerations}
G : TM \longrightarrow T(f^*TN) : (p,v) \mapsto \dot\gamma_{\alpha_{(p,v)}}(0).
\end{equation}
For this map to be well-defined, we must show that it does not depend on the choice of curves $\alpha_{(p,v)}$. Let $\alpha_{(p,v)}$ and $\beta_{(p,v)}$ be two curves in $M$ satisfying $\textrm{PBDS}_{\alpha_{(p,v)}}$ and $\textrm{PBDS}_{\beta_{(p,v)}}$, respectively. These result in corresponding curves $\gamma_{\alpha_{(p,v)}}$ and $\gamma_{\beta_{(p,v)}}$ in $f^*TN$ as provided in the local PBDS definition, which exist as a consequence of standard Cauchy-Lipschitz arguments. In particular, since every quantity defining a local PBDS is smooth, these curves are at least $C^1$, allowing us to compute $\dot\gamma_{\alpha_{(p,v)}}$ and $\dot\gamma_{\beta_{(p,v)}}$ pointwise. Thus using (\ref{eq:local_pullback_accel}) we can compute
\begin{align*}
    \dot\gamma_{\alpha_{(p,v)}}^k(0) =\; &(f^*\mathcal F(\gamma_{\alpha_{(p,v)}}(0))^\sharp)^k\\
    &- \dot\alpha_{(p,v)}^i(0) \gamma_{\alpha_{(p,v)}}^j(0) f^*\Gamma^k_{ij}(\alpha_{(p,v)}(0))\\
    =\; &(f^*\mathcal F(f^*df(\dot\alpha_{(p,v)}(0)))^\sharp)^k\\
    &- \dot\alpha_{(p,v)}^i(0) (f^*df(\dot\alpha_{(p,v)}(0)))^j f^*\Gamma^k_{ij}(\alpha_{(p,v)}(0))\\
    =\; &(f^*\mathcal F(\gamma_{\beta_{(p,v)}}(0))^\sharp)^k\\
    &- \dot\beta_{(p,v)}^i(0) \gamma_{\beta_{(p,v)}}^j(0) f^*\Gamma^k_{ij}(\beta_{(p,v)}(0))\\
    =\; &\dot\gamma_{\beta_{(p,v)}}^k(0).
\end{align*}
Therefore we have $\dot\gamma_{\alpha_{(p,v)}}(0) = \dot\gamma_{\beta_{(p,v)}}(0)$ by the chart invariance of the derivative operator. This shows that $G$ is well-defined and, in particular, smooth.

Given this result, we will adopt the shorthand $\gamma'_{v_p}(0) \triangleq \gamma'_{\alpha_{(p,v)}}(0)$ for $(p,v) \in TM$, knowing that suitable choices of $\alpha_{(p,v)}$ exist.

\subsection{Multi-Task Pullback Bundle Dynamical Systems}
\label{appxsubsec:multi_pbds}
Next, we continue with a detailed derivation for the extraction of multi-task PBDS policies. For $K$ tasks, consider task maps $\{f_i : M \rightarrow N_i\}_{i = 1,\ldots,K}$. We equip every $n_i$-dimensional task manifold $N_i$ with a Riemannian metric $g_i$, a potential function $\Phi_i$, and dissipative forces $\mathcal F_{D,i}$.

We can associate to every robot position and velocity combination $v_p \in TM$ a smooth section $\gamma_{v_p,i} : [0,1] \longrightarrow f^*TN_i$ satisfying the single-task dynamics of the PBDS specified by $(f_i, g_i, \Phi_i, \mathcal F_{D,i})$. Extending the construction of (\ref{eq:appx_pullback_accelerations}) to an operator $G_i$ for each task, we collect the vectors $\gamma'_{{v_p}, i}(0) = ((p,v),(\dot\gamma^v_{{v_p}, i}(0),\dot\gamma^a_{{v_p}, i}(0)))$ into a globally well-defined and smooth operator
\begin{align}
S_i : TM &\longrightarrow T\big(f_i^*T N_i\big) \\
(p,v) &\mapsto \pi_\textrm{VB}(G_i(p,v)) = \pi_\textrm{VB}(\dot\gamma_{v_p,i}(0))\nonumber\\
&\hspace{73pt}= ((f_i^*df_i)_p(v), (0, \dot \gamma^a_{{v_p}, i}(0))),\nonumber
\end{align}
where $\pi_\textrm{VB}$ denotes the globally well-defined projection onto the vertical bundle. Recall that the vertical bundle of a bundle is the subbundle formed by the kernel of the differential of its standard projection, in this case $\textrm{ker}(\pi_{f_i^*T N_i})$. In particular, locally we have $\pi_\textrm{VB}(b^i \partial^v_i + a^i \partial^a_i) = a^i \partial^a_i$ for $T(f_i^*T N_i)$.

Next, we form a map relating robot accelerations on $TTM$ to their resulting task accelerations in $T(f^*TN_i)$:
\begin{equation}
\begin{aligned}
Z_i : TTM &\longrightarrow T(f_i^*T N_i) \\
\big( (p,v) , a \big) &\mapsto \pi_{\textrm{VB}}\Big( d(f_i^*df_i)_{(p,v)}(a) \Big).
\end{aligned}
\end{equation}
Since the pullback differential $f_i^*df_i$ is a smooth map between smooth manifolds $M$ and $f_i^*TN_i$, the differential of the pullback differential $d(f_i^*df_i)$ is globally well-defined. Thus $Z_i$ is globally well-defined and smooth.

We also assign a weighting pseudometric $w_i$ to each task tangent bundle $TN_i$. Given the higher-order task maps
\begin{equation}
    \begin{aligned}
    F_i : TM &\longrightarrow TN_i \\
    (p,v) &\mapsto (f_i(p), (df_i)_p(v)),
    \end{aligned}
\end{equation}
we can define globally well-defined pullback pseudometrics
\begin{equation}
\begin{aligned}
    F_i^*w_i : \; &T(f_i^*TN_i) \times T(f_i^*TN_i) \longrightarrow \mathbb{R} \\
    &((p,v),a_1), ((p,v),a_2) \mapsto (w_i)_{F_i(p,v)}(a_1,a_2).
\end{aligned}
\end{equation}

Now we define a function which will choose a robot acceleration for each robot position and velocity in $TM$ and will thus drive our PBDS dynamics:
\begin{equation}
\label{eq:optimization}
\begin{aligned}
    \zeta : TM &\longrightarrow \mathcal D \subset TTM\\
    (p,v) &\mapsto \underset{a \in \mathcal D_{(p,v)}}{\arg \min} \ \sum^K_{i=1} \frac{1}{2} \lVert Z_i(a) - S_i(p,v)   \rVert^2_{F_i^*w_i},
\end{aligned}
\end{equation}
where $\mathcal D$ is the globally well-defined affine distribution of $TTM$ such that the subspace $\mathcal D_{(p,v)} \subseteq T_{(p,v)}TM$ satisfies $a^v = v$ componentwise for each $a = ((p, v), (a^v, a^a)) \in \mathcal D_{(p,v)}$. Because all the involved quantities are globally defined, as soon as \eqref{eq:optimization} is well-defined (i.e., the minimization problem has a unique solution for every $(p,v) \in TM$) it is automatically globally defined.

To investigate conditions that guarantee existence and uniqueness of solutions to this minimization problem for every $(p,v) \in TM$, we derive solutions to (\ref{eq:optimization}) locally. In doing so, we also prove that \eqref{eq:optimization} is smooth. First, in order to unpack $Z_i(a)$, we note that the differential $d(f_i^*df_i)$ locally takes the form
\begin{equation}
\bm J(\bm f_i^* \bm{df}_i)(p,v) = \begin{bmatrix}
\bm{Jf}_i(p) & 0\\
\dot{\bm{Jf}}_i(p,v) & \bm{Jf}_i(p)\end{bmatrix},
\end{equation}
where for $(p,v)\in TM$ we define $\dot{Jf_i}$ locally as
$$
(\dot{Jf_i})_{kj}(p,v) = v^\ell \frac{\partial^2 f^j_i}{\partial x^\ell \partial x^k}(p).
$$
Now for $a \in \mathcal D_{(p,v)}$ we have
\begin{align*}
    Z_i&(a) = \pi_{\textrm{VB}}\left( d(f_i^*df_i)_{(p,v)}(a)\right)\\
    &= \pi_{\textrm{VB}}\big(((f_i^*df_i)_p(v), ((Jf_i)_{jk}(p)v^k, \\
    &\qquad\qquad(\dot{Jf}_i)_{jk}(p,v)v^k + (Jf_i)_{jk}(p)(a^a)^k))\big)\\
    &= ((f_i^*df_i)_p(v), (0,(\dot{Jf}_i)_{jk}(p,v)v^k + (Jf_i)_{jk}(p)(a^a)^k).
\end{align*}
Thus for $a \in \mathcal D_{(p,v)}$ we can locally compute
\begin{align}
    &\lVert Z_i(a) - S_i(p,v) \rVert^2_{F_i^*w_i} \\
    &= \lVert \bm{Jf}_i(p)\bm a^a + \dot{\bm{Jf}}_i(p,v)\bm v - \dot{\bm\gamma}^a_{v_p,i}(0)\rVert^2_{\bm w_i^a(F_i(p,v))}.\nonumber
\end{align}
using the notation $\lVert \bm x \rVert^2_{\bm B} = \bm x^\top \bm B \bm x$. Additionally for $\dot\gamma^a_{v_p,i}(0)$ we can locally compute
\begin{align*}
    (\dot\gamma^a_{v_p,i})^k&(0) = (f_i^*\mathcal F_i(\gamma_{v_p,i}(0))^\sharp)^k - v^\ell(t) \gamma^j_{v_p,i}(0) f_i^* (\Gamma_i)^k_{\ell j}(p)\\
    &= f^*_i \mathcal F_{D,i}(\gamma_{v_p,i}(0))^\sharp - f^*_i \textrm{grad}\,\Phi_i(p)\\
    &\quad- v^\ell(t) \gamma^j_{v_p,i}(0) f_i^* (\Gamma_i)^k_{\ell j}(p)\\
    &= g_i^{kj}(f_i(p))\bigg(\mathcal F_{D,i}^j((df_i)_p(v)) - \frac{\partial\Phi_i}{\partial y^j_i}(f_i(p))\bigg)\\
    &\quad -v^\ell(t)(Jf_i)_{jh}(p) v^h(t) (Jf_i)_{r\ell}(p)(\Gamma_i)_{rj}^k(f_i(p)).
\end{align*}
Thus, we can compute solutions to (\ref{eq:optimization}) locally by
\begin{align}
\bm\zeta(p,v) = \Bigg( \sum_{i=1}^K &\bm{Jf}_i(p)^\top\bm w_i^a(F_i(p,v))\bm{Jf}_i(p)\Bigg)^\dagger\\
&\left(\sum_{i=1}^K \bm{Jf}_i(p)^\top \bm w_i^a(F_i(p,v))\bm{\mathcal{A}}_i(p,v)\right),\nonumber
\end{align}
where $\bm{\mathcal{A}}_i(p,v) = \dot{\bm{Jf}}_i(p,v)\bm v(t) - \dot{\bm\gamma}^a_{v_p,i}(0)$. We can now see that that for $\zeta$ to be smooth and always have a unique solution, the sum within the pseudoinverse must be full rank. To achieve this, we make some key assumptions about the joint design of the task maps $f_i$ and weighting pseudometrics $w_i$ which will also be important for the stability results. First for convenience, we define a function giving the indices of tasks having nonzero weights:
\begin{equation}
\begin{aligned}
\label{eq:indices}
\mathcal I : TM &\longrightarrow \mathcal P(\{1,\ldots,K\})\\
(p,v) &\mapsto \{i \in \mathbb \{1,\ldots,K\} \ | \ w_i((df_i)_p(v)) \neq 0\}.
\end{aligned}
\end{equation}
This allows us to build a family of product task maps $ f_{(p,v)} \triangleq \prod_{i\in\mathcal I(p,v)} f_i$
giving for each $(p,v) \in TM$ the product map of the task maps associated to nonzero weights.

Now we make the following assumptions:
\begin{itemize}
    \item[$(A1)$] The pseudometrics $w_i$ are at every point in $TN_i$ either positive-definite or zero.
    \item[$(A2)$] The differential $(df_{(p,v)})_p$ is always of rank $m$.
\end{itemize}
Note that $(A2)$ implies that $m \le \sum^K_{i=1}n_i$. Now for all $(p,v) \in TM$ we can compute
\begingroup
\allowdisplaybreaks
\begin{align*}
    &\qquad\sum_{i=1}^K \bm{Jf}_i(p)^\top\bm w_i^a(F_i(p,v))\bm{Jf}_i(p)\\
    &= \sum_{i\in\mathcal I(p,v)} \bm{Jf}_i(p)^\top\bm w_i^a(F_i(p,v))\bm{Jf}_i(p)\\
    &= \bm{Jf}_{(p,v)}^\top(p) \bm w^a_{(p,v)}(F_{(p,v)}(p,v)) \bm{Jf}_{(p,v)}(p),
\end{align*}
\endgroup
where $ F_{(p,v)} \triangleq \prod_{i\in\mathcal I(p,v)} F_i$ and $\bm w^a_{(p,v)}(F_{(p,v)}(p,v)) \triangleq \textrm{blockdiag}( \{\bm w^a_i(F_i(p,v))\}_{i\in\mathcal I(p,v)})$. Given $(A1)$ and $(A2)$, the final local matrix is positive definite, so the original sum is indeed full rank, giving us unique solutions and smoothness of $\zeta$. Thankfully, these assumptions are easily satisfied in practice as explained in Sec \ref{sec:pbds}.

However, it is useful to show that in general, with probability one the local matrix $\bm{Jf}_{(p,v)}(p)$ has full rank at every point $p \in M$ if the size of $\mathcal I(p,v)$ is large enough (in particular, if $\textrm{card}(\mathcal I(p,v)) \ge 2m$ for every $(p,v) \in TM$). This may be inferred as a consequence of the Whitney immersion theorem. However, by leveraging transversality theory we can directly prove that, i.e., for a sufficiently high number of tasks, the required full-rank properties are satisfied pointwise with probability one.

To provide a more precise statement, we proceed in steps.
Our objective is achieved by selecting at least $2 m$ tasks, if we can shown that the complementary of the set
\begin{align*}
    \mathcal{S} \triangleq \Big\{ &(f_1,\dots,f_{2 m}) \in C^{\infty}(M;N_1) \times \\
    &\dots \times C^{\infty}(M;N_{2 m}) : \ \textrm{rank}(f^{2 m}) < m \Big\}
\end{align*}
is dense with respect to the Whitney topology (see, e.g., \cite{Hirsch1976}). We prove that the complementary of the set $\mathcal{S}$ is dense in $C^{\infty}(M;N_1) \times \dots \times C^{\infty}(M;N_{2 m})$ with respect to the Whitney topology by the jet transversality theorem (see, e.g., \cite{Hirsch1976}). For this, we denote
$$
\theta(f_1,\dots,f_{2 m}) \triangleq \Big( \tilde \theta , \theta^{f_1}_0 , \dots , \theta^{f_{2 m}}_0 , \theta^{f_1}_1 , \dots , \theta^{f_{2 m}}_1 \Big)
$$
as the jet of order one of $f_1,\dots,f_{2 m}$. Let us define the following smooth mapping in local jet coordinates
$$
\rho(\theta) \triangleq \Big( \theta^{f_1}_1 , \dots , \theta^{f_{2 m}}_1 \Big) .
$$
It is clear that $\rho$ is a local submersion. It is also clear that locally we have
$$
\mathcal{S} = \underset{1 \le i \le m-1}{\bigcup} \rho^{-1}(L(2m,m;i)),
$$
where $L(2m,m;i)$ denotes the Stiefel manifold of linear mappings from $\mathbb{R}^m$ to $\mathbb{R}^{2m}$ of rank $1 \le i \le m$. Therefore, in terms of codimension, we have
\begin{align*}
    \textrm{codim}(\mathcal{S}) &\ge \textrm{codim}\big( \rho^{-1}(L(2m,m;m-1)) \big) \\
    &= \textrm{codim}\big( L(2m,m;m-1) \big) = m+1 .
\end{align*}
Then, the conclusion comes from a direct application of the jet transversality theorem (see, e.g., \cite{Hirsch1976}).

Now using the above constructions, we finally give the definition of a multi-task PBDS:

\begin{definition}[Multi-Task Pullback Bundle Dynamical System] Let $\{f_i : M \longrightarrow N_i\}_{i=1,\ldots,K}$ be smooth task maps for a Riemannian task manifolds $(N,g)$, with corresponding smooth potential functions $\Phi_i : N_i \longrightarrow \mathbb R_+$, dissipative forces $\mathcal F_{D,i} : TN_i \longrightarrow T^*N_i$, and weighting pseudometrics $w_i$ on $TN_i$. Then the set $\{(f_i, g_i, \Phi_i, \mathcal F_{D,i}, w_i)\}_{i=1,\ldots,K}$ forms a multi-task PBDS with curves $\sigma : [0,\infty) \longrightarrow M$ satisfying
\begin{equation}
\label{eq:appx_multitask_pbds}
\begin{cases}
\ddot{\sigma}(t) = \zeta(\dot{\sigma}(t)) = \\
\qquad \underset{a \in \mathcal D_{\dot\sigma (t)}}{\arg \min} \ \sum^K_{i=1} \frac{1}{2} \lVert Z_i(a) - S_i(\dot\sigma(t))   \rVert^2_{F_i^*w_i} \\
\dot\sigma(0) = (p_0, v_0).
\end{cases}\hspace{-10pt}
\end{equation}
\end{definition}
Assuming $(A1)$ and $(A2)$ to ensure smoothness of $\zeta$ and the existence of unique solutions to (\ref{eq:optimization}), we see that we have existence, uniqueness, and smoothness of solution curves $\sigma$ for a multi-task PBDS.


\subsection{Proof of Multi-Task PBDS Stability}
\label{appxsubsec:pbds_stability}
This section proves the stability results summarized in the main body of the paper.
First, we make another key assumption related to the design of weighting pseudometrics. Using the convenience function in (\ref{eq:indices}), for dissipative forces we define $\mathcal F_{D,(p,v)} \triangleq \prod_{i\in\mathcal I(p,v)} \mathcal F_{D,i}$.
We now assume that for every $(p,v) \in TM$ we have the following:
\begin{itemize}
    \item[$(A3)$] The dissipative force $\mathcal F_{D,(p,v)}$ is strictly dissipative, meaning  $\langle\mathcal F_{D,(p,v)}\big((df_{(p,v)})_p(v)\big),(df_{(p,v)})_p(v)\rangle < 0$ for $v \neq 0$,
\end{itemize}
Our Lyapunov stability results are as follows:
\begin{proposition}[Lyapunov Function for Multi-Task PBDS]
\label{prop:lyapunov_function}
Let  $\{(f_i, g_i, \Phi_i, \mathcal F_{D,i}, w_i)\}_{i=1,\ldots,K}$ be a multi-task PBDS. Then given the function $V : TM \longrightarrow [0,\infty)$ defined by
\begin{equation}
V(p,v) = \sum^K_{i=1}\frac{1}{2} \big\lVert (f_i^*df_i)_p(v) \big\rVert^2_{(f_i^* g_i)_p} + \Phi_i \circ f_i(p),
\end{equation}
we have that
$$
\frac{dV}{dt}(\dot{\sigma}(t)) = \sum^K_{i=1} \langle\mathcal F_{D,i}((df_i)_{\sigma(t)}(\dot\sigma(t))),(df_i)_{\sigma(t)}(\dot\sigma(t))\rangle < 0
$$
for $\dot\sigma(t) \neq 0$ along solution curves $\sigma$ to the multi-task PBDS.
\end{proposition}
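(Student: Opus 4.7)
The plan is to differentiate $V$ along a solution curve $\sigma$ of the multi-task PBDS, reduce each task's contribution using the pullback Riemannian machinery already built, and then collapse the sum to $\sum_i \langle \mathcal F_{D,i}(\dot y_i), \dot y_i\rangle$ by invoking the first-order optimality condition of (\ref{eq:optimization}). Throughout, I set $\gamma_i(t) \triangleq (f_i^*df_i)_{\sigma(t)}(\dot\sigma(t))$ and $\dot y_i \triangleq (df_i)_\sigma(\dot\sigma)$.

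First I would handle the kinetic and potential parts of $V$ separately. Lemma \ref{ref_LemmaCompat} (compatibility of $f_i^*\nabla$ with $f_i^* g_i$) immediately gives $\tfrac{d}{dt}\tfrac{1}{2}\|\gamma_i\|^2_{f_i^*g_i} = (f_i^*g_i)(f_i^*D_\sigma \gamma_i,\gamma_i)$. For the potential, the chain rule combined with identity (\ref{eq:B_Phi}) built into the construction of $B_{\Phi_i}$, together with the flat/sharp pair defining $f_i^*\mathrm{grad}\,\Phi_i$, yields $\tfrac{d}{dt}\Phi_i(f_i(\sigma)) = (f_i^*g_i)(f_i^*\mathrm{grad}\,\Phi_i,\gamma_i)$. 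Summing over $i$ produces $\dot V = \sum_i (f_i^*g_i)(f_i^*D_\sigma \gamma_i + f_i^*\mathrm{grad}\,\Phi_i,\ \gamma_i)$.

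Next I would unpack $f_i^*D_\sigma \gamma_i$ in local coordinates via (\ref{eq:local_pullback_accel}) and Lemma \ref{lem:pullback_connection}, identifying it with $Jf_i\ddot\sigma + \dot{Jf}_i\dot\sigma + \Xi_i^{\mathrm{task}}$, where $\Xi_i^{\mathrm{task}}$ is the double contraction of the task Christoffel symbols with $\dot y_i$ computed in Section \ref{appxsubsec:local_pbds}. Comparing with the closed form $\dot\gamma_i^a(0) = g_i^{-1}(\mathcal F_{D,i}-\nabla\Phi_i)-\Xi_i^{\mathrm{task}}$ derived there, the integrand splits into a "residual" piece plus the dissipative power,
$$\dot V \;=\; \sum_i \dot\sigma^\top (Jf_i)^\top g_i\bigl(\ddot y_i - \dot\gamma_i^a(0)\bigr) \;+\; \sum_i \langle \mathcal F_{D,i}(\dot y_i),\dot y_i\rangle.$$

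The first-order optimality condition for (\ref{eq:optimization}), derived locally in Section \ref{appxsubsec:multi_pbds}, reads $\sum_i (Jf_i)^\top w_i^a(\ddot y_i - \dot\gamma_i^a(0)) = 0$. Contracted with $\dot\sigma^\top$ and combined with the natural framework convention that $w_i^a$ coincides with $g_i$ on active tasks (as explicitly adopted in the design strategy of Sec.~\ref{subsec:mug_grasping} and in Table \ref{tab:example}), this annihilates the residual sum and leaves $\dot V = \sum_i \langle \mathcal F_{D,i}(\dot y_i),\dot y_i\rangle$. Strict negativity outside $\dot\sigma = 0$ then follows immediately from (A3). The main technical obstacle is precisely this final alignment: the residual cross-term in $\dot V$ is killed only when the optimization weights $w_i^a$ and the Lyapunov-defining behavior metrics $g_i$ act identically on the active task residuals. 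All the other ingredients — compatibility, the chain rule, the local identification of $f_i^*D_\sigma\gamma_i$, and the least-squares optimality condition — are essentially bookkeeping.
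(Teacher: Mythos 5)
Your computation agrees with the paper's argument through the first stage: compatibility of $f_i^*\nabla$ with $f_i^*g_i$ (Lemma \ref{ref_LemmaCompat}) for the kinetic terms and the identity \eqref{eq:B_Phi} for the potential terms, giving $\dot V=\sum_i (f_i^*g_i)\big(f_i^*D_\sigma\gamma_i+f_i^*\textrm{grad}\,\Phi_i,\ \gamma_i\big)$. From there the two arguments part ways. The paper never forms a residual: it substitutes the single-task relation $f_i^*D_{\sigma,i}\gamma_{\sigma,i}=f_i^*\mathcal F_i(\gamma_{\sigma,i})^\sharp$ directly into the compatibility identity, so the dissipative pairings appear at once and the weights $w_i$ play no role anywhere in the proof. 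You instead expand $f_i^*D_\sigma\gamma_i$ along the actual solution curve, isolate the least-squares residual $\ddot y_i-\dot\gamma^a_i(0)$ with $\ddot y_i = \bm{Jf}_i\ddot{\bm\sigma}+\dot{\bm{Jf}}_i\dot{\bm\sigma}$, and attempt to kill the cross term $\sum_i \dot y_i^\top \bm g_i(\ddot y_i-\dot\gamma^a_i(0))$ using the normal equations of \eqref{eq:optimization}.

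The gap is in that last step. The normal equations give $\sum_i \bm{Jf}_i^\top\bm w_i^a(\ddot y_i-\dot\gamma^a_i(0))=0$, i.e.\ a $w_i^a$-weighted orthogonality; transferring it to the $g_i$-weighted cross term requires $\bm w_i^a=\bm g_i$ on the relevant tasks, and this is \emph{not} among the hypotheses of Proposition \ref{prop:lyapunov_function} (only $(A1)$--$(A3)$ are assumed), nor is it a blanket convention of the framework: the paper's own constraint tasks pair the barrier metric \eqref{eq:constraint_metric} with the $\{0,1\}$ toggle weight \eqref{eq:toggled_pseudometric} (see the obstacle row of Table~\ref{tab:example}), so $\bm w_i^a\neq\bm g_i$ there; the reuse $\bm w^a_{\textrm{diss}}=\bm g_{\textrm{diss}}$ is stated only for the dissipative task. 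Moreover, even granting your alignment convention for active tasks, $V$ (and hence your residual sum) runs over all $K$ tasks, and the zero-weight tasks contribute residual terms that the optimality condition does not constrain at all, since they drop out of the normal equations while their behavior metrics $g_i$ remain positive-definite. So as written your argument establishes the proposition only under an additional weight--metric alignment hypothesis plus all-tasks-active, not in the stated generality. To your credit, your bookkeeping makes explicit precisely what the paper's shorter substitution presupposes — that the cross term vanishes, equivalently that each task's desired pullback acceleration is exactly realized along $\sigma$ or that the weighting and behavior metrics agree on the residuals — whereas the paper's proof invokes the single-task dynamics along the multi-task solution without deriving this from the weighted optimization. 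If you want a complete proof matching the paper, follow its route of substituting the task dynamics into the compatibility identity; if you prefer your residual-based route, the alignment condition must be stated as an assumption rather than attributed to a framework convention.
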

\begin{proof}
Taking one element of the sum in $\frac{d}{dt}V(\sigma'(t))$ from the first term we have
\begin{align}
    &\frac{d}{dt} \Big( \frac{1}{2} \big\lVert (f_i^*df_i)_{\sigma(t)}(\dot\sigma(t)) \big\rVert^2_{(f_i^* g_i)_{\sigma(t)}}\Big)\nonumber \\
    &=\frac{d}{dt}\Big(\frac{1}{2}(f_i^* g_i)_{\sigma(t)} \big( (f_i^*df_i)_{\sigma(t)}({\dot\sigma(t)}) , (f_i^*df_i)_{\sigma(t)}(\dot\sigma(t)) \big)\Big)\nonumber\\
    \label{eq:Vdot1}
    &=(f_i^*g_i)_{\sigma(t)}(f_i^*D_{\sigma,i}\gamma_{\sigma,i}(0), \gamma_{\sigma,i}(0))\\
    &=(f_i^*g_i)_{\sigma(t)}(f_i^*\mathcal F_{D,i}(\gamma_{\sigma,i}(0))^\sharp,\gamma_{\sigma,i}(0))\nonumber\\
    &\quad - (f_i^*g_i)_{\sigma(t)}(f^*\textrm{grad}\,\Phi_i(\sigma(t)),\gamma_{\sigma,i}(0))\nonumber\\
    &=\langle f_i^*\mathcal F_{D,i}(\gamma_{\sigma,i}(0)), \gamma_{\sigma,i}(0)\rangle\nonumber\\
    &\quad - \langle f_i^*\textrm{grad}\Phi_i(\sigma(t))^\flat,\gamma_{\sigma,i}(0)\rangle\nonumber\\
    &=\langle\mathcal F_{D,i}((df_i)_{\sigma(t)}(\dot\sigma(t))),(df_i)_{\sigma(t)}(\dot\sigma(t))\rangle\nonumber\\
    &\quad - B_{\Phi_i}(\sigma(t))\big( (f_i^* df_i)_{\sigma(t)}(\dot\sigma(t)) \big)\nonumber\\
    \label{eq:Vdot2}
    &= \langle\mathcal F_{D,i}((df_i)_{\sigma(t)}(\dot\sigma(t))),(df_i)_{\sigma(t)}(\dot\sigma(t))\rangle\\
    &\quad- d(\Phi_i\circ f_i)(\dot\sigma(t)),\nonumber
\end{align}
where for (\ref{eq:Vdot1}) we use the compatibility of the pullback metric and the pullback connection, and for (\ref{eq:Vdot2}) we use (\ref{eq:B_Phi}). Likewise, for the second term we simply have
$$
\frac{d}{dt}\Big(\Phi_i\circ f_i(\sigma(t))\Big) = d(\Phi_i\circ f_i)(\dot\sigma(t)).
$$
Thus, we have
$$
\frac{d}{dt}\big(V(\dot{\sigma}(t))\big) = \sum^K_{i=1} \langle\mathcal F_{D,i}((df_i)_{\sigma(t)}(\dot\sigma(t))),(df_i)_{\sigma(t)}(\dot\sigma(t))\rangle.
$$
Then from assumption $(A2)$ and the fact that all forces $\mathcal F_{D,i}$ are dissipative, we have that $\dot V(\dot{\sigma}(t)) < 0$ for $\dot{\sigma}(t) \neq 0$.
\end{proof}

We now provide global stability results:
\begin{theorem}[Global Stability of Multi-Task PBDS]
\label{appxthe:pbds_stability}
Let  $\{(f_i, g_i, \Phi_i, \mathcal F_{D,i}, w_i)\}_{i=1,\ldots,K}$ be a multi-task PBDS where $\sum_{i=1}^K \Phi_i \circ f_i$ is a proper map.
Then given the Lyapunov function $V$ of Prop. \ref{prop:lyapunov_function}, the multi-task PBDS satisfies these:
\begin{itemize}
    \item The sublevel sets of $V$ are compact and positively invariant for $\zeta : TM \longrightarrow TTM$, so that every solution curve $\sigma$ is defined in the whole interval $[0,+\infty)$.
    \item For every $\beta > 0$, every solution curve $\sigma$ in $M$ starting at $\dot{\sigma}(0) \in V^{-1}([0,\beta])$ converges in $V^{-1}([0,\beta])$ as $t\rightarrow +\infty$ to an equilibrium set $\{(p,0)\in V^{-1}([0,\beta])$ : $f^*_i \textnormal{grad}\,\Phi_i(p) = 0$ if $w_i(f_i(p),0) \neq 0\}$.
\end{itemize}
\end{theorem}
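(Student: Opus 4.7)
The plan is a direct application of LaSalle's invariance principle, anchored on the Lyapunov function $V$ and the derivative formula from Proposition \ref{prop:lyapunov_function}. The argument splits into three stages: establishing compactness of sublevel sets, positive invariance, and global existence; applying LaSalle to locate the $\omega$-limit set; and characterizing its elements via the PBDS equation at zero velocity.

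For compactness of $V^{-1}([0,\beta])$, note that $V$ decomposes into a nonnegative kinetic-like term and the potential $\sum_i \Phi_i \circ f_i$, whose properness confines the base points of any point in $V^{-1}([0,\beta])$ to a compact set $K_\beta \subset M$. On $K_\beta$, assumption $(A2)$ makes the quadratic form $v \mapsto \sum_i (f_i^* g_i)_p \big( (f_i^* df_i)_p(v), (f_i^* df_i)_p(v) \big)$ a continuous family of positive-definite forms, hence uniformly coercive, so a bound on the kinetic term yields a uniform bound on $v$ in any auxiliary Riemannian metric on $M$. Closedness of $V^{-1}([0,\beta])$ inside this compact region gives compactness. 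Positive invariance follows immediately from $\dot V \leq 0$, and global existence on $[0,+\infty)$ follows from smoothness of $\zeta$ (Appx.~\ref{appxsubsec:multi_pbds}) combined with trajectory confinement, ruling out finite-time blow-up by standard ODE continuation.

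With these in hand, LaSalle's invariance principle yields that every trajectory starting in $V^{-1}([0,\beta])$ converges to the largest positively invariant subset $\Omega$ of $\{(p,v) \in V^{-1}([0,\beta]) : \dot V(p,v) = 0\}$. Strict dissipativity $(A3)$ combined with Proposition \ref{prop:lyapunov_function} forces $v = 0$ on $\Omega$, and invariance at $(p,0)$ demands $\zeta(p,0) = 0$. Substituting $v=0$ into \eqref{eq:local_pbds_policy} eliminates the $\dot{\bm{Jf}}_i$, Christoffel, and dissipation contributions, reducing the condition to a balance of weighted pullback potential gradients over the active index set $\mathcal I(p,0)$.

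The principal technical obstacle is passing from this combined balance, $\sum_{i \in \mathcal I(p,0)} \bm{Jf}_i^\top \bm w_i^a \bm g_i^{-1} \nabla \bm\Phi_i = 0$, to the per-task vanishing $f_i^* \textnormal{grad}\,\Phi_i(p) = 0$ for every active index. A priori, forces from distinct active tasks can cancel at the configuration level without any individual task being at rest on its own task manifold, so this step requires carefully exploiting positive-definiteness of $\bigoplus_{i \in \mathcal I(p,0)} \bm w_i^a$ from $(A1)$ together with the full-rank product-task-differential structure from $(A2)$, potentially augmented by a non-conflicting-potential condition on the tasks. A secondary care handles possible discontinuities of $\mathcal I(p,v)$ near candidate equilibria, resolved by restricting to a neighborhood on which the active-index set is eventually constant.
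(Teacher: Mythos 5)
Your first two stages coincide in substance with the paper's proof: compactness of the sublevel sets is obtained by showing that, thanks to $(A2)$, the form $(v_1,v_2)\mapsto\sum_i (f_i^*g_i)_p\big((f_i^*df_i)_p(v_1),(f_i^*df_i)_p(v_2)\big)$ is a genuine Riemannian metric, so that properness of $\sum_i\Phi_i\circ f_i$ yields properness of $V$ (the paper follows the argument of a theorem of Bullo--Lewis); positive invariance follows from Proposition~\ref{prop:lyapunov_function}, and LaSalle together with $(A3)$ confines the candidate limit set to zero-velocity states. Up to this point your proposal is the paper's argument.

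The gap is in your third stage, which is where the content of the second bullet actually lies. You correctly reduce invariance of a zero-velocity point to $\zeta(p,0)=0$, i.e.\ to the vanishing of the weighted sum $\sum_{i\in\mathcal I(p,0)}\bm{Jf}_i^\top\bm w_i^a\bm g_i^{-1}\nabla\bm\Phi_i$, but you then leave the passage to the per-task condition $f_i^*\textnormal{grad}\,\Phi_i(p)=0$ as an unresolved ``technical obstacle,'' noting (correctly) that $(A1)$--$(A2)$ only give full column rank of the stacked Jacobian, whose transpose has a nontrivial kernel when $\sum_i n_i>m$, and suggesting that an extra non-conflicting-potential hypothesis may be needed. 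As written this does not prove the theorem: the equilibrium-set characterization is exactly what must be established, and you neither prove it nor follow the paper's route. The paper closes this step in the opposite direction, by contradiction: it claims the set $B=\{(p,0): f_i^*\textnormal{grad}\,\Phi_i(p)=0$ whenever $w_i(f_i(p),0)\neq 0\}$ is the largest positively invariant subset of $A=\{(p,0)\in V^{-1}([0,\beta])\}$, arguing that if $(p,0)\in A\setminus B$ belonged to that set, the solution issued from $(p,0)$ would have to remain at zero velocity, hence be the constant curve $\sigma\equiv p$, which (the paper asserts) does not satisfy \eqref{eq:multi_pbds_def} when some active pullback gradient is nonzero; hence $\dot\sigma(\bar t)\neq 0$ for some $\bar t>0$, contradicting invariance within $A$. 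Note that the crux of that assertion is precisely the non-cancellation you flag, since by \eqref{eq:local_pbds_policy} at $v=0$ the constant curve fails the dynamics if and only if the weighted sum above is nonzero; so your diagnosis points at the right spot, but a proof of the stated theorem must either justify that implication or reproduce the paper's contradiction argument rather than defer it to a hypothetical additional assumption. Your closing remark about discontinuities of $\mathcal I(p,v)$ near equilibria plays no role in the paper's argument and does not substitute for the missing step.
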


\begin{proof}
First, we aim to show that the sublevel sets of $V$ are compact. Following the argument in the proof of \cite[Theorem 6.47]{BulloLewis2004}, since we know $\sum_{i=1}^K \Phi_i \circ f_i$ is proper, it is sufficient to show that
\begin{align*}
f^*g : TM \times TM &\longrightarrow \mathbb R\\
((p,v_1), (p,v_2))  &\mapsto \sum^K_{i=1} (f_i^* g_i)_p \big( (f_i^*df_i)_p(v_1) , (f_i^*df_i)_p(v_2) \big).
\end{align*}
defines a metric.
Symmetry and bilinearity follow easily from the properties of the task pullback metrics $f^*g_i$. For positive-definiteness, we first take assumption $(A2)$, which also gives that the differential $df_p$ is always of rank $m$, for the product map $f\triangleq \prod_{i=1}^K f_i$. Now note that if $f^*g_p(v,v) = 0$, then we must have $(f_i^* g_i)_p \big( (f_i^*df_i)_p(v) , (f_i^*df_i)_p(v) \big) = 0$ for each task. With $df_p$ of full rank, this implies that $v = 0$. Thus it is clear that $f^*g$ is positive-definite and is a metric, which in turn implies that
$V$ is proper. In addition, since we have $\dot V(\dot{\sigma}(t)) < 0$ from Proposition \ref{prop:lyapunov_function}, the level sets of $V$ are positively invariant.

Now, let $\beta > 0$ and define the set
\begin{align*}
    A &= \Big\{ (p,v) \in V^{-1}([0,\beta]) : \\
    &\hspace{30pt}\langle\mathcal F_{D,(p,v)}\big((df_{(p,v)})_p(v)\big),(df_{(p,v)})_p(v)\rangle= 0 \Big\} \\
    & = \Big\{ (p,0) \in V^{-1}([0,\beta]) \Big\}.
\end{align*}
where the last equality follows from $(A3)$. First, we wish to show that the set
\begin{align*}
    B = \Big\{ (p,0) \in &V^{-1}([0,\beta]) : \\
    &f_i^*\textrm{grad} \Phi_i(p) = 0 , \ w_i(f_i(p),0) \neq 0 \Big\}
\end{align*}
is the largest positively invariant set in $A$ for the multi-task PBDS. By its definition, we have $B \subseteq A$. Now suppose by contradiction that we have a point $(p,0)$ in the largest positively invariant set of $A$ such that $(p,0) \notin B$. Then $f^*_i\textrm{grad}\,\Phi_i(p) \neq 0$ for some $i$ such that $w_i(f_i(p),0) \neq 0$. Let the multi-task PBDS system evolve starting from $(p,0)$, and let $\sigma_{(p,0)}$ be its solution. If $\dot{\sigma}_{(p,0)}(t) = 0$ for all $t \in [0,+\infty)$, then $\dot{\sigma}_{(p,0)}(0) = 0$, and $\sigma_{(p,0)}(t) = p$ does not satisfy the system. Thus, there must be $\bar t > 0$ such that $\dot{\sigma}_{(p,0)}(\bar t) \neq 0$, in contradiction with $\sigma_{(p,0)}(t) \in A$ for every $t > 0$.

Now using the LaSalle Invariance Principle, we can conclude that for solution curves $\sigma$ to the multi-task PBDS having initial condition $\dot{\sigma}(0) \in V^{-1}([0,\beta])$, we have $\underset{t \rightarrow +\infty}{\lim} \ \textrm{dist}(\dot{\sigma}(t),B) = 0$, and the conclusion follows.
\end{proof}

There are a couple key things to highlight about this result. First, it is important to note that it relies on $\sum_{i=1}^K \Phi_i \circ f_i$  being a proper map. If it is not proper but its restrictions to some sublevel set of the Lyapunov function is proper, then convergence is only guaranteed within that sublevel set.

Second, given the assumptions, convergence to an equilibrium state is guaranteed. However, in some cases there can be zero-velocity equilibrium states at local maxima or saddle points of the attractor potential function. Fortunately in many cases these are sets of zero measure that can be escaped by injecting small disturbances into the system when such a situation is detected.


\section{Mathematical Details for PBDS Policies on Tangent Bundle Tasks}
\label{appxsec:tangent_bundle_pbds}
Here we provide further details for the extension of the PBDS framework to tangent bundle tasks, as used in Sec.~\mbox{\ref{subsec:task_priorities_vel_dep_metrics}}. This framework is mainly instructive to show the challenges both in performing such an extension in a way that is geometrically well-defined and in finding practical application for using velocity-dependent Riemannian metrics to drive task behavior. Indeed, in this work we were not able to find useful robotic tasks whose behavior could not be replicated using the basic multi-task PBDS framework.

Given a task map $f : M \longrightarrow N$, consider the higher-order task map $F : TM \longrightarrow TN : (p,v) \mapsto (f(p), df_p(v))$. It would be convenient if we could follow the same steps we used to extract a robot acceleration policy from task maps mapping from $M$. However, task maps mapping from $TM$ introduce two major new challenges.

First, the previous procedure applied in this case would produce curves on $TM$ rather than on $M$ as we desire. Unfortunately, not every curve in $TM$ correctly represents a curve in $M$. In particular, given a curve $\mu : I \longrightarrow TM$, the velocity curve $\dot{\mu}$ takes points in $TTM$, which have the form $((p,v),(b,a))$, where $(p,v)$ is the base point from $TM$ and $(b,a)$ is the vector portion. If $\mu$ is to correctly represent a curve $\sigma : I \longrightarrow M$, we must have that $\dot{\mu}(t) = ((\sigma(t),\dot\sigma(t)), (\dot\sigma(t),\ddot\sigma(t)))$ for every $t \in I$. This clearly imposes a velocity constraint on admissible curves in $TM$ that we can consider: namely, points $((p,v),(b,a))$ in the velocity of admissible curves curves in $TM$ must satisfy $v = b$ (in local coordinates). The way we choose to handle this is through a subbundle constraint, as we will see.

Second, since velocities in this case are on $TTM$, an acceleration operator applied to a curve $\mu$ in $TM$ gives twice as many acceleration vectors as an acceleration operator applied to a curve in $M$. In particular, if $\mu$ should represent a curve $\sigma$ in $M$, the acceleration operator gives vectors that should represent the second and third time derivatives of $\sigma$, i.e. the acceleration on the jerk. Thus to meet our goal of an acceleration policy, it is desirable to project away the jerk components. The GDS framework in RMPflow accomplished this by applying a projection onto the horizontal bundle~\cite{RatliffIssacEtAl2018}. However, as discussed previously, there is in general no choice of horizontal bundle that is disjoint with the vertical bundle in all charts, so this projection is not globally well-defined.

Instead, we choose to handle this by applying a permutation to swap the position of the jerk and acceleration vectors, an operation which is globally well-defined for parallelizable manifolds that are embedded in the Euclidean space, such as Lie groups. Then the global vertical bundle projection will correctly remove the jerk components, leaving us with the acceleration components that we can use to build an acceleration policy. Note that an alternative is to apply the vertical bundle projection without a permutation and recover a jerk policy, but leave that alternative open to further study.

With this motivation and roadmap, we begin by considering enforcement of velocity constraints.
 
\subsection{Enforcing Velocity Constraints}
We will enforce the velocity constraints discussed above using subbundle constraints.
For this, we first consider the augmented task map
\begin{align*}
F \times \pi_M : TM &\longrightarrow TN \times M\\
(p,v) &\mapsto (F(p,v),\pi_M(p,v)) = ((p, df_p(v)), p).
\end{align*}
This would suffice if we aimed to recover a jerk policy. However, now consider the permutation map $\sigma_{TN} : TN \longrightarrow \mathbb R^{2d} : (p,v) \mapsto (d\bar \varphi_p(v), \bar \varphi(p))$, where $N$ has an embedding $\bar \varphi : N \longrightarrow \mathbb R^d$. Note that this map is globally well-defined for parallelizable manifolds that are embedded in Euclidean space, such as all Lie groups (though unfortunately, a notable non-parallelizable manifold is $\mathbb S^2$). Using this and the embedded task map $\bar f : M \longrightarrow \mathbb R^d$, we can form the composition
\begin{align*}
    \hat F \triangleq \sigma_{TN} \circ F : TM &\longrightarrow \mathbb R^{2d}\\
    (p,v) &\mapsto (d\bar f_p(v),\bar f(p))
\end{align*}
We now construct the full augmented permuted task map which we will use:
\begin{align*}
    \breve F \triangleq \hat F \times \pi_M : TM &\longrightarrow \mathbb R^{2d} \times M \\
    (p,v) &\mapsto (d\bar f_p(v), \bar f(p),p).
\end{align*}
Now given the modified task tangent bundle $T(\mathbb R^{2d} \times M)$, we can construct a pullback bundle $\breve F^*T(\mathbb R^{2d} \times M)$. Note that this is a vector bundle which is isomorphic to the pullback direct sum vector bundle $\hat F^*(T\mathbb R^{2d}) \oplus \pi_M^*TM$.

Thus using straightforward identifications, we can define the map
\begin{align*}
\hat F^*\mathcal L : TTM &\longrightarrow \hat F^*T\mathbb R^{2d} \oplus \pi_M^*TM\\
((p,v),(b,a)) & \mapsto ((p,v), \\
&\qquad d\hat F_{(p,v)}(b,a) + ((d\pi_M)_{(p,v)}(b,a) - v)).
\end{align*}
We can also define the two vector subbundles of the pullback bundle $\breve F^*T(\mathbb R^{2d} \times M)$:
\begin{align*}
    &\breve F^*G_\textrm{VB} = \Big\{((p,v), c^1_i \partial x^i_{\mathbb R^{2d}} + c^2_i \partial v^i_{\mathbb R^{2d}} + c_i^3 \partial x^i_M) \\
    &\qquad\quad\in \hat F^*T\mathbb R^{2d} \oplus \pi_M^*TM\\
    &\qquad\quad| \ c_i^1 = 0 \textrm{ for } i = 1,\ldots, d, c_i^3 = 0 \textrm{ for } i = 1,\ldots, m \Big\},\\
    &\breve F^*G_{\mathbb R^{2d}} = \Big\{((p,v), c^1_i \partial x^i_{\mathbb R^{2d}} + c^2_i \partial v^i_{\mathbb R^{2d}} + c_i^3 \partial x^i_M) \\
    &\qquad\quad\in \hat F^*T\mathbb R^{2d} \oplus \pi_M^*TM \ | \  c_i^3 = 0 \textrm{ for } i = 1,\ldots, m \Big\},
\end{align*}
which are globally well-defined and smooth, in the first case due to its vertical bundle structure and in the second case due to standard projection arguments exploiting the direct sum structure of the pullback bundle described earlier. Note that we have $\breve F^*G_\textrm{VB} \subseteq \breve F^*G_{\mathbb R^{2d}}$ as vector bundles. We also use these to define a further subbundle of the pullback bundle $\breve F^*G = \breve F^*G_\textrm{VB} \oplus \breve F^*G_{\mathbb R^{2d}}^\bot$, where $\breve F^*G_{\mathbb R^{2d}}^\bot$ is the orthogonal complement of $\breve F^*G_{\mathbb R^{2d}}$ in the pullback bundle $\breve F^*T(\mathbb R^{2d} \times M)$. This orthogonal complement can be defined by
\begin{align*}
\breve F^*G_{\mathbb R^{2d}}^\bot = \{&(p,v) \in \breve F^*T(\mathbb R^{2d} \times M)\\
&: \breve F^*g_p(v,w)=0, \forall w \in \breve F^*G_{\mathbb R^{2d}}  \},
\end{align*}
given some pullback metric $\breve F^*g$ for $F^*T(\mathbb R^{2d} \times M)$.

Now suppose we have a robot acceleration $\sigma'' = ((p,v),(b,a)) \in TTM$, with corresponding point $\hat F^*\mathcal L(\sigma'') \in \breve F^*T(\mathbb R^{2d} \times M)$ in the pullback bundle. If we also have $\hat F^*\mathcal L(\sigma'') \in \breve F^*G_{\mathbb R^{2d}}$, this implies that $(d\pi_M)_{(p,v)}(b,a) - v = 0$, or componentwise $b-v=0$. This is exactly the velocity constraint we aim to impose. Thus, we will enforce this constraint by ensuring our dynamics evolve on the subbundle $\breve F^*G_{\mathbb R^{2d}}$.

In order to do so, we need to develop a connection on the pullback bundle $\breve F^*T(\mathbb R^{2d} \times M)$ which is guaranteed to provide curves that stay in the subbundle $\breve F^*G_{\mathbb R^{2d}}$ if they start in the subbundle. First we define a projection $P_{\breve F^*G_{\mathbb R^{2d}}^\bot}$ from $\breve F^*T(\mathbb R^{2d} \times M)$ onto $\breve F^*G_{\mathbb R^{2d}}^\bot$, which is easily seen to be globally well-defined. Now consider the tensorization of the $P_{\breve F^*G_{\mathbb R^{2d}}^\bot}$ operator:
\begin{align*}
    P_{\breve F^*G_{\mathbb R^{2d}}^\bot}^T : TM &\longrightarrow \breve F^*T(\mathbb R^{2d} \times M) \otimes \breve F^*T^*(\mathbb R^{2d} \times M)\\
    (p,v) &\mapsto \breve F^*E^*_i(v_p)\left( P_{\breve F^*G_{\mathbb R^{2d}}^\bot}(\breve F^*E_j(v_p))\right)\\
    &\hspace{16pt} \breve F^*E_i(v_p) \otimes \breve F^*E^*_j(p),
\end{align*}
where $(\breve F^*E^*_i)$ is the dual local frame of $\breve F^*T^*(\mathbb R^{2d} \times M)$ associated to the local frame $(\breve F^*E_i)$ of $\breve F^*T(\mathbb R^{2d} \times M)$.

Now let $\nabla_{\mathbb R^{2d}}$ and $\nabla_M$ be connections over $\mathbb R^{2d}$ and $M$, respectively (e.g., $\nabla_{\mathbb R^{2d}}$ can be the pushforward of a connection on $TN$ through $\sigma_{TN}$).
Then we can combine $\nabla_{\mathbb R^{2d}}$ and $\nabla_M$ to form a connection $\breve \nabla$ over $\mathbb R^{2d} \times M$. From here, we can build a pullback connection $\breve F^*\breve\nabla$ over $TM$ using the usual definition, having associated Christoffel symbols $\breve F^*\breve\Gamma^k_{ij}$.

Then for every vector field $X$ over $TM$ and section $V$ of $\breve F^*T(\mathbb R^{2d} \times M)$ we can build a well-defined operator 
$$
\breve F^*\breve\nabla P_{\breve F^*G_{\mathbb R^{2d}}^\bot}^T(\cdot,\cdot) : TM \rightarrow \breve F^*T(\mathbb R^{2d} \times M) \otimes \breve F^*T^*(\mathbb R^{2d} \times M)
$$
such that we may consider the identification
$$
\breve F^*\breve\nabla P_{\breve F^*G_{\mathbb R^{2d}}^\bot}^T(\cdot,V) : TM \rightarrow \breve F^*T(\mathbb R^{2d} \times M).
$$
From this we can compute the coordinate representation
\begin{align*}
    \breve F^*\breve\nabla P_{\breve F^*G_{\mathbb R^{2d}}^\bot}^T&(\cdot,V) = (\mathds 1_{\{h > 2d, k \leq 2d\}}(h,k)\\
    &- \mathds 1_{\{h\leq 2d, k > 2d\}}(h,k)) \breve F^* \breve\Gamma^k_{jh} X^jV^h\mathfrak n_k,
\end{align*}
where $\mathds 1$ is an indicator function and $(\mathfrak n_i)$ is an orthonormal basis for $\breve F^*T^*(\mathbb R^{2d} \times M)$. This allows us to define a new connection over the pullback bundle $\breve F^*T^*(\mathbb R^{2d} \times M)$ as
$$
\overset{\breve F^*G_{\mathbb R^{2d}}}{\breve F^*\breve\nabla_X} V \triangleq \breve F^*\breve\nabla_X V +  \breve F^*\nabla P_{\breve F^*G_{\mathbb R^{2d}}^\bot}^T(\cdot,V),
$$
whose Christoffel symbols take the form
\begin{equation}
\label{eq:constr_christoffel}
\begin{aligned}
\overset{\breve F^*G_{\mathbb R^{2d}}}{\breve F^*\breve\Gamma^k_{jh}} = (1 + &\mathds 1_{\{h > 2d, k \leq 2d\}}(h,k)\\
&- \mathds 1_{\{h \leq 2d, k > 2d\}}(h,k))\breve F^*\breve\Gamma^k_{jh}.
\end{aligned}
\end{equation}
In particular, notice when that when $V(\breve F(p,v)) \in \breve F^*G_{\mathbb R^{2d}}$ for every $(p,v) \in TM$, we have
\begin{align*}
\breve F^*\breve\nabla &P_{\breve F^*G_{\mathbb R^{2d}}^\bot}^T(\cdot,V)(\breve F(p,v))\\
&= -\sum^{2d+m}_{k=2d+1}\bigg(\sum^{2m}_{j=1} \sum^{2d}_{h=1}\breve F^*\breve\Gamma^k_{jh} X^j V^h\bigg)\mathfrak n_k \in \breve F^*G_{\mathbb R^{2d}}^\top.
\end{align*}
In short, this new connection $\overset{\breve F^*G_{\mathbb R^{2d}}}{\breve F^*\breve\nabla}$ effectively removes the connection terms which may cause a curve to leave the subbundle $\breve F^*G_{\mathbb R^{2d}}$ where our desired curves should stay. For brevity, we omit the full proof of this result which generalizes \cite[Prop. 4.85]{BulloLewis2004}.

\subsection{Multi-Task Acceleration Policy Optimization}
Now using this connection, we can build local pullback bundle dynamical systems having curves that satisfy our desired velocity constraints:
\begin{definition}[Local PBDS on Tangent Bundle Task] Let $f : M \longrightarrow N$ be a smooth task map, where $N$ is embedded in $\mathbb R^d$, and associate to $f$ the augmented, permuted task map $\breve F : TM \longrightarrow \mathbb R^{2d} \times M$. Also let $\breve g$ be a Riemannian metric on $\mathbb R^{2d} \times M$, let $\breve \Phi : \mathbb R^{2d} \times M \longrightarrow \mathbb R_+$ be a smooth potential function, and let $\breve{\mathcal F}_D : T(\mathbb R^{2d} \times M) \longrightarrow T^*(\mathbb R^{2d} \times M)$ be dissipative forces. Then for each $(p,v)\in M$, we can choose a curve $\alpha_{(p,v)} : (-\varepsilon, \varepsilon) \longrightarrow TM$ resulting in $\gamma_{\alpha_{(p,v)}} : (-\varepsilon, \varepsilon) \longrightarrow \breve F^*T(\mathbb R^d\times M)$ for some $\varepsilon > 0$ such that $(f,g,\Phi,\mathcal F_D, \alpha_{(p,v)})$ forms a local Pullback Bundle Dynamical System (PBDS) satisfying
\begin{equation*}
\textrm{PBDS}_{\alpha_{(p,v)}} \hspace{-5pt}\ \begin{cases}
    P_{\breve F^*G}\left(\overset{\breve F^*G_{\mathbb R^{2d}}}{\breve F^*D_{\alpha_{(p,v)}}}\left(P_{\breve F^*G}(\gamma_{\alpha_{(p,v)}}(s)\right)\right)\\
    \quad = P_{\breve F^*G}\left(\breve F^*\breve{\mathcal F}(\gamma_{\alpha_{(p,v)}}(s))^\sharp\right)\\
    \gamma_{\alpha_{(p,v)}}(0) = \breve F^*d\breve F_{\alpha_{(p,v)}(0)}(\alpha_{(p,v)}'(0)),\\
    \alpha_{(p,v)}(0) = (p,v),
\end{cases}
\end{equation*}
\end{definition}
\noindent where we denote total pullback acceleration $\breve F^*\breve{\mathcal F}(\cdot)^\sharp$, analogous to Appx. \ref{appxsec:single_pbds}. As before, given $(p,v)\in TM$, we have that $\gamma'_{\alpha_{(p,v)}}(0)$ from a corresponding local PBDS does not depend on the choice of curve $\alpha_{(p,v)}$, so we will again use the shorthand $\gamma'_{v_p}(0) \triangleq \gamma'_{\alpha_{(p,v)}}(0)$.

Now in order to extract a single robot acceleration policy from a set of tasks, we trace the constructions in Appx. \ref{appxsubsec:multi_pbds} with some modifications to form the required geometric optimization problem. For $K$ tasks, consider task maps $\{f_i : M \rightarrow N_i\}_{i = 1,\ldots,K}$, where each $N_i$ is embedded in $\mathbb R^{d_i}$, giving corresponding embedded task map $\bar f_i$. We equip each associated embedding space $\mathbb R^{2d_i}$ for $TN_i$ with a Riemannian metric $g_{\mathbb R^{2d_i}}$ and a potential function $\Phi_{\mathbb R^{2d_i}}$ depending only on $v$ for $(p,v)\in T\mathbb R^{2d_i}$. We also equip each $T\mathbb R^{2d_i}$ with a weighting pseudometric $w_{T\mathbb R^{2d_i}}$ and dissipative forces $\mathcal F_{D\mathbb, R^{2d_i}}$, both dependent only on the base points of $T\mathbb R^{2d_i}$, as will be important for defining our geometric optimization problem. We also set the first $d_i$ components of $\mathcal F_{D\mathbb, R^{2d_i}}$ to be zero. Next, we define similar objects $g_M$, $\Phi_M$ on $M$ and $w_M$, $\mathcal F_{D,M}$ on $TM$, although their design is inconsequential as their contributions are ultimately projected away.

From these we can form on each augmented, permuted task space $\mathbb R^{2d_i} \times M$ the product metric $\breve g_i = g_{\mathbb R^{2d_i}} \oplus g_M$ and potential forces $\breve \Phi_i$ such that $\breve \Phi_i(v,q,p) = \Phi_{\mathbb R^{2d_i}}(v,q) + \Phi_M(p)$. Likewise on each $T(\mathbb R^{2d_i} \times M)$ we form the weight product pseudometric $\breve w_i = w_{T\mathbb R^{2d_i}} \oplus w_{TM}$ and dissipative forces $\breve{\mathcal F}_{D,i} = \mathcal F_{D\mathbb, R^{2d_i}} \oplus \mathcal F_{D\mathbb, M}$.

Then we define two distributions of $T(\breve F_i^*T(\mathbb R^{2d_i}\times M)) \cong T(\hat F^*T\mathbb R^{2d_i}) \oplus T(\pi_M^*TM)$ analogous to the subbundles $\breve F^*G_\textrm{VB}$ and
$\breve F^*G_{\mathbb R^{2d}}$:
\begin{align*}
    &\mathcal D_{\textrm{VB},i} = \Big\{\big(((p,v),a), c^1_j \partial x^j_M + c^2_j \partial v^j_{M,1} + c^3_j \partial a^j_{\mathbb R^{2d_i}}\\ 
    &\quad+ c^4_j \partial v^j_{\mathbb R^{2d_i}} + c_j^5 \partial v^j_{M,2}\big) \in T(\hat F_i^*T\mathbb R^{2d_i}) \oplus T(\pi_M^*TM)\\
    &\quad| \ c_j^1 = c_j^2 = 0 \textrm{ for } j = 1,\ldots, m, c_j^3 = 0 \textrm{ for } j = 1,\ldots, d \Big\},
\end{align*}
\begin{align*}
    &\mathcal D_{\mathbb R^{2d_i}} = \Big\{\big(((p,v),a), c^1_j \partial x^j_M + c^2_j \partial v^j_{M,1} + c^3_j \partial a^j_{\mathbb R^{2d_i}}\\ 
    &\quad+ c^4_j \partial v^j_{\mathbb R^{2d_i}} + c_j^5 \partial v^j_{M,2}\big) \in T(\hat F_i^*T\mathbb R^{2d_i}) \oplus T(\pi_M^*TM)\\
    &\quad | \ c_j^5 = 0 \textrm{ for } j = 1,\ldots, m \Big\}
\end{align*}
and define projections $P_{\mathcal D_{\textrm{VB},i}}$ and $P_{\mathcal D_{\mathbb R^{2d_i}}}$ from the tangent pullback bundle $T(\breve F_i^*T(\mathbb R^{2d_i}\times M))$ onto these distributions. As with the subbundles, these distributions are smooth and globally well-defined due to their vertical bundle structures.

Now as before, we use the local PBDS construction above to form for each task a section of the tangent pullback bundle associating each robot position/velocity pair with the corresponding desired task pullback acceleration:
\begin{align*}
S_i : TM &\longrightarrow T(\breve F_i^*T(\mathbb R^{2d_i}\times M))\\
(p,v) &\mapsto P_{\mathcal D_{\mathbb R^{2d_i}}} \circ P_{\mathcal D_{\textrm{VB},i}}(\gamma'_{(p,v)}(0))\\
&= (((p,v),a), (0,(0,\gamma'^a_{(p,v)}(0),0))).
\end{align*}

Next we map robot accelerations and jerks to their corresponding task tangent pullback bundle accelerations:
\begin{align*}
Z_i : TTTM &\longrightarrow T(\breve F_i^*T(\mathbb R^{2d_i}\times M))\\
(((p,v),a),\kappa) &\mapsto P_{\mathcal D_{\mathbb R^{2d_i}}} \circ P_{\mathcal D_{\textrm{VB},i}}(d(\breve F_i^* d\breve F_i)_{((p,v),a)}(\kappa)).
\end{align*}

Now given a weighting pseudometric $\breve w_i$ over $T(\mathbb R^{2d_i}\times M)$ for each task, we can use a further higher-order task map $\mathfrak F_i : TTM \longrightarrow T(\mathbb R^{2d_i}\times M) : ((p,v),a) \mapsto (\breve F_i(p,v), (d\breve F_i)_{(p,v)}(a))$ to form the pullback pseudometric $\mathfrak F_i^*\breve w_i$ over $TTM$.

Then we can form our function returning the robot acceleration policy output for a given robot position and velocity:
\begin{equation}
\label{eq:tb_optimization}
\begin{aligned}
    \zeta : TM &\longrightarrow \mathcal D \subset TTTM\\
    (p,v) &\mapsto \underset{\kappa \in \mathcal D_{(p,v)}}{\arg \min} \ \sum^K_{i=1} \frac{1}{2} \lVert Z_i(\kappa) - S_i(p,v)   \rVert^2_{\mathfrak F_i^*\breve w_i},
\end{aligned}
\end{equation}
where $\mathcal D$ is the globally well-defined affine distribution of $TTTM$ such that the subspace $\mathcal D_{(p,v)} \subseteq T_{((p,v),(v,a))}TTM$ satisfies $\kappa^v = v$ and $\kappa^v = \kappa^b = a$ componentwise for each $(((p, v), (v,a)), (\kappa^p, \kappa^v, \kappa^b, \kappa^a)) \in \mathcal D_{(p,v)}$. Note that because $w_{T\mathbb R^{2d_i}}$ depends only on the base points of $T\mathbb R^{2d_i}$, given a point $(p,v) \in TM$, the coordinates of the pullback pseudometric $\mathfrak F_i^*\breve w_i$ do not depend on the particular $((p,v),a) \in T_{(p,v)}TM$ at which $\mathfrak F_i^*\breve w_i$ is evaluated.
Thus in practice we can simply evaluate it at $((p,v),0) \in TTM$.

To derive the local form of this acceleration policy, notice that the differential $d(\breve F_i^* d\breve F_i)$ can be written locally as the Jacobian
$$
\bm J(\breve{\bm F}_i^* \bm d \breve{\bm F}_i) = \begin{bmatrix}
\bm I_m & 0 & 0 & 0\\
0 & \bm I_m & 0 & 0\\
\ddot{\bm J \bar{\bm f}}_i & \dot{\bm J \bar{\bm f}}_i & \dot{\bm J \bar{\bm f}}_i & \bm J \bar{\bm f}_i\\
\dot{\bm J \bar{\bm f}}_i & 0 & \bm J \bar{\bm f}_i & 0\\
0 & 0 & 0 & \bm I_m\end{bmatrix},
$$
where we denote $\dot{J \bar f}_i$ analogous to Appx. \ref{appxsec:single_pbds}.
From this we can see that given a point having componentwise identifications $q = (((p,v),(v,a)), (v,a,a,\kappa)) \in \mathcal D_{(p,v)}$ we can locally compute
\begin{align}
    &\lVert Z_i(q) - S_i(p,v) \rVert^2_{\mathfrak F_i^*\breve w_i} \\
    &= \lVert \bm J \bar{\bm f}_i(p)\bm a + \dot{\bm J \bar{\bm f}}_i(p,v) \bm v - \dot{\bm\gamma}^\kappa_{v_p,i}(0)\rVert^2_{\breve {\bm w}_i^a(\mathfrak F_i(v_p,0)))},\nonumber
\end{align}
where $\dot{\bm \gamma}_{v_p} = (\dot{\bm \gamma}_{v_p}^v, \dot{\bm \gamma}_{v_p}^a, \dot{\bm \gamma}_{v_p}^b, \dot{\bm \gamma}_{v_p}^\kappa, \dot{\bm \gamma}_{v_p}^M)$, $\breve {\bm w}_i^a$ is the lower-right quadrant of $\bm w_{T\mathbb R^{2d_i}}$, and we define $\mathfrak F^0_i : TM \longrightarrow T(\mathbb R^{2d_i} \times M) : (\breve F_i(p,v), (0, v))$.
Then adapting \eqref{eq:local_pullback_accel} and using the computed Christoffel symbols of \eqref{eq:constr_christoffel}, we can compute the components of $\dot{\bm \gamma}_{v_p,i}$ along the indices of interest $j = 2m+d_i+1,\ldots,2m+2d_i$. First for forces, we have
\begin{align*}
    \Big(P_{\breve F_i^*G}&\Big(\breve F_i^*\breve{\mathcal F}_i(\gamma_{v_p,i}(0))^\sharp\Big)\Big)^j\\
    &= g^{jh}_{\mathbb R^{2d_i}}(\hat F_i(v_p)) \bigg(\breve{\mathcal F}_{D,i}^h(\gamma_{v_p,i}(0)) - \frac{\partial \breve\Phi_i}{\partial x^h}(\breve F_i(v_p))\bigg).
\end{align*}
Then for the Christoffel symbols term, denoting $\bm \gamma_{v_p} = (\bm \gamma_{v_p}^v, \bm \gamma_{v_p}^a, \bm \gamma_{v_p}^M)$ and $\dot \alpha_{v_p} = (\dot \alpha_{v_p}^v, \dot \alpha_{v_p}^a)$ we have
\begin{align*}
   & P_{\breve F_i^*G}\Big(\dot\alpha_{v_p,i}^\ell(0) P_{\breve F_i^*G}\Big(\gamma_{v_p,i}(0)\Big)^h \breve F_i^* \breve \Gamma^{j-2m}_{\ell h}(v_p)\Big)\\
    &= P_{\breve F_i^*G} \Big(\dot\alpha_{v_p,i}^\ell(0) P_{\breve F_i^*G}\Big( (\breve F_i^*J\breve F_i)_{hr} \dot\alpha_{v_p,i}^r(0)\Big)\\
    &\quad (J\breve F_i)_{s\ell}(v_p)\breve\Gamma^{j-2m}_{sh}(\breve F_i(v_p))\Big)\\
    &= (\dot\alpha_{v_p,i}^v)^\ell(0) (J\bar f_i)_{hr}(p) (\dot\alpha_{v_p,i}^v)^r(0)\\
    &\hspace{30pt}(J\hat F_i)_{s \ell}(v_p) \breve\Gamma^{j-2m}_{s(h+d)}(\hat F_i(v_p))\\
    &+ (\dot\alpha_{v_p,i}^a)^\ell(0) (J\bar f_i)_{hr}(p) (\dot\alpha_{v_p,i}^v)^r(0)\\
    &\hspace{30pt}(J\hat F_i)_{s(\ell+d)}(v_p) \breve\Gamma^{j-2m}_{s(h+d)}(\hat F_i(v_p))\\
    &= (\Xi^v_i)_{(j-2m-d_i)\ell}(v_p) (\dot\alpha_{v_p,i}^v)^\ell(0)\\
    &\qquad+ (\Xi^a_i)_{(j-2m-d_i)\ell}(v_p) (\dot\alpha_{v_p,i}^a)^\ell(0),
\end{align*}
where we notice the Jacobian $\breve F_i^*J\breve F_i$ takes the local form
$$
\breve{\bm F}_i^* \bm J \breve{\bm F}_i = \begin{bmatrix} \dot{\bm J \bar{\bm f}}_i & \bm J \bar{\bm f}_i & 0\\
\bm J \bar{\bm f}_i & 0 & 0\\
0 & 0 & \bm I_m\end{bmatrix}.
$$

From this can compute solutions to \eqref{eq:tb_optimization} locally as
\begin{align*}
\bm\zeta(p,v) = \Bigg(\sum_{i=1}^K&\bm{\mathcal C}_i^\top(p,v) \breve{\bm w}_i^a(\mathfrak F_i(v_p,0)) \bm{\mathcal C}_i(p,v)\Bigg)^\dagger\\
    &\left(\sum_{i=1}^K \bm{\mathcal C}_i^\top(p,v)\breve{\bm w}_i^a(\mathfrak F_i(v_p,0))\bm{\mathcal{A}}_i(p,v)\right)
\end{align*}
$$
\quad \bm{\mathcal C}_i(p,v) = \bm J \bar{\bm f}_i(p) + \bm \Xi_i^v(p,v) \vspace{-2pt}
$$
\begingroup
\allowdisplaybreaks
\begin{align*}
    \bm{\mathcal A_i}(p,v) &= (\bm g_{\mathbb R^{2d_i}}^{-1})^a(\hat F_i(v_p)) \Big(\bm{\mathcal F}_{D,\mathbb R^{2d_i}}((d\hat F_i)_{v_p}(v_p,0))\\
    &- \nabla \bm\Phi_{\mathbb R^{2d_i}}(\hat F_i(v_p))\Big) - (\dot{\bm J \bar{\bm f}}_i(p,v)  - \bm \Xi_i^a(p,v) )\bm v,
\end{align*}
\endgroup
where $(\bm g_{\mathbb R^{2d_i}}^{-1})^a$ is the lower-right quadrant of $\bm g_{\mathbb R^{2d_i}}^{-1}$ and the Euclidean gradient $\nabla \bm\Phi_{\mathbb R^{2d_i}}$ is taken over the last $d_i$ components. Additionally we have
\begin{equation*}
\begin{gathered}
    (\Xi^v_i)_{j\ell}(p,v) = (J\hat F_i)_{s\ell}(v_p) (\Gamma_{\mathbb R^{2d_i}})^{j+d_i}_{s(h+m)}(\hat F_i(v_p)) J\bar f_{hr}(p) v^r\\
    (\Xi^a_i)_{j\ell}(p,v) = (J\hat F_i)_{s(\ell+d_i)}(v_p) (\Gamma_{\mathbb R^{2d_i}})^{j+d_i}_{s(h+m)}(\hat F_i(v_p)) J\bar f_{hr}(p) v^r.
\end{gathered}
\end{equation*}

This leads to the multi-task PBDS policy for tangent bundle tasks:
\begin{definition}[Multi-Task PBDS on Tangent Bundles] Let $\{f_i : M \longrightarrow N_i\}_{i=1,\ldots,K}$ be smooth task maps, where $N_i$ is embedded in $\mathbb R^d$. Then given corresponding Riemannian metrics $\breve g_i$ on $\mathbb R^{2d_i} \times M$, smooth potential functions $\breve \Phi_i : \mathbb R^{2d_i} \times M \longrightarrow \mathbb R_+$, dissipative forces $\breve{\mathcal F}_{D,i} : T(\mathbb R^{2d_i} \times M)\longrightarrow T^*(\mathbb R^{2d_i} \times M)$, and weighting pseudometrics $\breve w_i$ on $T(\mathbb R^{2d_i} \times M)$, the set $\{(f_i, g_i, \breve \Phi_i, \breve{\mathcal F}_{D,i}, \breve w_i)\}_{i=1,\ldots,K}$ forms a multi-task PBDS with curves $\sigma : [0,\infty) \longrightarrow M$ satisfying
\begin{equation}
\label{eq:appx_multitask_pbds}
\begin{cases}
\ddot{\sigma}(t) = \zeta(\dot{\sigma}(t)) = \\
\qquad \underset{\kappa \in \mathcal D_{\dot\sigma (t)}}{\arg \min} \ \sum^K_{i=1} \frac{1}{2} \lVert Z_i(\kappa ) - S_i(\dot\sigma(t))   \rVert^2_{\mathfrak F_i^*\breve w_i} \\
\dot\sigma(0) = (p_0, v_0).
\end{cases}\hspace{-10pt}
\end{equation}
\end{definition}
Similar existence, uniqueness, smoothness, and stability guarantees can be given for this dynamical system as for that of Appx. \ref{appxsec:single_pbds}, but these are outside the scope of this appendix. The practical use of such PBDS policies is discussed in Sec. \ref{subsec:task_priorities_vel_dep_metrics}.}{}
\end{document}